\newtheorem{theorem}{Theorem}
\newtheorem{lemma}{Lemma}
\theoremstyle{definition}
\newcommand{\SE}[1]{\ensuremath{\mathrm{SE}\mathopen{}\left(#1\right)\mathclose{}}}
\newcommand{\SO}[1]{\ensuremath{\mathrm{SO}\mathopen{}\left(#1\right)\mathclose{}}}
\newcommand{\se}[1]{\ensuremath{\mathfrak{se}\mathopen{}\left(#1\right)\mathclose{}}}
\newcommand{\ssm}[1]{\ensuremath{\left[#1\right]_\times}}
\newcommand{\CS}[1]{\ensuremath{\mathbf{R}_{#1}}}
\newcommand{\G}[1]{{\ensuremath{G_{#1}}}}
\newcommand{\defeq}{\doteq}
\newcommand{\ExpI}[2]{\ensuremath{\mathrm{Exp}_{I}^{#1}\mathopen{}\left( #2 \right)\mathclose{}}}
\newcommand{\LogI}[2]{\ensuremath{\mathrm{Log}_{I}^{#1}\mathopen{}\left( #2 \right)\mathclose{}}}
\newcommand{\jr}[2]{\ensuremath{\mathrm{J}_{r}^{#1}\mathopen{}\left( #2 \right)\mathclose{}}}
\newcommand{\jl}[2]{\ensuremath{\mathrm{J}_{l}^{#1}\mathopen{}\left( #2 \right)\mathclose{}}}
\newcommand{\Ad}[2]{\ensuremath{\mathrm{Ad}_{#2}^{#1} }}
\newcommand{\ad}[2]{\ensuremath{\mathrm{ad}_{#2}^{#1} }}
\newcommand{\kt}{ \Delta }
\newcommand{\gauss}[2]{\ensuremath{\mathcal{N}\left({#1},{#2}\right)}}
\newcommand{\xtkkm}{\tilde{x}_{k \mid  k^{-}}}
\newcommand{\xtkmkm}{\tilde{x}_{k^{-} \mid  k^{-}}}
\newcommand{\Pkkm}{P_{k \mid  k^{-}}}
\newcommand{\Pkmkm}{P_{k^{-} \mid  k^{-}}}
\renewcommand*{\p@subsection}{\thesection.}
\begin{document}
\title{The Integrated Probabilistic Data Association Filter Adapted to Lie Groups}

\iftoggle{ONECOLUMN}{
    \author{Mark E. Petersen, Randal W. Beard}
}{
    \author{Mark E. Petersen}
    \affil{Brigham Young University, Provo, UT} 

    \author{Randal W. Beard}
    \member{Fellow, IEEE}
    \affil{Brigham Young University, Provo, UT} 

    \receiveddate{This work has been funded by the Center for Unmanned Aircraft Systems (C-UAS), a National Science Foundation Industry/University Cooperative Research Center (I/UCRC) under NSF award No. IIP-1650547, along with significant contributions from C-UAS industry members. }
    \corresp{{\itshape (Corresponding author: R. Beard, email: beard@byu.edu)}. 
    }
}

\maketitle

\begin{abstract}
The Integrated Probabilistic Data Association Filter (IPDAF) is a target tracking algorithm based on the Probabilistic Data Association Filter that calculates a statistical measure that indicates if an estimated representation of the target properly represents the target or is generated from non-target-originated measurements. The main contribution of this paper is to adapt the IPDAF to constant velocity target models that evolve on connected, unimodular Lie groups, and where the measurements are also defined on a Lie group. We present an example where the methods developed in the paper are applied to the problem of tracking a ground vehicle on the special Euclidean group \SE{2}.   
\end{abstract}

\begin{IEEEkeywords}
Tracking, Estimation, Integrated Probabilistic Data Association (IPDAF), Lie Group, Multiple Target Tracking
\end{IEEEkeywords}

\section{Introduction}
When tracking a single target whose initial position is unknown, all real-world sensors produce non-target-originated measurements (e.g. false measurements or clutter). In the presence of dense clutter, it is a challenge to locate and track the target since it is difficult to distinguish between target-originated measurements and false measurements. The typical approach is to use new measurements to either improve the estimate of an existing track (a track is a representation of the target which consists of at least the state estimate) or initialize new tracks. If the clutter density is high, numerous tracks that do not represent the target are initialized from clutter. 

An example scenario illustrating the challenge of tracking in clutter is depicted in Fig.~\ref{fig:measurement_clutter}, where the black dots represent measurements, the green car represents the true target, and the blue cars represent the tracks currently in memory. The left image represents a time step when measurements are received for the first time. The right image represents a subsequent time step when previous measurements are used to initialize new tracks and additional measurements are received. The challenge is to identify which track represents the target.

\begin{figure}[thb]
\centering
    \includegraphics[width=0.9\linewidth]{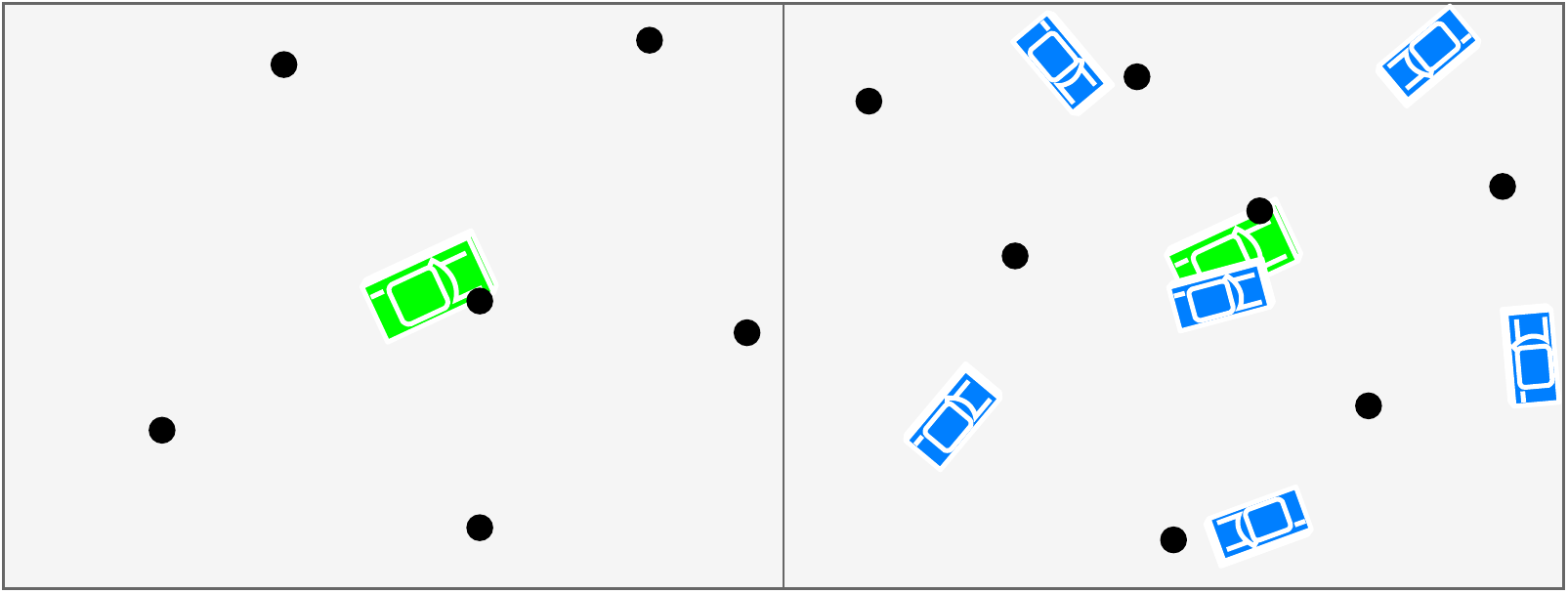}
    \caption{A depiction of the challenge of identifying which track represents the target. The black dots represent measurements, the green car represents the target, and the blue cars represent tracks. The left image represents the first time step that the measurements are received. The right image represents the next time step when previous measurements are used to initialize new tracks and additional measurements are received.}
     \label{fig:measurement_clutter}
\end{figure}

Identifying which track best represents the target requires an additional estimate called the {\em track likelihood}. Tracks with a low track likelihood are rejected and pruned from memory, while tracks with a high track likelihood are confirmed as good tracks. The confirmed track with the highest track likelihood can be used as the best estimate of the target. 

Different approaches to calculating the track likelihood depend on the data association algorithm. 
Data association is the process of assigning new measurements to existing tracks so that the associated measurement can be used to improve the estimates of the tracks. There are two types of data association: hard data association and soft data association. Hard data association assigns at most one new measurement to each track, and soft data association can assign multiple new measurements to each track. 

Tracking algorithms that use hard data association, such as the Nearest Neighbor filter (NNF) \cite{Bar-Shalom1988}, the Global Nearest Neighbor filter (GNNF) \cite{Bhatia2010,Konstantinova2006} and track splitting algorithms, commonly use the likelihood function or the negative log likelihood function (NLLF) to determine if a track should be rejected \cite{Bar-Shalom2011}. The likelihood function measures the joint probability density of all of the measurements associated with a track. If the track's likelihood function falls below some threshold, the track is rejected. This approach does not indicate which track best represents the target, only which tracks should be removed.  However, it is common to suppose that the track with the highest track likelihood best represents the target. 

Another approach to quantify the track likelihood when using hard data association is based on the sequential probability ratio test (SPRT) \cite{Wald1947}. The SPRT uses a sequence of data to either confirm a null hypothesis or reject the alternate hypothesis by analyzing the probability ratio of the two hypotheses. In terms of tracking, the SPRT calculates the joint probability density of the measurements associated to a track for the hypothesis that all the measurements are target originated and for the hypothesis that all the measurements are false. It then takes the ratio of the two probability densities and either rejects the track if the ratio is below a threshold, confirms the track if the ratio is above a threshold or continues to gather more information as new measurements are received until the track can be confirmed or rejected \cite{Li2002a}. The SPRT is used in a variety of tracking algorithms including the Multiple Hypothesis Tracker (MHT) \cite{Blackman2004}.

Many of the common hard data association algorithms are either computationally inexpensive and not robust to clutter like the NNF and the GNNF or very computationally expensive and robust like the MHT and other track splitting methods. On the other hand, soft data associations algorithms, like the probabilistic data association filter (PDAF) \cite{Bar-Shalom1975,Bar-Shalom2009}, offer a good balance between robustness and computational expense. Similar to a track splitting method, when multiple measurements are associated to a track the PDAF makes a copy of the track for every associated measurement, and then the copy is updated with one of the associated measurements resulting in the track splitting for every measurement. The PDAF differs from the track splitting method in that after track splitting, the split tracks are fused together into a single track according to the likelihood of each split track representing the target. 

A nice feature of the PDAF is that it can be used with many different types of dynamic models including dynamic models evolving on Lie groups. Using Lie groups to model rigid body dynamics is a recent approach~\cite{Mangelson2019,Barfoot2014,Barfoot2019,Sola2018,Park1995,LeeT2010}. Lie groups are beneficial since they naturally model the target's pose and motion without suffering from the singularities inherent in other attitude representations like Euler angles, axis-angle, etc. In addition, Lie group models more realistically model the uncertainty that exists with physical systems. Because of these features, Lie groups have shown increased accuracy in estimation \cite{Long2013}. Many physical systems are more naturally represented using Lie groups including satellite attitude dynamics, fixed-wing unmanned aircraft systems, multirotors and other flying objects, cars on road networks, ground robots and walking pedestrians.

The two aforementioned methods of quantifying the track likelihood do not work for soft data association algorithms. For this reason, the PDAF was extended in \cite{Musicki1994} to calculate the track likelihood using a novel approach called the Integrated Probabilistic Data Association Filter (IPDAF).
To our knowledge, the IPDAF has not been adapted to Lie groups.  However, the joint integrated probabilistic data association filter (JIPDAF) was adapted to the Lie group $\SE{2}$ in~\cite{Cesic2016a}. The JIPDAF is the adaptation of the IPDAF to tracking multiple targets. When tracking only a single target, the JIPDAF reduces to the IPDAF. However, the reduction would require understanding the more complicated JIPDAF instead of the simpler IPDAF. Since the main focus of \cite{Cesic2016a} was not the adaptation of the JIPDAF to general Lie groups, the algorithm was not derived and explained in detail.
 
The purpose of this paper is to present the IPDAF adapted to connected, unimodular Lie groups in a tutorial manner by providing enough detail to make it clear how it can be implemented.  We refer to the resulting algorithm as the Lie group integrated probabilistic data association filter (LG-IPDAF).  Our focus in this paper will be on nearly constant velocity models, where the velocity is an element of the Lie algebra.  We will primarily consider measurement models where the measurement is an element of the Lie group, e.g., position and orientation for $\SE{2}$.  In addition, we make several unique contributions by explaining the validation region for Lie groups, showing how the indirect Kalman filter is used with Lie groups and by presenting the system model more generally than~\cite{Cesic2016a} by representing the target's pose as an element of a Lie group and its velocity as an element of the associated Lie algebra instead of representing the velocity using a constrained element of the Lie group.  The modeling approach used in~\cite{Cesic2016a} can be problematic when angular velocities are sufficiently high since the mapping from the Lie algebra to the Lie group is surjective and not bijective.  Our last contribution is to present the material generically so that it can be easily applied to any Lie group, whereas in~\cite{Cesic2016a} the algorithm is specifically presented for $\SE{2}$.

The rest of this paper is outlined as follows. In Section~\ref{sec:Lie-Theory-Review} we review basic concepts of Lie group theory to establish notation and to enable those not familiar with Lie groups to follow the subsequent development. In Section~\ref{sec:lgipdaf_overview} we present an overview of the LG-IPDAF algorithm. In Section~\ref{sec:lgipdaf_system_model} we present the system model used by the LG-IPDAF. In Sections~\ref{sec:lgipdaf} we give detailed derivations of the key elements of the LG-IPDA Filter including the prediction, data association, and update steps as well as the track initialization scheme. In Section~\ref{sec:lgipdaf_examples} we present a simple simulation example and conclude in Section~\ref{sec:lgipdaf_conclusion}.  Detailed proofs of some of the results are given in the Appendices to facilitate the flow of the paper.

\section{Lie Group Theory Review\label{sec:Lie-Theory-Review}}

Lie group theory is an extensive topic that we cannot completely cover in this
paper, but excellent tutorials for robotic applications are given in~\cite{Sola2018,Hertzberg2013}. We also recommend  \cite{Barfoot2019,Stillwell2008,Hall2003,Bullo2005a,LeeJohn2013,Abraham1998} for a more rigorous treatment of Lie group theory.
In this paper, we restrict our discussion to connected, unimodular Lie groups, an assumption that is required by our model of the uncertainty on the Lie group as will be explained later. 
The objective of this section is to provide a brief review and to establish
notation. Even though we illustrate our notation using the Lie group $\SE{2}$, the provided definitions are sufficiently generic to apply to every connected, unimodular Lie group. 

\subsection{Lie Group and Lie Algebra}

Let $G$ denote a Lie group and $\mathfrak{g}$ denote its corresponding
Lie algebra. We identify the Lie algebra with the tangent space of $G$
at the identity element. For example, the special Euclidean group \SE{2} is a matrix Lie group used to model rigid body motion in two dimensions. It is isomorphic to the set 
\begin{equation}
    \SE{2} \cong \left\{ \left.
    \begin{bmatrix}
    R & p \\
    0_{1\times2} & 1
    \end{bmatrix}
    \right| R\in \SO{2},\, p\in \mathbb{R}^2
    \right\},
\end{equation}
with the group operator being matrix multiplication, $R$ denoting a 2-dimensional rotation matrix that represents the attitude of the rigid body, and $p$ denoting the position of the rigid body. In this work we will use $R$ to denote the rotation from the body frame to the inertial frame, and $p$ to denote the position of the body with respect to the inertial frame expressed in the inertial frame. 

The Lie algebra of \SE{2} is denoted \se{2} and is isomorphic to the set
\begin{equation}
    \mathfrak{se}\left(2\right) \cong \left\{ \left.
    \begin{bmatrix}
    \left[\omega\right]_\times & \rho \\
    0_{1\times2} & 0
    \end{bmatrix}
    \right| \omega \in \mathbb{R},\, \rho\in \mathbb{R}^2
    \right\},
\end{equation}
with $\rho$ denoting the translational velocity, $\omega$ denoting the angular velocity, and $\left[\cdot \right]_\times$ being the skew symmetric operator defined as 
\begin{equation}
    \left[ \omega \right]_\times = \begin{bmatrix}
    0 & -\omega \\
    \omega & 0
    \end{bmatrix}.
\end{equation}
In this work we will use $\rho$ to denote the translational velocity of the body with respect to the inertial frame expressed in the body frame, and $\omega$ to denote the rotational velocity of the body with respect to the inertial frame expressed in the body frame. 

The Lie algebra can take on various representations.  However, by taking
advantage of its algebraic structure, elements of the Lie algebra can be expressed as the
linear combination of orthonormal basis elements $\left\{\mathbf{e}_{i}\right\}\subset \mathfrak{g}$. For example, let $\mathbf{v}\in\se{2}$,
then $\mathbf{v}=\sum_{i=1}^{3}a_{i}\mathbf{e}_{i}$ with $a_{i}$
denoting the coefficient associated with $\mathbf{e}_{i}$ and where 
\begin{equation*}
    \mathbf{e}_1 = \begin{bmatrix}
0 & -1 & 0 \\
1& 0 & 0 \\
0 & 0 & 0
\end{bmatrix}  \hspace{0.5em} \mathbf{e}_2 = \begin{bmatrix}
0 & 0 & 1 \\
0& 0 & 0 \\
0 & 0 & 0
\end{bmatrix}  \hspace{0.5em} \mathbf{e}_3 = \begin{bmatrix}
0 & 0 & 0 \\
0& 0 & 1 \\
0 & 0 & 0
\end{bmatrix}.
\end{equation*}
The coefficients form an algebraic space isomorphic to the Lie algebra that we will refer to as the Cartesian algebraic space denoted \CS{G}, where the subscript indicates the corresponding Lie group. Elements in the Cartesian algebraic space can be represented using matrix notation as $v=[a_1,a_2,\ldots]^\top$. We distinguish elements of the Lie algebra from elements of the Cartesian algebraic space using bold font notation for elements of the Lie algebra.

The wedge, $\cdot^\wedge$, and vee, $\cdot^\vee$, functions are linear isomorphisms used to map between the Lie algebra and the Cartesian algebraic space. We denote these functions respectively as
\begin{subequations} 
\begin{align*}
\cdot^{\wedge}:\CS{G}\to\mathfrak{g}; & \quad \left( v\right) \mapsto \mathbf{v}\\
\cdot^{\vee}:\mathfrak{g}\to\CS{G}; & \quad \left( \mathbf{v} \right) \mapsto v.
\end{align*}
\end{subequations}
For \se{2} the vee map is defined as 
\begin{equation}
    \left(\begin{bmatrix}
    \left[\omega\right]_\times & \rho \\
    0_{1\times2} & 0
    \end{bmatrix}\right)^\vee = 
\begin{bmatrix}
\rho \\
\omega
\end{bmatrix}=v,
\end{equation}
and the wedge map is the inverse.

\subsection{Exponential Map}
For Riemannian manifolds, a geodesic is the shortest path between two points. The exponential function on the Lie group $G$, denoted $\text{Exp}^G:\, G\times \CS{G} \to G$, is a geodesic that starts at a point $g_{1}\in G$ and travels in the direction of a tangent vector $v\in\CS{G}$ for unit time to the point $g_{2}\in G$ as stated in Proposition~2.7 of \cite{Carmo1992}. We denote the exponential function and its inverse, $\text{Log}^G$, as
\begin{subequations} 
\begin{align*}
\text{Exp}^G:\, & G\times \CS{G}  \to G; \, \left(g_1,v\right)\mapsto g_2\\
\text{Log}^G:\, & G \times G  \to\CS{G}; \, \left(g_2,g_1\right)\mapsto v.
\end{align*}
\end{subequations} 

When working with Lie groups, it is common to restrict the definition of the exponential map to the identity element of the group; we denote this restriction as 
\begin{subequations} 
\begin{align*}
\text{Exp}^G_I:\, & \CS{G}  \to G; \, \left(v\right)\mapsto g_3\\
\text{Log}^G_I:\, &  G  \to\CS{G}; \, \left(g_3\right)\mapsto v.
\end{align*}
\end{subequations} 
Lie groups allow the restricted exponential map to be moved to another element of the group by applying the left or right group action. Using left-trivialization, in other words the left group action, we define the relation
\begin{subequations}\label{eq:exp_relation}
\begin{align}
    \text{Exp}^G\left(g_1,v \right) &= g_1\bullet \ExpI{G}{v} = g_2 \\
    \text{Log}^G \left(g_2,g_1\right) &= \LogI{G}{ g_1^{-1}\bullet g_2} = v
\end{align}
\end{subequations}
with $\bullet$ denoting the group operator that we omit in the future. A depiction of the exponential map is given in Fig.~\ref{fig:geodesic_def} where the sphere represents the manifold, the plane represents the tangent space that extends to infinity, the arrow in the tangent space represents $v$, and the arrow on the manifold represents the geodesic from $g_1$ to $g_2$ in the direction of $v$. 
\begin{figure}[thb]
\centering
    \includegraphics[width=0.7\linewidth]{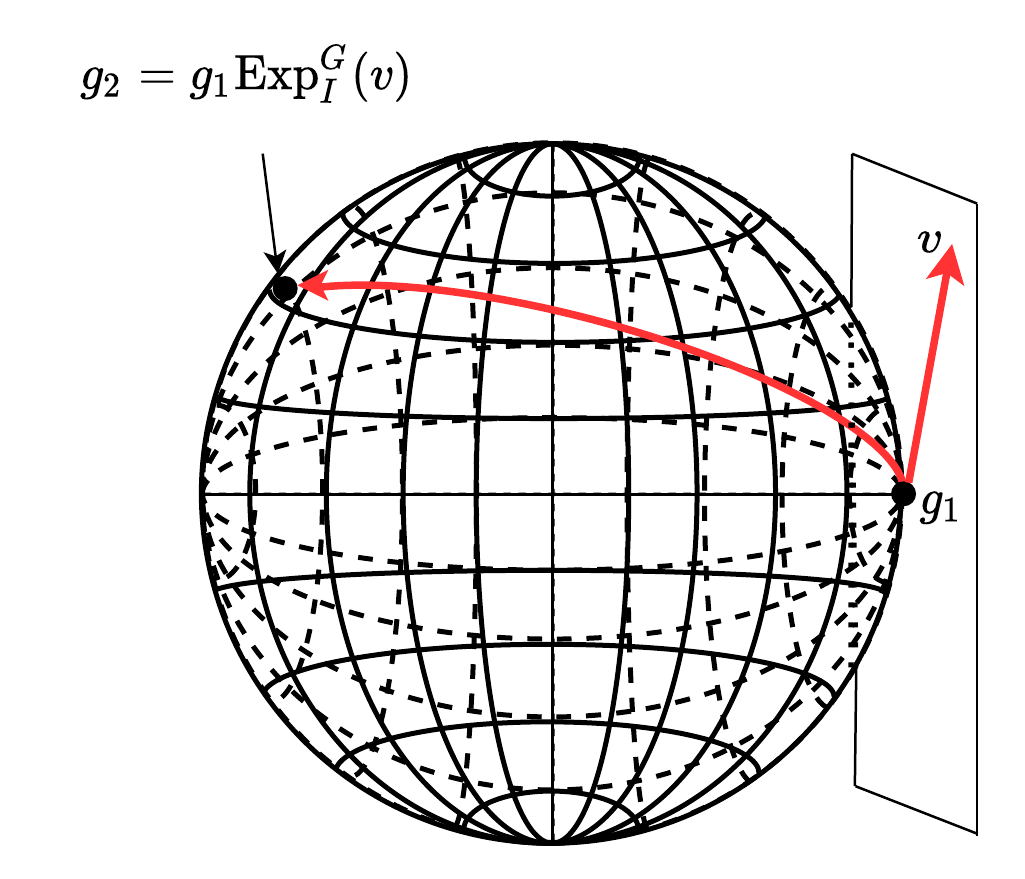}
    \caption{A depiction of a geodesic starting at $g_1$, moving in the direction of $v$ and ending at $g_2$.}
     \label{fig:geodesic_def}
\end{figure}
For matrix Lie groups, the exponential and logarithm maps at the identity element of the group are the matrix exponential and matrix logarithm. These maps operate on the Lie algebra, but their definitions are extended to the Cartesian algebraic space using the wedge and vee operations. Unfortunately, not every Lie group is isomorphic to a matrix Lie group, but fortunately most of Lie groups that appear in robotics and control applications are~\cite{Hall2003}. Thus, the matrix exponential can serve as the exponential map for the majority of the interesting Lie groups.

Let $v = [\rho^\top, \omega]^\top$ and $g = \begin{bmatrix}
R & p \\ 0 & 0
\end{bmatrix}$. For \SE{2} the matrix logarithm and exponential mappings have closed-form expressions given by 
\begin{align*}
\ExpI{\SE{2}}{v} & =
\begin{bmatrix}
\text{expm} \left(\ssm{\omega} \right) & D\left(\omega \right)\rho\\
0_{1 \times 2} & 1
\end{bmatrix}\\
\LogI{\SE{2}}{g} & = 
\begin{bmatrix}
\text{logm}\left(R\right) & D^{-1}\left(\text{logm}\left(R\right) \right)p\\
0_{1 \times 2} & 0
\end{bmatrix},
\end{align*}
where
\begin{align}
\text{expm}\left(\ssm{\omega}\right) & = \begin{bmatrix}
\cos{\omega} & -\sin{\omega} \\
\sin{\omega} & \cos{\omega}
\end{bmatrix} \\
\text{logm}\left(R\right) &= \text{arctan2}\left(R_{21},R_{11} \right) = \theta\\
D \left(\omega\right) & = 
\frac{\sin\omega}{\omega}I+\frac{1-\cos\omega}{\omega}\left[1\right]_{\times} \label{eq:se2_v} \\
D^{-1}\left(\theta\right) & =
\frac{\theta\sin\theta}{2\left(1-\cos\theta\right)}I-\frac{\theta}{2}\left[1\right]_{\times},
\end{align}
and where we note that $D(0)$ and $D^{-1}(0)$ are well defined since $\lim_{\omega\to 0} \sin\omega/\omega = 1$, $\lim_{\omega\to 0} (1-\cos\omega)/\omega = 0$, and $\lim_{\theta\to 0} \theta\sin\theta/(2(1-\cos\theta))=1$.  We also note that when $\omega$ is small, $D(\omega)\approx I + \frac{1}{2}\left[\omega\right]_\times$ and $D^{-1}(\theta)\approx I - \frac{1}{2}\left[\theta\right]_\times$.

\subsection{Adjoint}

The matrix adjoint of $g\in G$ is a representation of $g$ that acts on $\CS{G}$ and is generically defined as 
\[
\text{Ad}^G_{g}:\CS{G}\to\CS{G};\quad\left(v\right)\mapsto \Ad{G}{g}v,
\]
and represents, for example, a change of coordinates from one location on the manifold to another.
A useful property of the adjoint is 
\begin{equation}\label{eq:lie_groups_review_adjoint_relation}
g\ExpI{G}{v}=\ExpI{G}{\Ad{G}{g}v}g.    
\end{equation}
For \SE{2}, the matrix adjoint of $g \in \SE{2}$ is
\begin{equation}
    \text{Ad}^{\SE{2}}_g = \begin{bmatrix}
    R & -\ssm{1} p\\
    0_{1\times 2} &1
    \end{bmatrix}.
\end{equation}

The matrix adjoint of $v_1 \in \CS{G}$ is a representation of $\CS{G}$ that acts on $v_2 \in \CS{G}$ generically defined as 
\begin{equation*}
\text{ad}^G_{v_1}:\CS{G}\to \CS{G}; \quad \left( v_2\right)\mapsto \text{ad}^G_{v_1}v_2.
\end{equation*}
For \se{2}, the matrix representation of the adjoint is
\begin{equation*}
\text{ad}^{\SE{2}}_{v} = \begin{bmatrix}
\ssm{w} & -\ssm{1}\rho \\
0_{1\times 2} & 0
\end{bmatrix}.
\end{equation*}

\subsection{Jacobian of the Matrix Exponential}

When working with Lie groups, we need the differential of the exponential
and logarithm functions. These differentials are commonly called
the right and left Jacobians. The right and left Jacobians and their inverses are defined to map elements of \CS{G} to the general linear group (set of invertible matrices) that acts on \CS{G}. For matrix Lie groups, they are defined as

\begin{align*}
\jr{G}{v} & =\sum_{n=0}^{\infty}\frac{\left(-\ad{G}{v}\right)^{n}}{\left(n+1\right)!}, \hspace{0.5em}  \jl{G}{v}=\sum_{n=0}^{\infty}\frac{\left(\ad{G}{v}\right)^{n}}{\left(n+1\right)!},
\end{align*}

\begin{align*}
\jr{G^{-1}}{v} & =\sum_{n=0}^{\infty}\frac{B_{n}\mathopen{}\left(-\ad{G}{v}\right)^{n}\mathclose{}}{n!},  \hspace{0.5em} \jl{G^{-1}}{v}=\sum_{n=0}^{\infty}\frac{B_{n}\mathopen{}\left(\ad{G}{v}\right)^{n}\mathclose{}}{n!},
\end{align*}
where $B_{n}$ are the Bernoulli numbers, the subscripts $r/l$ indicate the right and left Jacobian, and the superscript indicates the corresponding Lie group. The derivation of the left
and right Jacobians stems from the Baker-Campbell-Hausdorff formula~\cite{Hall2003,Barfoot2019}. The
right Jacobian has the properties that for any $v\in\CS{G}$ and any small $\tilde{v}\in\CS{G}$, 
\begin{subequations}\label{eq:Jr_property}
\begin{align}
\ExpI{G}{v + \tilde{v}} & \approx\ExpI{G}{v}\ExpI{G}{\jr{G}{v}\tilde{v}}\\
\ExpI{G}{v}\ExpI{G}{\tilde{v}} & \approx\ExpI{G}{v+\jr{G^{-1}}{v}\tilde{v}}.
\end{align}
\end{subequations}
Similarly for the left Jacobian,
\begin{align*}
\ExpI{G}{v+\tilde{v}} & \approx\ExpI{G}{\jl{G}{v}\tilde{v}}\ExpI{G}{v}\\
\ExpI{G}{\tilde{v}}\ExpI{G}{v} & \approx\ExpI{G}{v+\jl{G^{-1}}{v}\tilde{v}}.
\end{align*}

For \SE{2}, the Jacobians have closed form solutions, and
the right Jacobian for \SE{2} is 
\begin{equation}
    \jr{\SE{2}}{v} = \begin{bmatrix}
    W_r \left(\omega\right) & D_r \left(\omega\right)\rho \\
    0_{1\times 2} & 1
    \end{bmatrix}
\end{equation}
where 
\begin{align*}
    W_r \left(\omega\right) &=
    \frac{\cos{\omega}-1}{\omega}\left[ 1 \right]_\times + \frac{\sin{\omega}}{\omega}I  \\
    D_r \left(\omega\right) &=
    \frac{1-\cos{\omega}}{\omega^2}\left[ 1 \right]_\times + \frac{\omega - \sin{\omega}}{\omega^2}I,
\end{align*}
and where we again note that $W_r(0)$ and $D_r(0)$ are well defined and that for small $\omega$, 
$W_r(\omega)\approx I - \frac{1}{2}[\omega]_\times$ and $D_r(\omega)\approx \frac{1}{2}[1]_\times$.

\subsection{Direct Product Group}

In this paper, we assume that the target has nearly constant velocity, implying that the target's state is modeled as an element of the Lie group formed from the direct product of $G$ and $\CS{G}$, denoted $x=\left(g,v\right)\in \G{x} \defeq G\times\CS{G}$, where the target's pose is expressed as an element of $G$ and its velocity is expressed as an element of $\CS{G}$. The operator of this Lie group is inherited from its subgroups $G$ and $\CS{G}$. Since $\CS{G}$ is an algebraic space, it has an Abelian group structure with the group operator being addition. Thus, the group operator and inverse of $\G{x}$ are
\begin{align*}
x_{1}\bullet x_{2} &= (g_1, v_1)\bullet(g_2, v_2) = \left(g_{1} g_{2},v_{1}+v_{2}\right)\\
x_{1}^{-1} &= (g_1, v_1)^{-1} = \left(g_{1}^{-1},-v_{1}\right).
\end{align*}

The corresponding Cartesian algebraic space of $\G{x}$
is $\CS{x} \defeq \CS{G}\times\CS{G}$, and its exponential and logarithmic maps are
\begin{subequations}\label{eq:exp_map_direct_product}
\begin{align}
\ExpI{\G{x}}{u} & = \ExpI{\G{x}}{(u^g, u^v)} = \left( \ExpI{G}{u^g}, u^v \right) \\ 
\LogI{\G{x}}{x} & = \LogI{\G{x}}{(g,v)} =\left( \LogI{G}{g}, v \right),
\end{align}
\end{subequations}
where $u=\left(u^g,u^v\right)\in \CS{x}$. The right Jacobian in matrix notation of $\G{x}$ is 
\begin{equation}
    \jr{\G{x}}{u} = \jr{\G{x}}{(u^g, u^v)} = \begin{bmatrix}
    \jr{G}{u^g} & 0 \\
    0 & I
    \end{bmatrix}.
\end{equation}
For more information about the direct product group see \cite{Engo2003}. 


\subsection{Uncertainty}\label{ssec:lg_uncertainty}
In this paper we use Gaussian distributions to model the uncertainty in the sensor, state estimate, and system dynamics. As an example, let $\tilde{x} \in \CS{x}$ be a zero-mean, Gaussian random variable with covariance $P$ such that $\tilde{x}\sim \mathcal{N}\left(\mu=0, P \right)$; i.e.
\begin{equation}\label{eq:prob_x_tilde_def}
    p\left(\tilde{x}\right) = \eta\exp\left(-\frac{1}{2}\tilde{x}^\top P^{-1} \tilde{x}  \right)
\end{equation}
with $\eta$ denoting the normalizing coefficient. Gaussian distributions are defined on vector spaces. Since not every Lie group has a vector space structure (e.g. \SE{2} is not a vector space since scalar multiplication is not defined on the set) Gaussian distributions cannot be defined directly on every Lie group. However, they can be defined on the Cartesian algebraic space at the identity element of the Lie group and extended to the Lie group using the exponential map. Thus, the probability of an element of the Lie group is the probability of the corresponding element of the Cartesian algebraic space. For example, let $x = \ExpI{\G{x}}{\tilde{x}}$. The probability of $x$ is determined from the probability of $\tilde{x}$ as depicted in Fig.~\ref{fig:cgd}.

\begin{figure}[t]
\centering
    \includegraphics[width=0.9\linewidth]{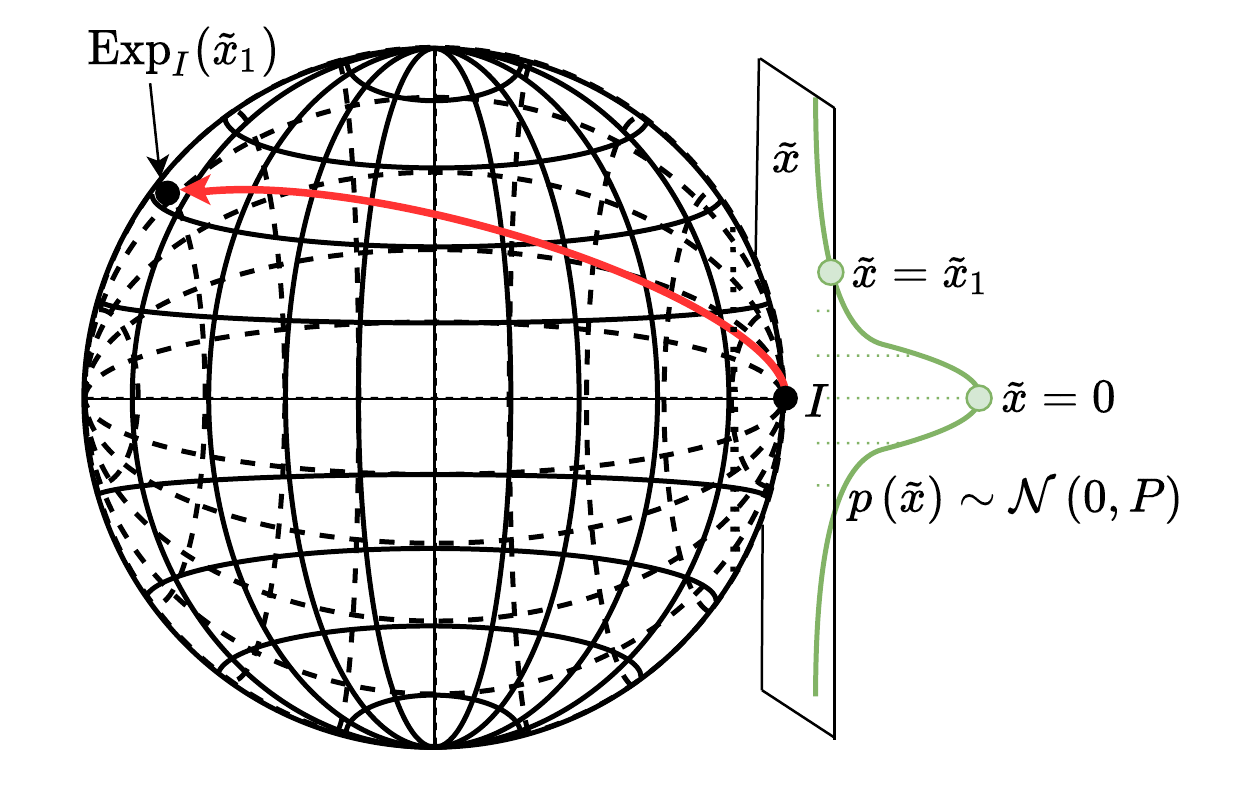}
    \caption{A depiction of the concentrated Gaussian distribution.}
     \label{fig:cgd}
\end{figure}

In order for the uncertainty to be indirectly defined over the entire Lie group, the Lie group is required to be connected (i.e. there exists a geodesic between any two elements). In other words, the exponential map at identity is surjective allowing the Gaussian distribution to extend to every element of the Lie group.

Depending on the connected Lie group, the exponential map may not be injective, which means that possibly an infinite number of elements of the Cartesian algebraic space will map to the same element of the Lie group. In this case, we require the uncertainty to have a concentrated Gaussian density (CGD) \cite{Wang2006}. The CGD is a zero mean Gaussian distribution that is tightly focused around the origin of the Cartesian algebraic space, where by tightly focused we mean that the majority of the probability mass is in a subset $U\subseteq \CS{x}$ centered around the origin where the exponential mapping from $U$ to $\G{x}$ is injective, and that the probability of an element not being in $U$ is negligible. This property allows us to ignore the probability of an element being outside of $U$.

The CGD can be centered at an element other than the identity element of the group using the group action provided that the Lie group is unimodular implying that the determinant of the group action is one. The unimodular property combined with the connected property allows the CGD to be mapped to the Lie group and moved to any element in the group via the group action without changing the probability mass density. For example, the target's state may be represented as
\begin{equation}\label{eq:x_def}
    x = \hat{x}~\ExpI{G_x}{ \tilde{x} },
\end{equation}
where $\hat{x} \in \G{x}$ is the target's state estimate and $\tilde{x}$ is the error state whose probability density function is defined in equation \eqref{eq:prob_x_tilde_def}. The exponential function at identity maps the random variable $\tilde{x}$ to the Lie group, and the state estimate moves the uncertainty to the target's state without changing the mass density of the uncertainty. These properties allow the probability of the state $x$ to be related to the corresponding probability of the error state $\tilde{x}$. Thus, the uncertainty distribution of $x$ is defined by the distribution of $\tilde{x}$ as
\begin{subequations}\label{eq:prob_x_def}
\begin{align} 
    p\left(x\right) & \stackrel{\triangle}{=} \eta\exp\left(-\frac{1}{2}\LogI{\G{x}}{\hat{x}^{-1} x}^\top P^{-1} \LogI{\G{x}}{\hat{x}^{-1} x} \right) \\
        & = \eta\exp\left(-\frac{1}{2}\tilde{x} ^\top P^{-1} \tilde{x}  \right) = p\left( \tilde{x}\right).
\end{align}
\end{subequations}
With a slight abuse of notation, we denote the probability density function (PDF) of the state $x$ as $x\sim \mathcal{N}\left(\hat{x}, P \right)$ where $\hat{x}$ is the state estimate and $P$ is the error covariance of the error state $\tilde{x}$. 

An advantage to representing the uncertainty in the Cartesian algebraic space is having a minimum representation of the uncertainty. For example, an element of the matrix group $\SE{2}$ has three dimensions but is represented by a $3\times 3$ matrix with nine elements. Representing the uncertainty directly on the set of $3\times 3$ matrices with nine elements would require the covariance to be $9\times 9$,  whereas the corresponding Cartesian algebraic space only has three components and the corresponding covariance matrix will be $3\times 3$.

For more information about representing uncertainty on Lie groups see~\cite{Kim2017,Long2013,Bourmaud2014,Bourmaud2013}.

\subsection{First Order Taylor Series and Partial Derivatives}\label{subsection:taylor_series}
Let $g,\hat{g}\in G$, and let $\tilde{g}\in\CS{G}$ be a small perturbation from the origin with the relation $g=\hat{g}\ExpI{G}{\tilde{g}}$. Also let $f:G\to G$ be an arbitrary function. The first order Taylor series of $f$ evaluated at $\hat{g}$ is 
\[
f\left(g\right)\approx f\left(\hat{g}\right)\ExpI{G}{\frac{\partial f}{\partial g}(\hat{g})~\tilde{g}},
\]
where $\frac{\partial f}{\partial g}(\hat{g})$
is the partial derivative of $f$ with respect to $g$ evaluated at $g=\hat{g}$. Using the definition and notation shown in \cite{Sola2018} and the relation defined in \eqref{eq:exp_relation}, the partial derivative
of $f$ with respect to $g$ is defined as 
\begin{align}\label{eq:lie_groups_review_derivative}
\frac{\partial f}{\partial g}  =\lim_{v\to0}\frac{ \LogI{G}{ f\left(g\right)^{-1} f\left(g\ExpI{G}{v}\right) }}{v}, 
\end{align}
where $v\in\CS{G}$. 
In Equations~\eqref{eq:lie_groups_review_derivative}, we abuse notation by denoting the vector of the numerator divided by each element of $v$, as the numerator divided by the vector $v$. Note that the limit in equation \eqref{eq:lie_groups_review_derivative} is taken in the Cartesian algebraic space instead of the Lie group, since the Cartesian algebraic space is a vector space where derivatives are well defined. 
\section{Overview}\label{sec:lgipdaf_overview}

The Lie group integrated probabilistic data association filter (LG-IPDAF) is designed to track a single dynamic target using a single sensor that observes the target. The target is modeled using a constant-velocity, white-noise-driven system model defined on Lie groups. The system model is defined in Section~\ref{sec:lgipdaf_system_model}.

We call the act of the sensor observing and producing measurements from the measurement space at a given instant of time a {\em sensor scan}. The sensor is assumed to detect the target with probability $P_D \in  \left[ 0,1 \right]$ where $P_D = 0$ means that the target is not in the sensor's field of view. It is assumed that a sensor produces at most one target originated measurement, called a true measurement.  We also assume that every sensor scan and all other measurements are non-target-originated measurements, called false measurements or clutter. The false measurements are assumed to be independent identically distributed (iid) with uniform spatial distribution, where the number of false measurements per sensor scan is modeled by the Poisson distribution
\begin{equation} \label{eq:lgipdaf_poisson_distribution}
    \mu_F\left(\phi\right) = \exp{\left(\lambda \mathcal{V}\right)}\frac{\left(\lambda \mathcal{V}\right)^\phi}{\phi !},
\end{equation}
where $\lambda$ is the spatial density of false measurements, $\mathcal{V}$ is the volume of the sensor's field of view, and $\phi$ is the number of false measurements. 

The LG-IPDAF represents the target mathematically using tracks. A track is a tuple $\mathcal{T}=\left(\hat{x}, P, \mathcal{CS}, \epsilon , L \right)$, where $\hat{x}$ is the state estimate of a target, $P$ is the corresponding error covariance, $\mathcal{CS}$ is the set of measurements associated to the track called the consensus set, $ \epsilon$ denotes the probability that the track represents a target and is called the track likelihood, and $L$ is the track label, a unique numerical label used to identify confirmed tracks. 

The LG-IPDAF does not assume that the target's initial state is known and relies on a track initialization and confirmation scheme to locate the target. Tracks are initialized from measurements. Since the sensor produces both target-originated measurements and false measurements, tracks generated by the LG-IPDAF can be created from true measurements and/or false measurements. Therefore, an initialized track can either represent a target or clutter. To identify the track that represents the target, the LG-IPDAF calculates the track likelihood. A track with a high track likelihood is confirmed to be a good representation of the target, and the confirmed track with the highest track likelihood is assumed to represent the target. 

As time progresses, the target moves and the sensor produces measurements. When new measurements are received, the LG-IPADF algorithm performs four steps in order: (1) the prediction step, (2) the data association step, (3) the update step, and (4) the track initialization step.

Let $t_k$ denote the current time and $t_{k^-}$ denote the time at the previous iteration. At the beginning of the prediction step, the track's state estimate, error covariance and track likelihood are at time $t_{k^-}$ and conditioned on the measurements up to time $t_{k^-}$. We denote these values respectively as $\hat{x}_{k^- \mid k^- }$, $P_{k^- \mid k^-}$, and $\epsilon_{k^- \mid k^-}$. During the prediction step the track's state estimate, error covariance and track likelihood are propagated forward in time using the system model. The propagated state estimate, error covariance and track likelihood at time $t_k$, conditioned on the measurement up to time $t_{k^-}$, are denoted $\hat{x}_{k \mid k^- }$, $P_{k \mid k^-}$, and $\epsilon_{k \mid k^-}$ respectively. The prediction step is discussed in detail in Section~\ref{sec:lgipdaf_prediction}.

During the data association step, the new measurements are associated to an existing track or given to a database that contains non-track-associated measurements. The data association algorithm is designed so that a target-originated measurement is associated to the track that represents the target with probability $P_G \in \left[ 0, 1 \right]$ provided that the track exists. This probability is used to construct a volume in the measurement space centered around the track's estimated measurement called the validation region. Any measurement that falls within a track's validation region is associated to the track. We denote the set of measurements associated to a track at time $t_k$ as $Z_k = \left\{ z_{k,j} \right\}_{j=1}^{m_k}$ where $z_{k,j}$ denotes the $j^{th}$ measurement at time $t_k$ associated with the track and $m_k$ denotes the number of measurements associated with the track at time $t_k$. The data association step is discussed in detail in Section~\ref{sec:lgipdaf_validation}.

During the update step, the new measurements associated with a track are used to update the track. For every associated measurement $\left\{ z_{k,j}\right\}_{j=1}^{m_k}$, the track's state estimate $\hat{x}_{k \mid k^-}$ is copied and updated using a distinct associated measurement to produced what we define to be the {\em split state estimates} $\left\{\hat{x}_{k\mid k,j}\right\}^{m_k}_{j=0}$ where $\hat{x}_{k\mid k,j}$ for $j>0$ is the state estimate after being updated with measurement $z_{k,j}$ and $\hat{x}_{k\mid k,0}=\hat{x}_{k \mid k^-}$ is the null hypothesis that none of the measurements originated from the target. Then the split state estimates $\left\{\hat{x}_{k\mid k,j}\right\}^{m_k}_{j=0}$ are fused together according to their probability of being the correct state estimate. In this way, the LG-IPDAF is never totally correct but never completely wrong as in the case of hard data association algorithms. After the update step, the propagated state estimate, error covariance and track likelihood at time $t_k$,  conditioned on the measurements up to time $t_k$, are denoted $\hat{x}_{k \mid k }$, $P_{k \mid k}$, and $\epsilon_{k \mid k}$ respectively. The update step is discussed in detail in Section~\ref{sec:lgipdaf_update}. 

During the track initialization step, non-track-associated measurements are used to initialize new tracks. This step is discussed in detail in Section~\ref{sec:lgipdaf_track_init}.  A depiction of a single iteration of the LG-IPDAF is shown in Fig.~\ref{fig:lgipdaf_pdaf}. 
\begin{figure}[t]
\centering
    \includegraphics[width=0.8\linewidth]{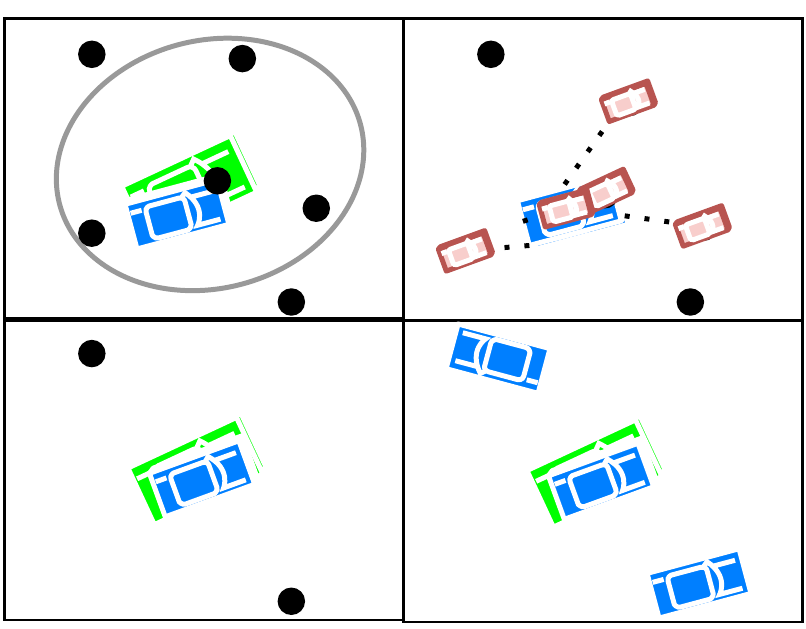}
    \caption{A depiction of a single iteration of the LG-IPDAF.  The large green car represents the target, the smaller blue cars represent the tracks, the black dots represent measurements, the gray ellipse represents the validation region and the small red cars represent the split state estimates. The top left image depicts the prediction and data association steps during which new measurements are received, the track is propagated in time, the validation region is constructed, and four measurements are associated to the track. The top right image shows the first part of the update step where the state estimate is split into five states: one for each associated measurement, and one for the null hypothesis that none of the measurements are target originated. The bottom left shows the rest of the update step where the split state estimates are fused together to form a single state estimate. The bottom right image shows the last step which initializes new tracks from non-track-associated measurements.}
     \label{fig:lgipdaf_pdaf}
\end{figure}

\subsection{Assumptions}
Our derivation of the LG-IPDAF uses the following assumptions:
\begin{enumerate}\label{apm:lgipdaf_assumptions}
    \item There exists a single target that can be observed by a single sensor and modeled by a constant-velocity, white-noise-driven system model defined in equation~\eqref{eq:sm_system_model}. \label{amp:lgipdaf_system_model}
    \item A sensor scan occurs whenever the sensor observes the measurement space. At every sensor scan there are $m_k$ validated measurements denoted $\left\{ z_{k,j}\right\}_{j=1}^{m_k}=Z_k$. \label{amp:lgipdaf_num_meas}
    \item At every scan there is at most one target originated measurement and all others are false (i.e. non target-originated measurements). \label{amp:lgipdaf_single_target_meas}
    \item The sensor detects the target with probability $P_D \in \left[0,1\right]$.\label{amp:lgipdaf_PD}
    \item The target originated measurement falls within the track's validation region with probability $P_G$ provided that the track represents the target. The probability $P_G$ is discussed in Section~\ref{sec:lgipdaf_validation}.\label{amp:lgipdaf_PG}
    \item The false measurements are independently identically distributed (iid) with uniform spatial density $\lambda$.\label{amp:lgipdaf_iid}
    \item The expected number of false measurements per sensor scan is modeled using the density function $\mu_F$. In this paper, $\mu_F$ denotes a Poisson distribution defined in equation~\eqref{eq:lgipdaf_poisson_distribution}.\label{amp:lgipdaf_mu}
    \item The past information about a track is summarized as $\left(\hat{x}_{k^- \mid k^-}, P_{k^- \mid k^-}, \epsilon_{k^- \mid k^-} \right)$ where $\hat{x}_{k^- \mid k^- }$, $P_{k^- \mid k^-}$, and $ \epsilon_{k^- \mid k^-}$ denote the track's state estimate, error covariance and track likelihood at the previous time and conditioned on the previous track-associated measurements. \label{amp:lgipdaf_past_info}
\end{enumerate}

The parameters $P_D$ and $\lambda$ can be statistically calculated from sensor data, and the parameter $P_G$ is selected by the user and discussed in Section~\ref{sec:lgipdaf_validation}.
\section{System Model} \label{sec:lgipdaf_system_model}

In this section we present the system model used in the LG-IPDAF and also derive the affinization of the system model that will be necessary when approximating Gaussian distributions. 

\subsection{System Model}

Let $x_k=\left(g_k,v_k\right)\in \G{x} \defeq G\times\CS{G}$ denote the target's state at time $t_k$, $t_{k^-}$ denote the time at the previous iteration, $t_\kt = t_k - t_{k^-}$ denote the time interval from the previous iteration to the current time, $\CS{x} \defeq \CS{G}\times\CS{G}\ni q_{\kt}=\left(q_\kt^g,q_\kt^v\right)\sim\mathcal{N}\left(0,Q\left( t_\kt\right)\right)$ denote the process noise modeled as a Wiener process \cite{Higham2001}, $z_k\in \G{s}$ denote the measurement at time $t_k$, $\CS{s} \ni r_k\sim\mathcal{N}\left(0,R\right)$ denote the measurement noise, where $\G{s}$ is the Lie group for the measurement space and $\CS{s}$ is the corresponding Cartesian algebraic space. The proposed discrete, time-invariant model is 
\begin{subequations}\label{eq:sm_system_model}
\begin{align}
x_k & =f\left(x_{k^-},q_\kt,t_\kt \right)\\
z_k & =h\left(x_k,r_k\right), \label{eq:sm_system_model_observation_function}
\end{align}
\end{subequations} 
where $f$ is the state transition function defined as 
\begin{subequations}\label{eq:sm_state_transition_function}
\begin{align}
&f\left(x_{k^-},q_\kt,t_\kt \right) \notag\\ & \triangleq (g_{k^-}, v_{k^-}) \ExpI{\G{x}}{ t_\kt v_{k^-}+q_{\kt}^{g}, q_\kt^{v}}\label{eq:state_transition_function_a}\\ 
& = \left(g_{k^-}\ExpI{G}{t_\kt v_{k^-} +q_{\kt}^{g} }, v_{k^-} + q_\kt^{v} \right)\label{eq:sm_state_transition_function_b}.
\end{align}
\end{subequations}
This form is similar to the system model defined in~\cite{Sjoberg2019}.

The definition of the observation function $h$ is dependent on the application and further generalization is not needed.  In~Section~\ref{sec:lgipdaf_examples}, we show how $h$ is defined when the target of interest is a car restricted to a plane, and where its pose (position and orientation) is observed. In that case,  the state is the Lie group $\SE{2}\times \CS{\SE{2}}$ and the measurement Lie group is $\SE{2}$, where the observation function is
\begin{equation}
    h\left(x_k,r_k\right) = g_k \ExpI{\G{\SE{2}}}{r_k},\label{eq:sm_observation_function}
\end{equation}
and where the noise satisfies $r_k\sim\mathcal{N}\left(0,R\right) \in \CS{\SE{2}}$.

\subsection{System Affinization}

A Gaussian random variable applied to an affine function remains Gaussian while a Gaussian random variable applied to a nonaffine function is no longer Gaussian. Depending on the Lie group, the system model can be affine or nonaffine. In the nonaffine case and under the assumption that the signal-to-noise ratio (SNR) is high, the Gaussian structure of the system model uncertainties can be well preserved with little loss of information by affinizing the system model when propagating and updating the uncertainty as is commonly done with the extended Kalman filter.

The system model is approximated as affine by computing its first order Taylor series at the points
\begin{align}
\zeta_{f_\kt} &\defeq \left(x_{k^-}=\hat{x}_{k^-}, q_{\Delta}=0,t_\kt \right) \\
\zeta_{h_k} &\defeq \left(x_{k}=\hat{x}_{k}, r_{k}=0 \right),
\end{align}
according to Subsection~\ref{subsection:taylor_series}.  What we mean by affine, is that the propagation of the uncertainty is affine. This computation requires the calculation of the state transition function Jacobians and the observation function Jacobians. 

The Jacobians for the state transition function can be defined generically, and the Jacobians for the observation function are application dependent.  However, we will define the observation function Jacobians for our running example. 
\begin{lemma} \label{lem:sm_jacobians}
Given the discrete time-invariant model in Equations~\eqref{eq:sm_system_model}, \eqref{eq:sm_state_transition_function} and~\eqref{eq:sm_observation_function} the Jacobians evaluated at $\zeta_{f_\kt}$ and $\zeta_{h_k}$ are
\begin{subequations}\label{eq:sm_system_jacobians}
\begin{align}
F_{\kt} & =\left.\frac{\partial f}{\partial x}\right|_{\zeta_{f_\kt}}  =\begin{bmatrix}\Ad{G}{\ExpI{G}{t_\kt \hat{v}_{k^-}}^{-1}} & \jr{G}{t_\kt \hat{v}_{k^-} }t_\kt\\
0_{n \times n} & I_{n \times n} 
\end{bmatrix} \label{eq:sm_F_x}\\
G_{\kt} & \defeq \left.\frac{\partial f}{\partial q}\right|_{\zeta_{f_{\kt}}}  =\begin{bmatrix}\jr{G}{t_{\kt} \hat{v}_{i}} & 0_{n \times n}\\
0_{n \times n} & I_{n \times n}
\end{bmatrix} \label{eq:sm_G} \\
H_{k} & = \left.\frac{\partial h}{\partial x}\right|_{\zeta_{h_k}}  = \begin{bmatrix}
I_{n \times n} & 0_{n \times n}
\end{bmatrix}\\
V_{k} &= \left.\frac{\partial h}{\partial r}\right|_{\zeta_{h_k}}  = I_{n \times n},
\end{align}
\end{subequations}
where $n$ is the dimension of the target's pose, $0_{n\times n}$ is the $n \times n$ zero matrix, and $I_{n \times n}$ is the $n \times n$ identity matrix. 

Consequently, if $\hat{x}_{k^-} \in \G{x}$ is a state estimate that is close to $x_{k^-}$, and if the error state between $x_{k^-}$ and $\hat{x}_{k^-}$ is defined as $\tilde{x}_{k^-}\defeq\LogI{\G{x}}{\hat{x}_{k^-}^{-1}  x_{k^-} } \in \CS{x}$, then the evolution of the system can be described by the ``affinized system''
\begin{subequations}\label{eq:sm_lienarized_system}
\begin{align}
x_k & \approx f\left(\hat{x}_{k^-},0,t_\kt \right) \ExpI{\G{x}}{F_{\kt}\tilde{x}_{k^-} + G_\kt q_\kt} \\
z_k & \approx h\left(\hat{x}_k,0\right)\ExpI{\G{s}}{ H_k \tilde{x}_k + V_k r_k }.
\end{align}
\end{subequations}
\end{lemma}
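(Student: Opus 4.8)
The plan is to compute each of the four Jacobian blocks directly from the coordinate-free derivative in equation~\eqref{eq:lie_groups_review_derivative}, applied one input at a time while the remaining inputs are frozen at the evaluation point, and then to assemble the affinized system from the first-order Taylor formula of Subsection~\ref{subsection:taylor_series}. Since the state space $\G{x}=G\times\CS{G}$ is a direct product, I would use the split exponential of equation~\eqref{eq:exp_map_direct_product} to decouple every computation into a pose component and a velocity component. The velocity component is immediate: because $\CS{G}$ is Abelian under addition, a perturbation $\tilde{v}$ yields $-\hat{v}_{k^-}+(\hat{v}_{k^-}+\tilde{v})=\tilde{v}$, whose logarithm is the identity map, which at once gives the bottom rows $[0,\,I]$ of both $F_\kt$ and $G_\kt$.

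The main obstacle is the pose block of $F_\kt=\partial f/\partial x$. Setting $a\defeq t_\kt\hat{v}_{k^-}$ and perturbing the state as $\hat{x}_{k^-}\ExpI{\G{x}}{(\tilde{g},\tilde{v})}=(\hat{g}_{k^-}\ExpI{G}{\tilde{g}},\,\hat{v}_{k^-}+\tilde{v})$, the pose output at $q_\kt=0$ is $\hat{g}_{k^-}\ExpI{G}{\tilde{g}}\ExpI{G}{a+t_\kt\tilde{v}}$, so that $f(\hat{x}_{k^-},0)^{-1}$ times this collapses to $\ExpI{G}{-a}\ExpI{G}{\tilde{g}}\ExpI{G}{a+t_\kt\tilde{v}}$. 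Three maneuvers reduce this to one exponential linear in the perturbations. First I would invoke the adjoint identity~\eqref{eq:lie_groups_review_adjoint_relation} to commute $\ExpI{G}{\tilde{g}}$ leftward past $\ExpI{G}{-a}$, giving the factor $\Ad{G}{\ExpI{G}{-a}}\tilde{g}=\Ad{G}{\ExpI{G}{a}^{-1}}\tilde{g}$. Second I would apply the right-Jacobian property~\eqref{eq:Jr_property} as $\ExpI{G}{a+t_\kt\tilde{v}}\approx\ExpI{G}{a}\ExpI{G}{\jr{G}{a}t_\kt\tilde{v}}$, so the intervening $\ExpI{G}{-a}\ExpI{G}{a}$ cancels. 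Third, combining the two surviving small exponentials to first order (Baker-Campbell-Hausdorff truncated at the linear term) adds their arguments, so $\LogI{G}{\,\cdot\,}\approx\Ad{G}{\ExpI{G}{a}^{-1}}\tilde{g}+\jr{G}{a}t_\kt\tilde{v}$. Differentiating in $\tilde{g}$ and $\tilde{v}$ and passing to the limit recovers exactly the top row $[\,\Ad{G}{\ExpI{G}{t_\kt\hat{v}_{k^-}}^{-1}},\ \jr{G}{t_\kt\hat{v}_{k^-}}t_\kt\,]$ of equation~\eqref{eq:sm_F_x}. I expect the bookkeeping of this chain to be the delicate point, since the adjoint move and the right-Jacobian move must be composed in the correct order and the linear truncation must be justified by the high-SNR / concentrated-Gaussian hypothesis that the perturbations are small.

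The remaining blocks follow the same template but terminate quickly. For $G_\kt=\partial f/\partial q$ I would perturb $q_\kt=(\delta^g,\delta^v)$ additively from zero (the noise lives in the vector space $\CS{x}$, while the output is still measured through the logarithm on $\G{x}$); the pose component reduces to $\ExpI{G}{a}^{-1}\ExpI{G}{a+\delta^g}\approx\ExpI{G}{\jr{G}{a}\delta^g}$ by the same right-Jacobian step, and the velocity component again returns $\delta^v$, yielding the block-diagonal form~\eqref{eq:sm_G}. For the observation Jacobians I would note from equation~\eqref{eq:sm_observation_function} that $h$ depends on $g_k$ but not on $v_k$: perturbing $x_k$ gives $h(\hat{x}_k,0)^{-1}h(\hat{x}_k\ExpI{\G{x}}{(\tilde{g},\tilde{v})},0)=\hat{g}_k^{-1}\hat{g}_k\ExpI{G}{\tilde{g}}=\ExpI{G}{\tilde{g}}$, whose logarithm is $\tilde{g}$, so $H_k=[\,I,\,0\,]$, while perturbing $r_k$ leaves $\ExpI{\G{s}}{\delta r}$ with logarithm $\delta r$, so $V_k=I$.

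Finally, the affinized system~\eqref{eq:sm_lienarized_system} is assembled by applying the first-order Taylor relation $f(g)\approx f(\hat{g})\ExpI{G}{(\partial f/\partial g)(\hat{g})\,\tilde{g}}$ of Subsection~\ref{subsection:taylor_series} simultaneously in the two independent inputs $x_{k^-}$ and $q_\kt$. Because that relation is linear in each perturbation, the exponents superpose into the single argument $F_\kt\tilde{x}_{k^-}+G_\kt q_\kt$, producing the state equation; the identical argument applied to $h$ in its inputs $x_k$ and $r_k$ produces the exponent $H_k\tilde{x}_k+V_k r_k$ and hence the measurement equation, completing the lemma.
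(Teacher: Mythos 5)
Your proposal is correct and follows essentially the same route as the paper's proof: both reduce the pose block of $F_{\kt}$ to the product $\ExpI{G}{\Ad{G}{\ExpI{G}{t_\kt \hat{v}_{k^-}}^{-1}}\tilde{g}}\,\ExpI{G}{\jr{G}{t_\kt \hat{v}_{k^-}}t_\kt\tilde{v}}$ using the adjoint identity~\eqref{eq:lie_groups_review_adjoint_relation} and the right-Jacobian property~\eqref{eq:Jr_property}, then read off the derivative (the paper takes the $\tau^g$ and $\tau^v$ partials separately rather than invoking a first-order BCH truncation, but since the cross term is second order in the perturbations these are equivalent in the limit). Your treatment of $G_\kt$, $H_k$, $V_k$, which the paper leaves to the reader, is also correct; the only minor quibble is that the Jacobian computation itself needs no high-SNR hypothesis --- the limit $\tau\to 0$ kills the higher-order terms exactly, and the smallness assumption is only needed to justify the affinized system~\eqref{eq:sm_lienarized_system}.
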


\begin{proof}
We will prove the expression for $F_\kt$, and leave the derivation of the other Jacobians, which are similar, to the reader. 
Let $\tau = \left( \tau^g, \tau^v \right) \in \CS{x}$ denote the perturbation of the state. 
Then, using the definition of the derivative in equation~\eqref{eq:lie_groups_review_derivative}
{\small
\begin{equation*}
\frac{\partial f}{\partial x_{k^-}} =\underset{\tau \to 0}{\lim} \frac{\LogI{\G{x}}{f\left(x_{k^-},0,t_{\kt} \right)^{-1} f\left( x_{k^-}\ExpI{\G{x}}{\tau},0,t_{\kt} \right)}}{\tau}.
\end{equation*}
}
Substituting the definition of the state transition function in equation~\eqref{eq:sm_state_transition_function} yields
{\small
\begin{align*}
\frac{\partial f}{\partial x_{k^-}} =&\underset{\tau\to0}{\lim}\frac{1}{\tau} 
\text{Log}^{\G{x}}_I\left[ \left( g_{k^-}\ExpI{G}{t_{\kt}v_{k^-}} , v_{k^-}\right)^{-1} \right. \\
& \left. \left( g_{k^-} \ExpI{G}{\tau^g} \ExpI{G}{t_{\kt}v_{k^-} + t_{\kt} \tau ^v} , v_{k^-} + \tau^v \right) \right] \\
= &\underset{\tau\to0}{\lim}\frac{1}{\tau}\text{Log}^{\G{x}}_I\left[ \left( \ExpI{G}{t_{\kt} v_{k^-}}^{-1} g^{-1}_{k^-} g_{k^-}\ExpI{G}{\tau^g} \right. \right. \\
& \left. \left. \ExpI{G}{t_{\kt} v_{k^-} + t_{\kt}\tau^v}, v_{k^-} + \tau^v - v_{k^-}  \right) \right] \\
= & \underset{\tau\to0}{\lim}\frac{1}{\tau} \text{Log}^{\G{x}}_I\left[ \left( \ExpI{G}{t_{\kt} v_{k^-}}^{-1} \ExpI{G}{\tau^g} \right. \right. \\
& \left. \left. \ExpI{G}{t_{\kt} v_{k^-} + t_{\kt}\tau^v}, \tau^v \right) \right] 
\end{align*}
}

Using the property of the adjoint in equation~\eqref{eq:lie_groups_review_adjoint_relation} and the property of the right Jacobian in equation \eqref{eq:Jr_property} gives
{\small
\begin{align*}
\frac{\partial f}{\partial x_{k^-}} =&  \underset{\tau\to0}{\lim} \frac{1}{\tau} \text{Log}^{\G{x}}_I\left[\left(\ExpI{G}{t_{\kt} v_{k^-}}^{-1} \ExpI{G}{\tau^g} \right. \right. \\
& \left. \left. \ExpI{G}{t_{\kt} v_{k^-}} \ExpI{G}{\jr{G}{t_{\kt}v_{k^-}} t_{\kt} \tau^v},\tau^v \right) \right] \\
=&  \underset{\tau\to0}{\lim} \frac{1}{\tau} \text{Log}^{\G{x}}_I\left[\left(\ExpI{G}{\Ad{G}{\ExpI{G}{t_{\kt}v_{k^-}}^{-1}} \tau^g} \right. \right. \\
& \left. \left. \ExpI{G}{\jr{G}{t_{\kt}v_{k^-}} t_{\kt} \tau^v},\tau^v \right) \right] \\
     =& \underset{\tau\to0}{\lim}\frac{1}{\tau}\left( \text{Log}^{G}_I \left( \ExpI{G}{\Ad{G}{\ExpI{G}{t_{\kt}v_{k^-}}^{-1}} \tau^g} \right. \right.\\
     & \left. \left. \ExpI{G}{\jr{G}{t_{\kt}v_{k^-}} t_{\kt} \tau^v} \right) , \tau^v\right) 
\end{align*}
}
The portion of the derivative corresponding to $\tau^g$ is computed as 
{\small
\begin{align}
\frac{\partial f}{\partial g_{k^-}} & =\underset{\tau^g \to0}{\lim}\frac{\left( \LogI{G}{ \ExpI{G}{\Ad{G}{\ExpI{G}{t_{\kt}v_{k^-}}^{-1}} \tau^g}  } ,0 \right)}{\tau^g}\notag \\
 &=\underset{\tau^g \to0}{\lim}\frac{\left(\Ad{G}{\ExpI{G}{t_{\kt}v_{k^-}}^{-1}} \tau^g, 0\right)}{\tau^g}
 =  \begin{bmatrix}
        \Ad{G}{\ExpI{G}{t_{\kt}v_{k^-}}^{-1}} \\
        0_{n \times n} 
    \end{bmatrix}, \label{eq:sm_F_g}
\end{align}
}

\par\noindent and the portion of the derivative corresponding to $\tau^v$ is computed as
\begin{align}
  \frac{\partial f}{\partial v_{k^-}} & =\underset{\tau^v \to0}{\lim}\frac{\left(\LogI{G}{\ExpI{G}{\jr{G}{t_{\kt}v_{k^-}} t_{\kt} \tau^v}},\tau^v \right)}{\tau^v} \notag \\
 &=\underset{\tau^v \to0}{\lim}\frac{\left(\jr{G}{t_{\kt}v_{k^-}} t_{\kt} \tau^v,\tau^v \right)}{\tau^v}
 =  \begin{bmatrix}
        \jr{G}{t_{\kt}v_{k^-}} t_{\kt} \\ 
        I_{n \times n}
    \end{bmatrix}. \label{eq:sm_F_v} 
\end{align}
Combining equations \eqref{eq:sm_F_g} and \eqref{eq:sm_F_v} yields \eqref{eq:sm_F_x}.
\end{proof}

\section{The Lie Group Integrated Probabilistic Data Associate Filter}
\label{sec:lgipdaf} \label{sec:lgipdaf_main}
This section gives detailed derivation of the key elements of the LG-IPDA filter.  In particular, the prediction step in Section~\ref{sec:lgipdaf_prediction}, the data association step in Section~\ref{sec:lgipdaf_validation}, the measurement update step in Section~\ref{sec:lgipdaf_update}, and new track initialization in Section~\ref{sec:lgipdaf_track_init}.

\subsection{Prediction Step}\label{sec:lgipdaf_prediction}

The prediction step of the LG-IPADF is similar to the prediction step of the indirect Kalman filter with the addition of propagating the track likelihood \cite{Sola2017,Sjoberg2019}. We begin by introducing additional notation.
Let $Z_{0:i}$ denote the set of track associated measurements from the initial time to time $t_i$. Let $\epsilon$ denote a Bernoulli random variable that represents the probability that the track represents the target. The probability that the track represents a target conditioned on $Z_{0:k^-}$ is the track likelihood and is denoted at the previous time step as $ \epsilon_{k^- \mid k^-} \defeq p\left(\epsilon_{k^-}\mid Z_{0:k^-}\right)$. Conversely, we denote the probability that the track does not represent a target conditioned on $Z_{0:k^-}$ as $ p\left(\epsilon_{k^-}=F\mid Z_{0:k^-}\right)=1-p\left(\epsilon_{k^-}\mid Z_{0:k^-}\right)$.

Define the probability of the track's previous state conditioned on the previous track-associated measurements and it representing a target as
\begin{align}\label{eq:lgipdaf_state_previous}
    p\left(x_{k^-} \mid \epsilon_{k^-}, Z_{0:k^-} \right) &\defeq \eta \exp \left( -\frac{1}{2} \tilde{x}_{k^- \mid k^-}^\top  P_{k^-\mid k^-}^{-1}  \tilde{x}_{k^- \mid k^-} \right),
\end{align}
where $\tilde{x}_{k- \mid k^-} = \LogI{\G{x}}{\hat{x}^{-1}_{k^- \mid k^-} x_{k^-} }$ is the error state, $\hat{x}_{k^- \mid k^-}$ is the state estimate and $P_{k^- \mid k^-}$ is the error covariance at time $t_{k^-}$ conditioned on the measurements $Z_{0:k^-}$. The terms $\hat{x}_{k^- \mid k^-}$, $P_{k^- \mid k^-}$, and $\epsilon_{k^- \mid k^-}$ are the track's state estimate, error covariance and track likelihood at the beginning of the propagation step.

To derive the prediction step, we need to construct the Gaussian approximation of the probability of the track's current state conditioned on the track's previous state, the track representing the target, and the previous measurements denoted $p\left(x_{k}\mid x_{k^{-}}, \epsilon_{k^-} , Z_{0:k^-} \right)$.

\begin{lemma}\label{lem:lgipdaf_prob_prop_state}
Given that Assumptions~\ref{amp:lgipdaf_system_model} and \ref{amp:lgipdaf_past_info} hold, then the Gaussian approximation of $p\left(x_{k}\mid x_{k^{-}}, \epsilon_{k^-} , Z_{0:k^-} \right)$ is 
{\small 
\begin{multline}\label{eq:lgipdaf_approx_state_propagation}
 p\left(x_{k}\mid x_{k^{-}}, \epsilon_{k^-} , Z_{0:k^-} \right) 
 \\
 \approx \eta\exp\left(-\frac{1}{2}\left(\tilde{x}_{k\mid k^{-}}-F_{\kt}\tilde{x}_{k^{-} \mid  k^{-}}\right)^{\top} \bar{Q}_{\kt}\left(\tilde{x}_{k\mid k^{-}}-F_{\kt}\tilde{x}_{k^{-} \mid  k^{-}}\right)\right),
\end{multline}
}
where
\begin{align}
\tilde{x}_{k\mid k^-} & = \LogI{\G{x}}{ \hat{x}_{k \mid  k^-}^{-1}  x_k  } \notag \\
\hat{x}_{k \mid  k^-} & = f\left(\hat{x}_{k^-\mid k^-},0,t_{\kt} \right) \notag \\
\bar{Q}_{\kt} & = G_{\kt}Q\left( t_{\kt} \right) G_{\kt}^{\top}, \label{eq:lgipdaf_q_bar}
\end{align}
and where $f$ is the state transition function defined in equation~\eqref{eq:sm_state_transition_function}, $Q\left(t_\kt\right)$ is the process noise covariance, and the Jacobians $F_{\kt}$ and $G_{\kt}$ are defined in equation~\eqref{eq:sm_system_jacobians} and evaluated at the point $\zeta_{f_{\kt}} = \left(\hat{x}_{k^-\mid k^-}, 0,t_{\kt} \right)$.
\end{lemma}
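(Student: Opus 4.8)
The plan is to reduce the computation to a standard Gaussian propagation through the affinized dynamics of Lemma~\ref{lem:sm_jacobians}. The key observation is that the conditioning event fixes the previous true state $x_{k^-}$, and hence fixes the previous error state $\xtkmkm = \LogI{\G{x}}{\hat{x}_{k^-\mid k^-}^{-1} x_{k^-}}$ as a deterministic quantity rather than a random one. With $x_{k^-}$ and the estimate $\hat{x}_{k^-\mid k^-}$ held fixed, the only source of randomness in $x_k = f(x_{k^-}, q_\kt, t_\kt)$ is the process noise $q_\kt \sim \gauss{0}{Q(t_\kt)}$. So it suffices to push this Gaussian through the map $q_\kt \mapsto x_k$ and re-express the result in terms of the predicted error state $\xtkkm = \LogI{\G{x}}{\hat{x}_{k\mid k^-}^{-1} x_k}$, which is the coordinate in which the density~\eqref{eq:prob_x_def} is written.

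First I would instantiate the affinized system~\eqref{eq:sm_lienarized_system} at the propagation point $\zeta_{f_\kt} = (\hat{x}_{k^-\mid k^-}, 0, t_\kt)$, so that $x_k \approx \hat{x}_{k\mid k^-}\,\ExpI{\G{x}}{F_\kt \xtkmkm + G_\kt q_\kt}$, where $\hat{x}_{k\mid k^-} = f(\hat{x}_{k^-\mid k^-}, 0, t_\kt)$ as in the lemma statement. Left-multiplying by $\hat{x}_{k\mid k^-}^{-1}$ and applying $\LogI{\G{x}}{\cdot}$ then yields the affine relation
\begin{equation*}
\xtkkm \approx F_\kt \xtkmkm + G_\kt q_\kt.
\end{equation*}
Because $\xtkmkm$ is fixed and $q_\kt$ is zero-mean Gaussian, the residual $\xtkkm - F_\kt \xtkmkm \approx G_\kt q_\kt$ is itself zero-mean Gaussian with covariance $G_\kt Q(t_\kt) G_\kt^\top = \bar{Q}_\kt$, exactly as defined in~\eqref{eq:lgipdaf_q_bar}. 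This produces the quadratic form appearing in~\eqref{eq:lgipdaf_approx_state_propagation}.

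Finally, I would invoke the definition of a probability density on the Lie group from Section~\ref{ssec:lg_uncertainty}: by~\eqref{eq:prob_x_def}, the density of $x_k$ relative to the base point $\hat{x}_{k\mid k^-}$ equals the Gaussian density of its error state $\xtkkm$, so the shifted Gaussian computed above is precisely $p\left(x_{k}\mid x_{k^{-}}, \epsilon_{k^-}, Z_{0:k^-}\right)$. The main obstacle is justifying this last identification carefully, namely that transporting the noise density onto the group through $\ExpI{\G{x}}{\cdot}$ and the left translation by $\hat{x}_{k\mid k^-}$ introduces no extra Jacobian-determinant factor. This is exactly where the unimodularity and the concentrated-Gaussian assumption of Section~\ref{ssec:lg_uncertainty} are needed, since they guarantee that the mass density is preserved under the group action and that the support remains within the injectivity region of the exponential map. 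A secondary point to verify is the accuracy of the first-order expansion: the affinization holds only when both $\xtkmkm$ and $q_\kt$ are small, which is consistent with the high-SNR regime and the concentrated-Gaussian modeling, so the $\approx$ in the statement absorbs the higher-order terms.
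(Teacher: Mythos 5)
Your proposal is correct and follows essentially the same route as the paper's proof: both instantiate the affinized system \eqref{eq:sm_lienarized_system} at $\zeta_{f_\kt}$, exploit the fact that conditioning on $x_{k^-}$ makes $\xtkmkm$ deterministic so the only randomness is $q_\kt$, and read off the mean $F_\kt\xtkmkm$ and covariance $\bar{Q}_\kt = G_\kt Q\left(t_\kt\right) G_\kt^\top$ of the predicted error state. Your added remarks on unimodularity ruling out a Jacobian-determinant factor and on the concentrated-Gaussian/high-SNR regime justifying the $\approx$ are sound refinements of points the paper leaves implicit in Section~\ref{ssec:lg_uncertainty}, not a departure from its argument.
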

\begin{proof}
The probability $p\left(x_{k}\mid x_{k^{-}}, \epsilon_{k^-} , Z_{0:k^-} \right)$ is approximated as Gaussian by computing the first and second moments of the propagated error state, using the affinized system defined in \eqref{eq:sm_lienarized_system}. From Equation~\eqref{eq:sm_lienarized_system} the affinized state transition function is
\begin{equation}\label{eq:lgipdaf_linearized_state_transition_function_belief}
x_{k}\approx\underbrace{f\left(\hat{x}_{k^{-} \mid  k^{-}},0,t_{\kt} \right)}_{\hat{x}_{k\mid k^{-}}}\ExpI{\G{x}}{\underbrace{F_{\kt}\tilde{x}_{k^{-} \mid  k^{-}}+G_{\kt}q_{\kt}}_{\tilde{x}_{k\mid k^{-}}}},
\end{equation}
where $x_{k}$, $\hat{x}_{k \mid k^-} = f\left(\hat{x}_{k^{-} \mid  k^{-}},0,t_{\kt} \right)$, $\tilde{x}_{k \mid k^-} = F_{\kt}\tilde{x}_{k^{-} \mid  k^{-}}+G_{\kt}q_{k}$ are respectively the predicted state, the predicted state estimate, and the predicted error state all conditioned on the track's previous state, the track representing the target, and the previous measurements.

Since the probability $p\left(x_{k}\mid x_{k^{-}}, \epsilon_{k^-} , Z_{0:k^-} \right)$  is conditioned on a specific value of the previous state, the previous error state $\tilde{x}_{k^- \mid k^-}$ is not a random variable.  Thus, the first and second moments of the propagated error state are 
\begin{align*}
\text{E}\left[\tilde{x}_{k\mid k^{-}}\right] & =\text{E}\left[F_{\kt}\tilde{x}_{k^{-} \mid  k^{-}}+G_{\kt}q_{\kt}\right]\\
 & =F_{\kt}\tilde{x}_{k^{-} \mid k^{-}}\\
\text{cov}\left[\tilde{x}_{k\mid k^{-}}\right] & =G_{\kt}Q\left( t_{\kt} \right) G_{\kt}^{\top}=\bar{Q}_{\kt},
\end{align*}
and the approximate Gaussian PDF is given in equation~\eqref{eq:lgipdaf_approx_state_propagation}.
\end{proof}

\begin{lemma}
\label{lem:lgipdaf_belief_measurement}
Suppose that Assumptions~\ref{amp:lgipdaf_system_model} and \ref{amp:lgipdaf_past_info} hold,
then the propagation of the track's state estimate and error covariance are
\begin{subequations}\label{eq:lgipdaf_pred_state_estimate_covariance}
\begin{align}
\hat{x}_{k \mid k^{-}} & =f\left(\hat{x}_{k^{-} \mid k^{-}},0,t_{\kt}\right),\\
P_{k \mid k^{-}} & =F_{\kt}P_{k^{-} \mid k^{-}}F_{\kt}^{\top}+G_{\kt}Q\left(t_{\kt}\right) G_{\kt}^{\top}
\end{align}
\end{subequations}
and the probability of the track's current state conditioned on the track representing the target and the previous measurements is
\begin{align}\label{eq:lgipdaf_prob_x_curr_cond_prev_meas}
    p\left(x_{k} \mid \epsilon_{k^-}, Z_{0:k^-} \right) =
    \eta \exp\left( - \frac{1}{2} \tilde{x}_{k \mid k^-}^\top  P_{k \mid k^{-}}^{-1} \tilde{x}_{k \mid k^-} \right)
\end{align}
where
$\tilde{x}_{k \mid k^-} = \LogI{\G{x}}{\hat{x}_{k \mid k^{-}}^{-1} x_k }$.
\end{lemma}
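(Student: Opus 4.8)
The plan is to obtain $p\left(x_{k} \mid \epsilon_{k^-}, Z_{0:k^-} \right)$ by marginalizing the previous state $x_{k^-}$ out of the joint distribution via the total probability theorem (Chapman--Kolmogorov),
\begin{equation*}
p\left(x_{k} \mid \epsilon_{k^-}, Z_{0:k^-} \right) = \int p\left(x_{k}\mid x_{k^{-}}, \epsilon_{k^-} , Z_{0:k^-} \right) p\left(x_{k^-} \mid \epsilon_{k^-}, Z_{0:k^-} \right) \, dx_{k^-},
\end{equation*}
where the first factor is supplied by Lemma~\ref{lem:lgipdaf_prob_prop_state} and the second factor is the previous belief defined in equation~\eqref{eq:lgipdaf_state_previous}. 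The state-estimate equation is immediate from equation~\eqref{eq:lgipdaf_linearized_state_transition_function_belief}, which centers the propagated concentrated Gaussian at $\hat{x}_{k \mid k^{-}} = f\left(\hat{x}_{k^{-} \mid k^{-}}, 0, t_\kt \right)$.

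First I would transport the integral over the Lie group $\G{x}$ into the vector space $\CS{x}$ through the change of variables $x_{k^-} = \hat{x}_{k^{-} \mid k^{-}} \ExpI{\G{x}}{\tilde{x}_{k^{-} \mid k^{-}}}$. By the construction of the concentrated Gaussian density in Subsection~\ref{ssec:lg_uncertainty}, both factors in the integrand are already written in terms of the error states $\tilde{x}_{k^{-} \mid k^{-}}$ and $\tilde{x}_{k \mid k^{-}}$, so after the substitution the integrand is an ordinary product of Gaussian kernels in $\CS{x}$. Here the connected and unimodular assumptions are essential: they guarantee that the group action relocates the density without altering its mass, so that the measure element $dx_{k^-}$ corresponds to $d\tilde{x}_{k^{-} \mid k^{-}}$ up to a constant that is absorbed into the normalizer $\eta$.

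With the integral reduced to a Gaussian convolution, I would evaluate it either by completing the square in $\tilde{x}_{k^{-} \mid k^{-}}$ or, more economically, by invoking the linear-Gaussian structure directly. The affinized error dynamics from equation~\eqref{eq:lgipdaf_linearized_state_transition_function_belief} read $\tilde{x}_{k \mid k^{-}} = F_{\kt}\tilde{x}_{k^{-} \mid k^{-}} + G_{\kt}q_{\kt}$, where now, unconditionally, $\tilde{x}_{k^{-} \mid k^{-}}\sim\gauss{0}{P_{k^{-} \mid k^{-}}}$ and $q_{\kt}\sim\gauss{0}{Q\left(t_\kt\right)}$ are independent. Because a linear combination of independent Gaussians is Gaussian, $\tilde{x}_{k \mid k^{-}}$ is Gaussian with mean $F_{\kt}\cdot 0 = 0$ and covariance
\begin{equation*}
\text{cov}\left[\tilde{x}_{k \mid k^{-}}\right] = F_{\kt}P_{k^{-} \mid k^{-}}F_{\kt}^{\top} + G_{\kt}Q\left(t_\kt\right)G_{\kt}^{\top} = P_{k \mid k^{-}},
\end{equation*}
which simultaneously establishes the covariance-propagation formula and the Gaussian density in equation~\eqref{eq:lgipdaf_prob_x_curr_cond_prev_meas}. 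Equivalently, this is the law of total covariance applied to Lemma~\ref{lem:lgipdaf_prob_prop_state}: the conditional contribution $\bar{Q}_{\kt}$ is augmented by $F_{\kt}P_{k^{-} \mid k^{-}}F_{\kt}^{\top}$ once the conditional mean $F_{\kt}\tilde{x}_{k^{-} \mid k^{-}}$ is itself treated as random.

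The main obstacle is the measure-theoretic justification of the marginalization: unlike the conditional result of Lemma~\ref{lem:lgipdaf_prob_prop_state}, which fixed $x_{k^-}$ and required only moment computations, the marginal demands integrating a product of two Gaussians against the Haar measure on $\G{x}$. Making the change of variables rigorous and confirming that the Jacobian of $x_{k^-}\mapsto\tilde{x}_{k^{-} \mid k^{-}}$ is constant---so that the product of exponentials remains a valid concentrated Gaussian after integration---is exactly where the unimodularity of $\G{x}$ is indispensable. Once the problem is carried into $\CS{x}$, the remainder is routine Gaussian algebra.
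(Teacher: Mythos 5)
Your proposal is correct and shares the paper's overall skeleton: the proof in Appendix~\ref{app:lgipdaf_proof_prediction} likewise starts from the Chapman--Kolmogorov marginalization, trades the integral over $x_{k^-}$ for an integral over the error state $\tilde{x}_{k^-\mid k^-}$ via the concentrated-Gaussian relation in equation~\eqref{eq:prob_x_def}, and identifies $\hat{x}_{k\mid k^-}=f\left(\hat{x}_{k^-\mid k^-},0,t_\kt\right)$ from the affinized transition. Where you genuinely diverge is in evaluating the resulting convolution. The paper does it by explicit algebra: it expands the combined exponent $L$, completes the square in $\tilde{x}_{k^-\mid k^-}$ using $S \defeq P_{k^-\mid k^-}^{-1}+F_\kt^\top \bar{Q}_\kt^{-1}F_\kt$, applies the matrix inversion lemma to recognize the leftover quadratic form as having precision $\left(F_\kt P_{k^-\mid k^-}F_\kt^\top+\bar{Q}_\kt\right)^{-1}$, and absorbs the integrated Gaussian factor into $\eta$. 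You instead bypass the integral by invoking closure of Gaussians under affine maps: since the affinized error dynamics give $\tilde{x}_{k\mid k^-}\approx F_\kt\tilde{x}_{k^-\mid k^-}+G_\kt q_\kt$ with $\tilde{x}_{k^-\mid k^-}\sim\gauss{0}{P_{k^-\mid k^-}}$ and $q_\kt\sim\gauss{0}{Q\left(t_\kt\right)}$ independent, the marginal mean and covariance are immediate (equivalently, the law of total covariance layered on Lemma~\ref{lem:lgipdaf_prob_prop_state}). The two arguments are mathematically equivalent---the affine-closure fact \emph{is} the Gaussian convolution identity---but yours is shorter and less error-prone, while the paper's explicit completion of the square makes the cross terms and the absorption of constants into the normalizer auditable. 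Your closing concern about the Haar-measure change of variables is well placed but puts you on the same footing as the paper: both proofs rest on the unimodularity and connectedness discussion of Section~\ref{ssec:lg_uncertainty} to justify $dx_{k^-}\leftrightarrow d\tilde{x}_{k^-\mid k^-}$, and both inherit the same first-order approximation from the affinization, so neither is more rigorous than the other on this point.
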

\par\noindent{\em Proof:}  
See Appendix~\ref{app:lgipdaf_proof_prediction}.

The track likelihood is modeled using a Markov process as described in \cite{Musicki1994}, and its propagation is
\begin{align*}
\epsilon_{k \mid k^-} \defeq p\left( \epsilon_{k} \mid Z_{0:k^-} \right) & = \sigma\left(t_{\kt} \right)p\left( \epsilon_{k^-} \mid Z_{0:k^-}\right),
\end{align*}
where the probability $\sigma\left(t_{\kt} \right) \in \left[ 0,1 \right]$ is chosen to represent how the track likelihood can change in time due the target being occluded, leaving the sensor surveillance region, etc. 

Therefore, at the end of the prediction step, we know
the state estimate $\hat{x}_{k \mid k^-}$ and error covariance  and $P_{k \mid k^-}$, and the 
probabilities $p\left(x_{k} \mid \epsilon_{k^-}, Z_{0:k^-} \right)$ and $p\left( \epsilon_{k} \mid Z_{0:k^-} \right)$.
\subsection{Data Association}\label{sec:lgipdaf_validation}

In this section we derive the data association algorithm that associates new measurements to tracks. Let $\psi$ denote a Bernoulli random variable that indicates that the measurement originated from the target. The estimated measurement of a track with state estimate $\hat{x}_{k \mid k^-}$ is
\begin{equation}\label{eq:lgipdaf_estimated_meas}
    \hat{z}_{k} \triangleq h\left( \hat{x}_{k \mid k^-}, 0 \right),
\end{equation}
where $h$ is the generic observation function defined in equation \eqref{eq:sm_system_model}.

The validation region is a volume in measurement space centered around the track's estimated measurement. The volume is selected such that a target-originated measurement has probability $P_G$ to fall within the track's validation region provided that the track represents the target. A measurement that falls within the validation region of a track is called a validated measurement and is associated to the track.  Otherwise, the measurement is given to a database that stores non-track-associated measurements. Computation of the validation region is complicated by the fact that the measurements and the target's state are elements of Lie groups and do not have a vector space structure.

\begin{lemma}\label{lem:lgipdaf_val_ttp}
Suppose that Assumptions~\ref{amp:lgipdaf_system_model} and \ref{amp:lgipdaf_past_info} hold, then the probability of measurement $z_k$ conditioned on it being target-originated, is given by
\begin{equation}\label{eq:lgipdaf_probability_y}
    p\left(z_{k} \mid \psi, \epsilon_{k}, Z_{0:k^-} \right) \approx\eta\exp\left(-\frac{1}{2}\tilde{z}_{k}^{\top}S_{k}^{-1}\tilde{z}_{k}\right),
\end{equation}
where the innovation $\tilde{z}_k$ and the innovation covariance $S_k$ are given by
\begin{subequations}\label{eq:lgipdaf_innovation}
\begin{align}
\hat{z}_k &= h\left(\hat{x}_{k\mid k^-},0\right) \\
\tilde{z}_{k} &=\LogI{\G{s}}{\hat{z}_k^{-1} z_k } \label{eq:lgipdaf_innovation_term} \\
S_{k} & =V_{k}RV_{k}^{\top}+H_{k}P_{k|k^-}H_{k}^{\top},\label{eq:lgipdaf_innovation_covariance}
\end{align}
\end{subequations}
and where the Jacobians $H_k$ and $V_k$ are defined in \eqref{eq:sm_system_jacobians} and evaluated at the point $\zeta_{h_k}=\left( \hat{x}_{k\mid k^-},0 \right)$.
\end{lemma}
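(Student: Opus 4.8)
The plan is to follow the standard Kalman-filter innovation derivation, but transported onto the measurement Lie group $\G{s}$ via the concentrated Gaussian framework of Subsection~\ref{ssec:lg_uncertainty}. The central object is the affinized observation model from Lemma~\ref{lem:sm_jacobians}, evaluated at the affinization point $\zeta_{h_k}=\left(\hat{x}_{k\mid k^-},0\right)$, namely
\begin{equation*}
z_k \approx h\left(\hat{x}_{k\mid k^-},0\right)\ExpI{\G{s}}{H_k\tilde{x}_{k\mid k^-}+V_k r_k}=\hat{z}_k\,\ExpI{\G{s}}{H_k\tilde{x}_{k\mid k^-}+V_k r_k}.
\end{equation*}
First I would left-multiply by $\hat{z}_k^{-1}$ and apply $\LogI{\G{s}}{\cdot}$ so that, by the definition of the innovation in \eqref{eq:lgipdaf_innovation_term}, the innovation takes the linear form $\tilde{z}_k \approx H_k\tilde{x}_{k\mid k^-}+V_k r_k$.

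Next I would compute the first two moments of this linear expression. Conditioning on $\psi$, $\epsilon_k$, and $Z_{0:k^-}$ fixes the relevant uncertainties: by Lemma~\ref{lem:lgipdaf_belief_measurement} the predicted error state satisfies $\tilde{x}_{k\mid k^-}\sim\gauss{0}{\Pkkm}$, the measurement noise satisfies $r_k\sim\gauss{0}{R}$, and the two are independent. Hence $\mathrm{E}[\tilde{z}_k]=0$, and because independence kills the cross terms,
\begin{equation*}
\mathrm{cov}[\tilde{z}_k]=H_k\Pkkm H_k^{\top}+V_k R V_k^{\top}=S_k,
\end{equation*}
which matches \eqref{eq:lgipdaf_innovation_covariance}. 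Since $\tilde{z}_k$ is, to first order, a linear combination of independent Gaussians, it is itself Gaussian with this mean and covariance.

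The final and conceptually most delicate step is to pass from the density of the algebra element $\tilde{z}_k\in\CS{s}$ to the density of the group element $z_k\in\G{s}$. Here I would invoke the connected, unimodular assumption exactly as in \eqref{eq:prob_x_def}: the group action by $\hat{z}_k$ has unit Jacobian, so translating the concentrated Gaussian from the identity out to $\hat{z}_k$ preserves the probability mass, yielding $p\left(z_k\mid\psi,\epsilon_k,Z_{0:k^-}\right)=p\left(\tilde{z}_k\right)$ and hence \eqref{eq:lgipdaf_probability_y}. The main obstacle is justifying the two approximations hidden in the ``$\approx$'': (i) that the affinized model faithfully represents $h$, and (ii) that $\LogI{\G{s}}{\hat{z}_k^{-1}z_k}$ is well-approximated by dropping its right-Jacobian correction. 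Both are controlled by the high-SNR/CGD hypothesis, which keeps $\tilde{x}_{k\mid k^-}$ and $r_k$ concentrated inside the injectivity neighborhood $U$ where the exponential map is a diffeomorphism; everything downstream is the routine linear-Gaussian moment computation.
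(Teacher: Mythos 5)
Your proposal is correct, but it takes a different route from the paper. The paper's proof (stated as ``similar to the proof of Lemma~\ref{lem:lgipdaf_belief_measurement} in Appendix~\ref{app:lgipdaf_proof_prediction}'') is a marginalization argument: it first establishes the conditional density $p\left(z_k \mid \psi, x_k, \epsilon_k, Z_{0:k^-}\right) \approx \eta\exp\bigl(-\frac{1}{2}(\tilde{z}_k - H_k\tilde{x}_{k\mid k^-})^\top \bar{R}_k^{-1}(\tilde{z}_k - H_k\tilde{x}_{k\mid k^-})\bigr)$ with $\bar{R}_k = V_k R V_k^\top$ (this is Lemma~\ref{lem:lgipdaf_prob_z_cond_x}), then integrates it against $p\left(x_k \mid \epsilon_k, Z_{0:k^-}\right)$ over the error state via the theorem of total probability, completing the square in $\tilde{x}_{k\mid k^-}$ and invoking the matrix inversion lemma to collapse the marginal covariance to $S_k = V_k R V_k^\top + H_k P_{k\mid k^-} H_k^\top$. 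You instead keep $\tilde{x}_{k\mid k^-}$ random from the outset and compute the first two moments of the linear expression $\tilde{z}_k \approx H_k\tilde{x}_{k\mid k^-} + V_k r_k$ directly, concluding Gaussianity because an affine image of independent (hence jointly) Gaussian variables is Gaussian. Both arguments are sound and rest on the same two pillars — the affinized observation model of Lemma~\ref{lem:sm_jacobians} evaluated at $\zeta_{h_k}=\left(\hat{x}_{k\mid k^-},0\right)$, and the unimodular/CGD transfer of the density from $\CS{s}$ to $\G{s}$ as in equation~\eqref{eq:prob_x_def} — and your handling of both approximations (the affinization error and the concentration hypothesis keeping mass inside the injectivity neighborhood) is appropriately flagged. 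What your route buys is brevity: no integral, no completion of the square, no matrix inversion lemma. What the paper's route buys is a reusable intermediate object: the state-conditioned density of Lemma~\ref{lem:lgipdaf_prob_z_cond_x} is needed independently in the MAP derivation of the split-track update (the proof of Lemma~\ref{lem:lgipdaf_single_update}), so the paper's two-step structure produces it for free, whereas under your approach it must still be derived separately there. One small point of precision: given the affinized model $z_k \approx \hat{z}_k\ExpI{\G{s}}{H_k\tilde{x}_{k\mid k^-}+V_k r_k}$, the identity $\tilde{z}_k = \LogI{\G{s}}{\hat{z}_k^{-1} z_k} = H_k\tilde{x}_{k\mid k^-}+V_k r_k$ is exact with no right-Jacobian correction to drop; the only approximation in that step is the affinization itself, so your item (ii) is subsumed by your item (i).
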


\par\noindent{\em Proof:~}
Similar to the proof of Lemma~\ref{lem:lgipdaf_belief_measurement} in Appendix~\ref{app:lgipdaf_proof_prediction}.

Using equations \eqref{eq:lgipdaf_probability_y} and \eqref{eq:lgipdaf_innovation}, we define the metric $d_{\mathcal{V}}:\G{s} \times \G{s} \to\mathbb{R}$ as 
\begin{equation}\label{eq:lgipdaf_validation_region_metric}
d_{\mathcal{V}}\left(z_{k},\hat{z}_{k}\right)  =\nu_{k}^{\top}S_{k}^{-1}\nu_{k},
\end{equation}
where the innovation covariance is used to normalize the metric.  Thus, the metric $d_{\mathcal{V}}$ is the sum of $m$ squared Gaussian random variables where $m$ is the
dimension of the measurement space, and the values of the metric are distributed according to
a chi-square distribution with $m$ degrees of freedom.

The validation region is defined as the set 
\[
\text{val}\left(\hat{z}_{k},\tau_G\right)\defeq\left\{ z\in \G{s}\,\mid \,d_V\left(z,\hat{z}_{k}\right)\leq\tau_{G}\right\},
\]
where the parameter $\tau_{G}$ is called the gate threshold. A measurement that falls within this validation region is associated to the track whose estimated measurement is $\hat{z}_k$. This implies that a validation region is constructed for each track.

The volume of the validation region is defined as
\begin{equation}
\mathcal{V}_k=c_{m}\left|\tau_{G}S_{k}\right|^{1/2},\label{eq:lgipdaf_volume_validation_region}
\end{equation}
where $c_{m}$ is the volume of the unit hypersphere of dimension
$m$ calculated as 
\[
c_{m}=\frac{\pi^{m/2}}{\Gamma\left(m/2+1\right)},
\]
with $\Gamma$ denoting the gamma function \cite{Bar-Shalom2011}. It is worth noting that the volume of the validation region is dependent on the error covariance through the innovation covariance $S_k$. Therefore, the validation region contains information on the quality of the state estimate. This concept will be used in Section~\ref{sec:lgipdaf_update}.

The gate probability is
\begin{equation}\label{eq:lgipdaf_gate_prob}
P_{G}=\int_{\mathcal{V}_k}  p\left(z \mid \psi, \epsilon_{k}, Z_{0:k^-} \right)\,dz,
\end{equation}
and is the value of the chi-square cumulative distribution function (CDF) with parameter $\tau_{G}$ \cite{Bar-Shalom2011}. 
Let $\theta_k$ denote a Bernoulli random variable that the measurement is target originated and inside the validation region. Using \eqref{eq:lgipdaf_probability_y} and \eqref{eq:lgipdaf_gate_prob}, the probability of a measurement conditioned on the measurement being inside the validation region, being target-originated, the track representing the target and the previous measurements is
\begin{equation} \label{eq:lgipdaf_prob_meas_val}
p\left(z_{k}\mid \theta_k, \epsilon_k, Z_{0:k^-} \right)=P_{G}^{-1}p\left(z_{k}\mid \psi, \epsilon_k, Z_{0:k^-} \right).    
\end{equation}

The data association step of the LG-IPDAF can be extended to multiple targets and tracks by assigning a measurement to every track whose validation region the measurement falls in without taking into account joint associations. This is done by copying the measurement for every track it is associated with and giving a copy to each track the measurement is associated with and treating track associated measurements of one track independently of track associated measurements of another track. 

At the end of the data association step, we have the track-associated measurements $Z_k = \left\{ z_{k,j} \right\}_{j=1}^{m_k}$ for each track. 

\subsection{Measurement Update Step}\label{sec:lgipdaf_update}

According to Assumption~\ref{amp:lgipdaf_single_target_meas}, at most one measurement originates from
the target every sensor scan.  Thus, given the set of new validated measurements $Z_{k}=\left\{ z_{k,j}\right\}_{j=1}^{m_k}$, either one of the measurements is target originated or none of them are. This leads to different possibilities. As an example, all of the measurements could be false, or the measurement $z_{k,1}$ could be the target-originated measurement and all others false, or $z_{k,2}$ could be the target-originated measurement and all others false, etc.
These different possibilities are referred to as association events denoted $\theta_{k,j}$ where the subscript $j>0$ means that the $j^{th}$ validated measurement is target originated and all others are false, and where $j=0$ means that all of the validated measurements are false. Hence, there are a total of $m_k+1$ association events. 

\begin{lemma} \label{lem:lgipdaf_association_event}
Suppose that Assumptions~\ref{amp:lgipdaf_system_model}-\ref{amp:lgipdaf_past_info} hold, and that $Z_{k}=\left\{ z_{k,j}\right\}_{j=1}^{m_k}$ is the set of validated measurements at time $t_k$, and let $P_D$ be the probability of detection, $P_G$ the gate probability, $\lambda$ the spatial density of false measurements,
$p\left(z_{k, j} \mid \psi, \epsilon_k, Z_{0:k^-} \right)$ be defined by Equation \eqref{eq:lgipdaf_probability_y}, and let
\begin{equation} \label{eq:lgipdaf_prob_L}
\mathcal{L}_{k,j} \defeq \frac{P_D}{\lambda}p\left(z_{k, j} \mid \psi, \epsilon_k, Z_{0:k^-} \right),
\end{equation}
then the probability $\beta_{k,j} \defeq p\left(\theta_{k,j}\mid \epsilon_{k},Z_{0:k}\right)$ of an association event $\theta_{k,j}$ conditioned on the measurements $Z_{0:k}$, and the track representing a target, is given by 
\begin{equation} \label{eq:lgipdaf_association_probabilities}
\beta_{k,j}=\begin{cases}
\frac{\left(1-P_{G}P_{D}\right)}{1-P_{D}P_{G} + \sum_{j=i}^{m_k}\mathcal{L}_{k,i}} & j=0, \\
\frac{\mathcal{L}_{k,j}}{1-P_{D}P_{G} + \sum_{i=1}^{m_k}\mathcal{L}_{k,i}} & j=1,\ldots,m_{k}.
\end{cases}
\end{equation}
\end{lemma}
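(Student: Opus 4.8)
The plan is to compute $\beta_{k,j}$ by Bayes' rule, treating the $m_k+1$ association events $\{\theta_{k,j}\}_{j=0}^{m_k}$ as a mutually exclusive and exhaustive partition and conditioning throughout on the number $m_k$ of validated measurements. Writing $Z_{0:k}=\{Z_k,Z_{0:k^-}\}$, Bayes' rule gives
\begin{equation*}
\beta_{k,j}=p\left(\theta_{k,j}\mid\epsilon_k,Z_{0:k}\right)=\frac{p\left(Z_k\mid\theta_{k,j},\epsilon_k,Z_{0:k^-}\right)\,p\left(\theta_{k,j}\mid\epsilon_k,Z_{0:k^-}\right)}{\sum_{i=0}^{m_k}p\left(Z_k\mid\theta_{k,i},\epsilon_k,Z_{0:k^-}\right)\,p\left(\theta_{k,i}\mid\epsilon_k,Z_{0:k^-}\right)},
\end{equation*}
so the work reduces to evaluating the measurement likelihood under each event and the prior probability of each event, after which the denominator is just the normalizing sum.

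First I would write down the likelihood $p(Z_k\mid\theta_{k,j},\ldots)$. Under $\theta_{k,0}$ all $m_k$ measurements are clutter, each uniform over the validation region with density $\mathcal{V}_k^{-1}$, giving $\mathcal{V}_k^{-m_k}$. Under $\theta_{k,j}$ for $j>0$, the measurement $z_{k,j}$ is target-originated with the gated density $P_G^{-1}p(z_{k,j}\mid\psi,\epsilon_k,Z_{0:k^-})$ from \eqref{eq:lgipdaf_prob_meas_val}, while the remaining $m_k-1$ measurements are clutter, giving $P_G^{-1}p(z_{k,j}\mid\psi,\epsilon_k,Z_{0:k^-})\,\mathcal{V}_k^{-(m_k-1)}$. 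Next I would assemble the event priors from Assumptions~\ref{amp:lgipdaf_PD}--\ref{amp:lgipdaf_mu}: for $j>0$ the target must be detected and gated (factor $P_DP_G$), one of the $m_k$ measurements is assigned to it (combinatorial factor $1/m_k$), and the other $m_k-1$ are clutter (factor $\mu_F(m_k-1)$); for $j=0$ the target is either undetected or detected-but-ungated (combined factor $1-P_DP_G$) and all $m_k$ measurements are clutter (factor $\mu_F(m_k)$).

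Multiplying prior by likelihood, the $j>0$ numerator becomes proportional to $\tfrac{1}{m_k}P_D\,\mu_F(m_k-1)\,\mathcal{V}_k^{-(m_k-1)}\,p(z_{k,j}\mid\psi,\ldots)$, where the $P_G$ from detection-gating cancels the $P_G^{-1}$ in the gated density, while the $j=0$ numerator is proportional to $(1-P_DP_G)\,\mu_F(m_k)\,\mathcal{V}_k^{-m_k}$. Forming the ratio $\beta_{k,j}/\beta_{k,0}$ and using the Poisson identity $\mu_F(m_k-1)/\mu_F(m_k)=m_k/(\lambda\mathcal{V}_k)$ obtained from \eqref{eq:lgipdaf_poisson_distribution}, the factors $m_k$, $\mathcal{V}_k$, and the common exponential normalizer all cancel, leaving $\beta_{k,j}/\beta_{k,0}=\mathcal{L}_{k,j}/(1-P_DP_G)$ with $\mathcal{L}_{k,j}$ as in \eqref{eq:lgipdaf_prob_L}. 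Imposing $\sum_{j=0}^{m_k}\beta_{k,j}=1$ then yields $\beta_{k,0}=(1-P_DP_G)/(1-P_DP_G+\sum_i\mathcal{L}_{k,i})$ and the stated expression for $j>0$.

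I expect the main obstacle to be justifying the event priors correctly rather than the algebra: in particular the combinatorial $1/m_k$ assignment factor, and the fact that the $j=0$ case must pool the two distinct failure modes (missed detection and out-of-gate detection) into the single factor $1-P_DP_G$. A secondary point worth verifying is that the Lie group setting introduces nothing new at this stage: the target likelihood $p(z_{k,j}\mid\psi,\ldots)$ has already been reduced to a Gaussian in the Cartesian algebraic space of $\G{s}$ in \eqref{eq:lgipdaf_probability_y}, and the clutter is uniform with respect to the volume $\mathcal{V}_k$ of \eqref{eq:lgipdaf_volume_validation_region}, so once those two quantities are in hand the derivation is formally identical to the Euclidean PDAF and every group-theoretic subtlety is already absorbed into $p(z_{k,j}\mid\psi,\ldots)$ and $\mathcal{V}_k$.
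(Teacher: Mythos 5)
Your proposal is correct and follows essentially the same route as the paper's proof in Appendix~B: Bayes' rule conditioned on $m_k$, the same factored likelihoods ($\mathcal{V}_k^{-m_k}$ versus $P_G^{-1}p(z_{k,j}\mid\psi,\ldots)\mathcal{V}_k^{-(m_k-1)}$), the same event priors built from $P_DP_G$, the combinatorial $1/m_k$, and $\mu_F$, and the same Poisson ratio $\mu_F(m_k-1)/\mu_F(m_k)=m_k/(\lambda\mathcal{V}_k)$ to produce $\mathcal{L}_{k,j}$. The only cosmetic difference is that the paper derives the event priors by an explicit Bayes computation over the number of false measurements $\phi$ (whose normalizer $p(m_k\mid\epsilon_k,Z_{0:k^-})$ later cancels), whereas you write the priors up to proportionality and let the ratio $\beta_{k,j}/\beta_{k,0}$ plus normalization absorb the common factors --- the same computation, slightly streamlined.
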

\par\noindent{\em Proof:}
See Appendix~\ref{app:lgipdaf_association_event}.

Using the association events, we can update the track's state estimate $\hat{x}_{k \mid k^-}$ and error covariance $P_{k \mid k^-}$ conditioned on each association event in order to generate the split tracks. This update step is similar to the Kalman filter's update step. The standard Kalman filter algorithm updates the state estimate directly using vector space arithmetic.  However, since not every Lie group has a vector space structure, we have to approach the update step differently.

Recall that  $p\left(x_{k} \mid \epsilon_k, Z_{0:k^-} \right) = p\left( \tilde{x}_{k} \mid \epsilon_k, Z_{0:k^-} \right)$ with the relation $x_k = \hat{x}_{k \mid k^-}\ExpI{\G{x}}{\tilde{x}_{k \mid k^-}}$. The error state $\tilde{x}_{k \mid k^-} \sim \gauss{\mu _{k \mid k^-}}{P_{k \mid k^-}}$ has a vector space structure. This allows us to update the error state using a method similar to the Kalman filter. 
We denote the updated error state and its corresponding mean and error covariance conditioned on $\theta_{k,j}$, the track representing the target, and all track-associated measurements as $\tilde{x}^-_{k \mid k,j}\sim \mathcal{N}\left(\mu^-_{k \mid k ,j}, P^{c^-}_{k\mid k,j} \right)$. Using the updated error state, the probability of the track's state conditioned on the association event $\theta_{k,j}$ and the track representing the target and all the track associated measurements, is $p\left(x_{k} \mid \theta_{k,j}, \epsilon_k,  Z_{0:k} \right) = p\left( \tilde{x}^-_{k} \mid \theta_{k,j}, \epsilon_k, Z_{0:k} \right)$ where $x_k = \hat{x}_{k \mid k^-} \ExpI{\G{x}}{\tilde{x}^-_{k \mid k} }$.

The updated error state may have a non-zero mean. To reset the error state's mean to zero we add $\mu^-_{k \mid k ,j}$  to the state estimate $\hat{x}_{k \mid k^-}$ via the exponential map, and the error covariance is modified accordingly. We denote the updated and reset error state and its corresponding mean and error covariance conditioned on $\theta_{k,j}$ as $\tilde{x}_{k \mid k,j}\sim \mathcal{N}\left(\mu_{k \mid k ,j}=0, P^{c}_{k\mid k,j} \right)$. Using the reset error state, we have $p\left(x_{k} \mid \theta_{k,j}, \epsilon_k,  Z_{0:k} \right) = p\left( \tilde{x}_{k} \mid \theta_{k,j}, \epsilon_k, Z_{0:k} \right)$ with the relation $x_k = \hat{x}_{k \mid k} \ExpI{\G{x}}{\tilde{x}_{k \mid k} }$. In summary, we update the mean of the error state, and then reset it to zero by adding it onto the state estimate. This process is the update step of the indirect Kalman filter and is presented in the following lemma.

\begin{lemma}\label{lem:lgipdaf_single_update}
Suppose that Assumptions~\ref{amp:lgipdaf_system_model}, \ref{amp:lgipdaf_single_target_meas},~and \ref{amp:lgipdaf_past_info} hold, and define $Z_k = \left\{ z_{k,j}\right\}_{j=1}^{m_k}$ as the set of validated measurements at time $k$, then 
the Gaussian approximation of the probability of the split track $x_{k,j}$ 
is
\begin{equation}\label{eq:lgipdaf_single_update_prob_split_track}
    p\left(x_{k,j} \mid \theta_{k,j}, \epsilon_k, Z_{0:k} \right)=\eta\exp\left(\tilde{x}_{k|k,j}^{\top}\left(P^c_{k|k,j}\right)^{-1}\tilde{x}_{k|k,j}\right),
\end{equation}
where
\begin{align}
    \tilde{x}_{k|k,j} &= \LogI{\G{x}}{ \hat{x}^{-1}_{k \mid k,j}  x_{k,j}} \\
    \hat{x}_{k|k,j} &= \begin{cases} \hat{x}_{k \mid k^{-}}\ExpI{\G{x}}{ \mu^{-}_{k|k,j}}, & j\neq 0 \\
    \hat{x}_{k|k^-}, & j=0
    \end{cases} \label{eq:lgipdaf_iku_reset_mean} \\
    P^c_{k|k,j} &= 
        \begin{cases} 
        \jr{\G{x}}{\mu^{-}_{k|k,j}} P^{c^-}_{k|k}  \jr{\G{x}}{\mu^{-}_{k|k,j}}^\top, & j\neq 0 
        \\
        P_{k|k^-}, & j=0
        \end{cases} \label{eq:lgipdaf_iku_cov_resnt_mean} \\ 
    \mu^{-}_{k|k,j} & =K_{k}\nu_{k,j}\label{eq:lgipdaf_single_update_mu}\\
    \nu_{k,j} & = \LogI{\G{s}}{h\left(\hat{x}_{k \mid k^{-}},0\right) ^{-1} z_{k,j}}     
        \label{eq:lgipdaf_single_update_innovation_term}\\
    K_{k} & =P_{k \mid k^{-}}H_{k}^{\top}S_{k}^{-1} \label{eq:lgipdaf_single_update_kalman_gain}\\
    S_{k} & =H_{k}P_{k \mid k^{-}}H_{k}^{\top}+V_{k}RV_{k}^{\top}\label{eq:lgipdaf_single_update_innovation_cov}\\
    P^{c^-}_{k|k} & = \left(I-K_{k} H_{k}\right)P_{k \mid k^{-}} \label{eq:lgipdaf_iku_normal_cov_update},
\end{align}
and where $R$ is the measurement noise covariance, and the Jacobians $H_{k}$ and $V_{k}$ are defined in \eqref{eq:sm_system_jacobians} and evaluated at the point $\zeta_{h_k} =\left(\hat{x}_{k \mid k^-},0 \right)$. 
\end{lemma}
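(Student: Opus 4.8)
The plan is to recognize this statement as the measurement update of an indirect (error-state) Kalman filter, in which the Gaussian-conjugacy argument is made possible by the affinized observation model of Lemma~\ref{lem:sm_jacobians}, and in which the only genuinely group-theoretic content is the covariance transformation forced by resetting the error-state mean onto the state estimate. I would treat the informative case $j\neq 0$ first. The prior is supplied directly by Lemma~\ref{lem:lgipdaf_belief_measurement}: the predicted error state $\tilde{x}_{k\mid k^-}=\LogI{\G{x}}{\hat{x}_{k\mid k^-}^{-1}x_k}$ is zero-mean Gaussian with covariance $P_{k\mid k^-}$.

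Next I would linearize the measurement. Using the affinized model $z_k\approx h(\hat{x}_{k\mid k^-},0)\ExpI{\G{s}}{H_k\tilde{x}_{k\mid k^-}+V_k r_k}$ from \eqref{eq:sm_lienarized_system}, the innovation $\nu_{k,j}=\LogI{\G{s}}{h(\hat{x}_{k\mid k^-},0)^{-1}z_{k,j}}$ satisfies, to first order, $\nu_{k,j}\approx H_k\tilde{x}_{k\mid k^-}+V_k r_k$ with $r_k\sim\mathcal{N}(0,R)$. In error-state coordinates the residual is therefore an affine function of a Gaussian state plus independent Gaussian noise---precisely the linear-Gaussian setting. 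Writing $p(\tilde{x}^-_{k\mid k,j}\mid\theta_{k,j},\epsilon_k,Z_{0:k})\propto p(\nu_{k,j}\mid\tilde{x}_{k\mid k^-})\,p(\tilde{x}_{k\mid k^-})$ and completing the square (equivalently, invoking the standard Kalman measurement-update identities) yields the intermediate density $\tilde{x}^-_{k\mid k,j}\sim\mathcal{N}(\mu^-_{k\mid k,j},P^{c^-}_{k\mid k})$ with $\mu^-_{k\mid k,j}=K_k\nu_{k,j}$, $P^{c^-}_{k\mid k}=(I-K_kH_k)P_{k\mid k^-}$, and $K_k$, $S_k$ as stated.

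The remaining step, which I expect to be the main obstacle, is the mean reset and its covariance transformation, where the curvature of the group enters and the argument leaves the flat Kalman filter. I would set $\hat{x}_{k\mid k,j}=\hat{x}_{k\mid k^-}\ExpI{\G{x}}{\mu^-_{k\mid k,j}}$ and define the reset error $\tilde{x}_{k\mid k,j}=\LogI{\G{x}}{\hat{x}_{k\mid k,j}^{-1}x_k}$. Writing $x_k=\hat{x}_{k\mid k^-}\ExpI{\G{x}}{\tilde{x}^-_{k\mid k,j}}$, splitting $\tilde{x}^-_{k\mid k,j}=\mu^-_{k\mid k,j}+(\tilde{x}^-_{k\mid k,j}-\mu^-_{k\mid k,j})$, and applying the right-Jacobian identity \eqref{eq:Jr_property} gives $x_k\approx\hat{x}_{k\mid k,j}\ExpI{\G{x}}{\jr{\G{x}}{\mu^-_{k\mid k,j}}(\tilde{x}^-_{k\mid k,j}-\mu^-_{k\mid k,j})}$, so that to first order $\tilde{x}_{k\mid k,j}\approx\jr{\G{x}}{\mu^-_{k\mid k,j}}(\tilde{x}^-_{k\mid k,j}-\mu^-_{k\mid k,j})$. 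The bracketed term is zero-mean with covariance $P^{c^-}_{k\mid k}$, so the reset error is zero-mean with covariance $P^c_{k\mid k,j}=\jr{\G{x}}{\mu^-_{k\mid k,j}}P^{c^-}_{k\mid k}\jr{\G{x}}{\mu^-_{k\mid k,j}}^\top$, which reproduces \eqref{eq:lgipdaf_single_update_prob_split_track}. The null-hypothesis case $j=0$ is immediate: no measurement is incorporated, $\mu^-_{k\mid k,0}=0$, no reset occurs, and $\hat{x}_{k\mid k,0}=\hat{x}_{k\mid k^-}$, $P^c_{k\mid k,0}=P_{k\mid k^-}$. The care required is in arguing that the neglected higher-order terms in the two linearizations are negligible under the concentrated-Gaussian assumption of Section~\ref{ssec:lg_uncertainty}, and that the reset preserves total probability mass---exactly the unimodularity property invoked there.
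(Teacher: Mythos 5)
Your proposal is correct and follows the paper's own proof in all essentials: the paper obtains the intermediate Gaussian $\tilde{x}^{-}_{k\mid k,j}\sim\mathcal{N}\left(K_k\nu_{k,j},\,(I-K_kH_k)P_{k\mid k^-}\right)$ by an explicit MAP argument (differentiating the quadratic exponent $L$ once for the mean and twice for the inverse covariance), which for Gaussians is exactly the completion-of-the-square you invoke, and its mean-reset step---writing $\tilde{x}^{-}_{k\mid k,j}=\mu^{-}_{k\mid k,j}+a_{k\mid k}$ with $a_{k\mid k}\sim\mathcal{N}\left(0,P^{c^-}_{k\mid k}\right)$ and applying the right-Jacobian identity \eqref{eq:Jr_property} to get $P^{c}_{k\mid k,j}=\jr{\G{x}}{\mu^{-}_{k\mid k,j}}P^{c^-}_{k\mid k}\jr{\G{x}}{\mu^{-}_{k\mid k,j}}^{\top}$---matches yours verbatim, as does the $j=0$ case. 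The one detail the paper makes explicit that you elide is why, conditioned on $\theta_{k,j}$, the joint likelihood of all of $Z_k$ collapses to the single factor $p\left(z_{k,j}\mid\psi,x_{k,j},\epsilon_k,Z_{0:k^-}\right)$: by equation \eqref{eq:lgipdaf_meas_prob_cond} the remaining validated measurements are false and carry the state-independent uniform density $\mathcal{V}_k^{-1}$, so they (and the factor $P_G^{-1}$) are absorbed into the normalizing coefficient---a one-line remark worth adding, but not a gap.
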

\par\noindent{\em Proof: }
See Appendix~\ref{proof:lgipdaf_single_update}.

Equation~\eqref{eq:lgipdaf_single_update_mu} is the updated mean of the error state before being reset to zero, Equation~\eqref{eq:lgipdaf_single_update_innovation_term} is the innovation term, 
Equation~\eqref{eq:lgipdaf_single_update_kalman_gain} is the Kalman gain, Equation~\eqref{eq:lgipdaf_single_update_innovation_cov} is the innovation covariance, and Equation~\eqref{eq:lgipdaf_iku_normal_cov_update} is the updated error covariance before the updated mean is reset. Equation~\eqref{eq:lgipdaf_iku_reset_mean} adds the mean of the error state to the state estimate essentially resetting the error state's mean to zero, and Equation~\eqref{eq:lgipdaf_iku_cov_resnt_mean} is the covariance update as a consequence of resetting the error state's mean to zero.

Using the probabilities of the split tracks defined in equation~\eqref{eq:lgipdaf_single_update_prob_split_track} and the probabilities of each association event defined in equation~\eqref{eq:lgipdaf_association_probabilities}, the probability of the track's state conditioned on it representing the target and all track-associated measurements is calculated using the theorem of total probability:
\begin{equation}\label{eq:lgipdaf_prob_state_cond_exist_meas}
p\left(x_{k} \mid \epsilon_{k}, Z_{0:k} \right)  =\sum_{j=0}^{m_{k}}p\left(x_{k,j}\mid\theta_{k,j},\epsilon_{k}, Z_{0:k} \right)\beta_{k,j}.
\end{equation}
In essence, the probability of the track's state is the weighted average of the split tracks' state probabilities where the weight is the probability of the corresponding association event. The process of splitting tracks could be repeated every time new measurements are received which would lead to an exponential growth of split tracks. This
process would quickly become computationally and memory expensive. To keep the problem manageable, the split tracks are fused together using the smoothing property of conditional expectations discussed in \cite{Bar-Shalom2001}. 

Normally if the track's state estimate is expressed in Euclidean space, the smoothing property of conditional expectations would indicate that the state estimates should be fused together according to the equation
\begin{equation} \label{eq:lgipdaf_euclidean_fuse}
    \hat{x}_{k \mid k} = \sum_{j=0}^{m_k} \hat{x}_{k \mid k, j}\beta_{k,j},
\end{equation}
as in the original probabilistic data association filter.
However, this approach does not work with arbitrary Lie groups since not every Lie group has a vector space structure. Instead we have to solve equation \eqref{eq:lgipdaf_prob_state_cond_exist_meas} indirectly by posing the problem in the Cartesian algebraic space. This is accomplished by using the relation in equation~\eqref{eq:prob_x_def} to note that
\begin{equation}\label{eq:lgipdaf_relation_error_state}
    p\left(x_{k,j} \mid \epsilon_{k}, Z_{0:k} \right)=p\left(\tilde{x}^{-}_{k,j} \mid \epsilon_{k}, Z_{0:k} \right),
\end{equation} where $\tilde{x}^-_{k,j}$ is the error state conditioned on $\theta_{k,j}$ after update but before its mean is reset to zero as discussed in Lemma~\ref{lem:lgipdaf_single_update}. We use this version of the error state since we can fuse the means $\mu^-_{k\mid k,j}$, defined in equation~\eqref{eq:lgipdaf_single_update_mu}, together in a similar way as in equation~\eqref{eq:lgipdaf_euclidean_fuse}.

Using the relation in equation~\eqref{eq:lgipdaf_relation_error_state}, an equivalent expression to equation \eqref{eq:lgipdaf_prob_state_cond_exist_meas} is 
\begin{equation}
 p\left(\tilde{x}^{-}_{k\mid k} \mid \epsilon_k, Z_{0:k} \right)  =\sum_{j=0}^{m_{k}}p\left(\tilde{x}^{-}_{k \mid k,j}\mid\theta_{k,j},\epsilon_{k}, Z_{0:k} \right)\beta_{k,j},
\end{equation}
with the relations $x_k = \hat{x}_{k \mid k^-} \ExpI{\G{x}}{\tilde{x}^{-}_{k \mid k}}$ and $x_{k,j} = \hat{x}_{k \mid k^-,j} \ExpI{\G{x}}{\tilde{x}^{-}_{k \mid k,j}}$.

In essence, the validated measurements are used to update the mean of the error state for each split track $\mu^-_{k\mid k,j}$ according to Lemma~\ref{lem:lgipdaf_single_update}. Since the means are elements of the Cartesian algebraic space (in the same tangent space), they can be added together using a weighted average to form a single mean $\mu^-_{k \mid k}$. The mean of the error state is then reset to zero by adding $\mu^-_{k \mid k}$ onto the state estimate $\hat{x}_{k\mid k^{-}}$ using the exponential map. This process is depicted in Fig.~\ref{fig:lgipdaf_geodesic_fuse}.

\begin{figure}[t]
\centering
    \includegraphics[width=0.5\linewidth]{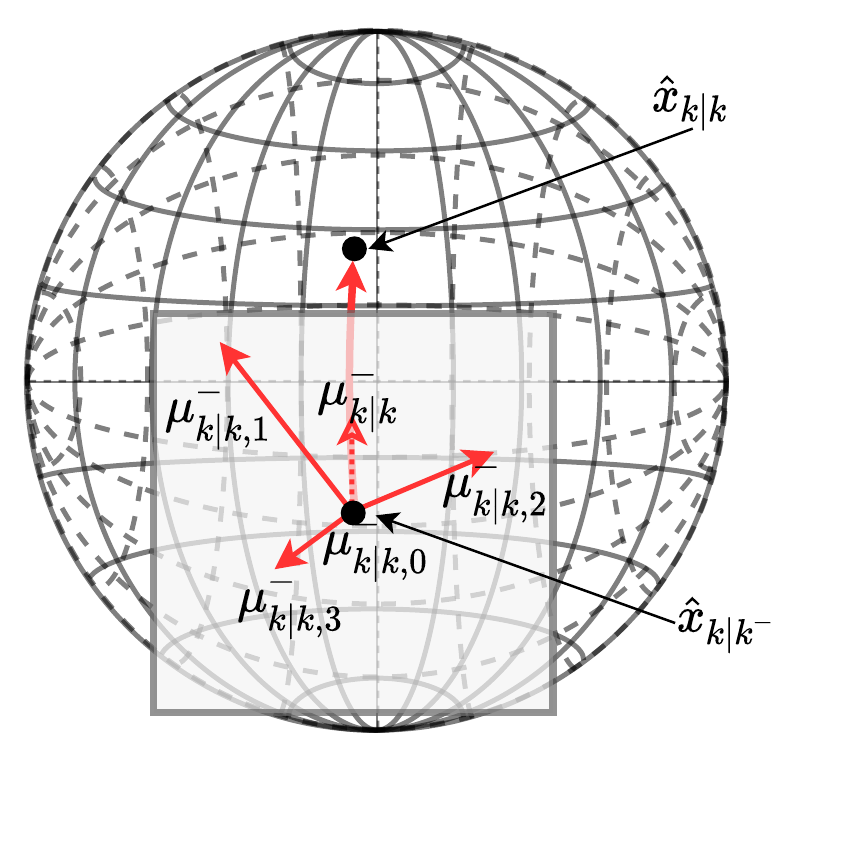}
    \caption{The error state means for each split track before reset $\mu^{-}_{k\mid k,j}$ are fused together using the association event probabilities $\beta_{k,j}$ in the tangent space to form the error state mean $\mu^{-}_{k\mid k}$. The mean is then used to update the track's state estimate from $\hat{x}_{k\mid k^{-}}$ to $\hat{x}_{k\mid k}$. }
     \label{fig:lgipdaf_geodesic_fuse}
\end{figure}

Using the smoothing property of conditional expectations, the expected value of the error state $\tilde{x}_{k \mid k}^-$ is
\begin{subequations}\label{eq:lgipdaf_expected_value}
\begin{align}
   \text{E}\left[ \tilde{x}^{-}_{k\mid k }\right] &=\text{E}\left[ \text{E}\left[ \tilde{x}^{-}_{k\mid k,j} | \theta_{k,j}, \epsilon_{k}\right]\right] \\
   & = \sum_{j=0}^{m_k} \text{E}\left[ \tilde{x}^{-}_{k\mid k,j} | \theta_{k,j}, \epsilon_{k}\right] \beta_{k,j}\\
   & = \sum_{j=0}^{m_k} \mu^{-}_{k\mid k,j} \beta_{k,j} \\
   & = \mu^{-}_{k\mid k}
  \end{align}
\end{subequations}
where $\beta_{k,j}$ is defined in Lemma~\ref{lem:lgipdaf_association_event} and $\mu^{-}_{k\mid k,j}$ is defined in Lemma~\ref{lem:lgipdaf_single_update}.

Using the updated mean of the error state, the covariance is 
\begin{equation*}
\text{cov}\left[\tilde{x}^{-}_{k\mid k}\right]=  \text{E}\left[\left(\tilde{x}^{-}_{k\mid k} - \mu^{-}_{k\mid k}\right)\left(\tilde{x}^{-}_{k\mid k} - \mu^{-}_{k\mid k}\right)^{\top}\right].
\end{equation*}
From this point the derivation follows from \cite{Bar-Shalom2011} and results in
\begin{equation}\label{eq:P_minus_k_given_k}
P^{-}_{k\mid k}=\beta_{k,0}P_{k\mid k^{-}}+\left(1-\beta_{k,0}\right)P_{k\mid k}^{c^-}+\tilde{P}_{k\mid k},
\end{equation}
where
\begin{subequations} \label{eq:ekf_subequations}
\begin{align}
   P^{c^-}_{k|k} & = \left(I-K_{k} H_{k}\right)P_{k|k^-} \\
   K_{k} & =P_{k|k^-}H_{k}^{\top}S_{k}^{-1}\\
   S_{k} & =H_{k}P_{k|k^-}H_{k}^{\top}+V_{k}R V_{k}^{\top}\\
   \tilde{P}_{k\mid k} &=K_{k}\left(\sum_{j=1}^{m_{k}}\beta_{k,j}\nu_{k,j}\nu_{k,j}^{\top}-\nu_{k}\nu_{k}^{\top}\right)K_{k}^{\top} \\   
   \nu_{k,j} & =\LogI{\G{s}}{h\left(\hat{x}_{k\mid k^-},0\right)^{-1} z_{k,j} }\\
   \nu_{k} & = \sum_{j=1}^{m_k}\beta_{k,j}\nu_{k,j} \\
   \mu^{-}_{k|k} & = K_{k}\nu_{k},
\end{align}
\end{subequations}
and where $P_{k\mid k^-}$ is the error covariance before the update step, $P^{c^-}_{k|k}$ is the error covariance of the error state $\tilde{x}^{-}_{k|k,j}$ derived in Lemma \ref{lem:lgipdaf_single_update}, $\tilde{P}_{k\mid k}$ is the covariance that captures the "spread of the means", and the Jacobians $H_{k}$ and $V_{k}$ are defined in \eqref{eq:sm_system_jacobians} and evaluated at the point $\zeta_{h_k} =\left(\hat{x}_{k \mid k^-},0 \right)$.

To reset the error state's mean to zero, the mean $\mu^{-}_{k\mid k}$ is added onto the state estimate $\hat{x}_{k\mid k^{-}}$ by forming a geodesic from $\hat{x}_{k\mid k^{-}}$ to $\hat{x}_{k\mid k}$ in the direction of $\mu^{-}_{k\mid k}$ as depicted in Fig~\ref{fig:lgipdaf_geodesic_fuse}. To derive this process, let $\tilde{x}^{-}_{k|k} = \mu^{-}_{k|k} + a_{k|k}$ where $a_{k|k} \sim \mathcal{N}\left(0, P^{c-}_{k|k} \right)$ contains the uncertainty in the error state.  Then under the assumption that $a_{k|k}$ is small and using the property of the right Jacobian defined in equation~\eqref{eq:Jr_property} we add $\mu^{-}_{k|k}$ onto $\hat{x}_{k|k^-}$ as follows:
\begin{subequations}
\begin{align}
x_{k|k} & = \hat{x}_{k\mid k^-}\ExpI{\G{x}}{ \mu^{-}_{k|k} + a_{k|k} } \\
 & \approx \underbrace{\hat{x}_{k|k^-}\ExpI{\G{x}}{ \mu^{-}_{k|k} }}_{\hat{x}_{k\mid k}}\ExpI{\G{x}}{ \underbrace{\jr{\G{x}}{\mu^{-}_{k|k}} a_{k|k}}_{\tilde{x}_{k\mid k}} },
\end{align}
\end{subequations}
where $\hat{x}_{k|k} = \hat{x}_{k|k^-}\ExpI{\G{x}}{ \mu^{-}_{k|k} }$ is the updated state estimate, and $\tilde{x}_{k \mid k} = \jr{\G{x}}{\mu^{-}_{k|k}} a_{k|k}$ is the updated and reset error state. 
The error covariance of the error state $\tilde{x}_{k|k}$ is 
\begin{align*}
    \text{cov}\left(\tilde{x}_{k|k}\right) & = \text{cov} \left( \jr{\G{x}}{\mu^{-}_{k|k}} a_{k|k} \right) \\
    & = \jr{\G{x}}{\mu^{-}_{k|k}} \text{cov} \left( a_{k|k} \right) \jr{\G{x}}{\mu^{-}_{k|k}}^\top  \\
    & = \jr{\G{x}}{\mu^{-}_{k|k}}   P^{-}_{k|k} \jr{\G{x}}{\mu^{-}_{k|k}}^\top \\
    & = P_{k|k},
\end{align*}
and therefore, $ \tilde{x}_{k|k}\sim \mathcal{N}\left(\mu_{k|k}=0, P_{k|k} \right) $.


\begin{lemma}\label{lem:lgipdaf_track_likelihood}
Suppose that Assumptions~\ref{amp:lgipdaf_system_model}-\ref{amp:lgipdaf_past_info} hold, and that $Z_k$ is the set of validated measurements, then the update for the track likelihood is  
\begin{equation} \label{eq:lgipdaf_existence_final}
   \epsilon_{k \mid k} \defeq p\left(\epsilon_{k} \mid Z_{0:k}\right) = \frac{1-\alpha_k}{1-\alpha_k  p\left(\epsilon_{k} \mid Z_{0:k^-}\right)}  p\left(\epsilon_{k} \mid Z_{0:k^-}\right),
\end{equation}
where 
\begin{equation} \label{eq:lgipdaf_alpha_weights} 
    \alpha_{k} = 
    \begin{cases}
    P_D P_G, & \text{if } m_k=0 \\
    P_D P_G - \sum_{j=1}^{m_{k}}\mathcal{L}_{k,j}, & \text{otherwise}
    \end{cases},
\end{equation}
and $\mathcal{L}_{k,j}$ is defined in equation \eqref{eq:lgipdaf_prob_L}.
\end{lemma}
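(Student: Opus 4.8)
The plan is to obtain $\epsilon_{k\mid k}=p(\epsilon_k\mid Z_{0:k})$ by a single application of Bayes' rule, treating the new measurement set $Z_k$ as the incoming evidence and the predicted track likelihood $\epsilon_{k\mid k^-}=p(\epsilon_k\mid Z_{0:k^-})$ as the prior. Writing $Z_{0:k}=\{Z_k,Z_{0:k^-}\}$ and marginalizing the denominator over the two mutually exclusive hypotheses $\epsilon_k$ and $\epsilon_k=F$ gives
\begin{equation*}
\epsilon_{k\mid k}=\frac{\Lambda_k^{1}\,\epsilon_{k\mid k^-}}{\Lambda_k^{1}\,\epsilon_{k\mid k^-}+\Lambda_k^{0}\,(1-\epsilon_{k\mid k^-})},
\end{equation*}
where $\Lambda_k^{1}=p(Z_k\mid\epsilon_k,Z_{0:k^-})$ and $\Lambda_k^{0}=p(Z_k\mid\epsilon_k=F,Z_{0:k^-})$. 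Dividing through by $\Lambda_k^{0}$ shows that everything hinges on the likelihood ratio $r_k\defeq\Lambda_k^{1}/\Lambda_k^{0}$, since then $\epsilon_{k\mid k}=r_k\epsilon_{k\mid k^-}/(r_k\epsilon_{k\mid k^-}+1-\epsilon_{k\mid k^-})$; it therefore suffices to prove $r_k=1-\alpha_k$.

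To evaluate the target-present likelihood $\Lambda_k^{1}$ I would sum over the association events $\theta_{k,j}$ of Lemma~\ref{lem:lgipdaf_association_event}. The event $j=0$ (no validated target measurement) contributes the detection/gating factor $1-P_DP_G$ with all $m_k$ points treated as clutter. For $j>0$ the factor is $P_DP_G$, the target point $z_{k,j}$ carries the in-gate density $P_G^{-1}p(z_{k,j}\mid\psi,\epsilon_k,Z_{0:k^-})$ from \eqref{eq:lgipdaf_prob_meas_val}, and the remaining $m_k-1$ points are clutter. The target-absent likelihood $\Lambda_k^{0}$ is the all-clutter term, so the common clutter factors cancel when forming $r_k$.

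The crux, and the step I expect to be the main obstacle, is the clutter bookkeeping that collapses each $j>0$ term to exactly $\mathcal{L}_{k,j}=\tfrac{P_D}{\lambda}p(z_{k,j}\mid\psi,\epsilon_k,Z_{0:k^-})$. The cleanest route is to model the clutter as a Poisson point process, so the likelihood of $\phi$ false points is proportional to $\lambda^{\phi}$; replacing one clutter point by the target (dropping $\lambda^{m_k}\to\lambda^{m_k-1}$) then produces the single factor $1/\lambda$, while the detection factor $P_DP_G$ cancels the $P_G^{-1}$ in the in-gate target density, leaving $P_D$. If one instead works from the Poisson PMF $\mu_F$ with measurements conditioned on their number, the same $1/\lambda$ must be recovered by combining the ratio $\mu_F(m_k-1)/\mu_F(m_k)=m_k/(\lambda\mathcal{V}_k)$ with the factorial and volume factors of the unordered-set convention; keeping these consistent is the only delicate point. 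Either way the result is $r_k=1-P_DP_G+\sum_{j=1}^{m_k}\mathcal{L}_{k,j}$ for $m_k>0$, and $r_k=1-P_DP_G$ for $m_k=0$ (where $Z_k=\emptyset$ and the sum is empty), which are precisely $1-\alpha_k$ for the two branches of \eqref{eq:lgipdaf_alpha_weights}.

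Finally I would substitute $r_k=1-\alpha_k$ and simplify: the denominator $(1-\alpha_k)\epsilon_{k\mid k^-}+1-\epsilon_{k\mid k^-}$ reduces to $1-\alpha_k\epsilon_{k\mid k^-}$, yielding the claimed update $\epsilon_{k\mid k}=(1-\alpha_k)\epsilon_{k\mid k^-}/(1-\alpha_k\epsilon_{k\mid k^-})$. This closing step is routine algebra once the likelihood ratio has been established.
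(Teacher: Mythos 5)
Your proposal is correct, and it arrives at \eqref{eq:lgipdaf_existence_final}--\eqref{eq:lgipdaf_alpha_weights} by the same underlying Bayesian computation as the paper, but with a genuinely different organization. The paper conditions explicitly on the cardinality $m_k$ and runs a two-stage update: it first computes the cardinality-only posterior $p\left(\epsilon_k \mid m_k, Z_{0:k^-}\right)$ via \eqref{eq:lgipdaf_prom_ek_t_mk}, then combines it with $p\left(Z_k \mid \epsilon_k, m_k, Z_{0:k^-}\right)$ from \eqref{eq:lgipdaf_prob_zk_e} and assembles the full normalizer \eqref{eq:lgipdaf_prob_zk_final} explicitly, substituting the Poisson density only at the very end. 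You instead fold $m_k$ into the evidence, do a single Bayes step, and reduce everything to the likelihood ratio $r_k=\Lambda_k^1/\Lambda_k^0$, which lets the common all-clutter factors cancel before any normalizer is computed. Your reduction checks out: writing $\Lambda_k^1=p\left(Z_k\mid m_k,\epsilon_k,Z_{0:k^-}\right)p\left(m_k\mid\epsilon_k,Z_{0:k^-}\right)$ with the paper's \eqref{eq:lgipdaf_prob_zk_e} and \eqref{eq:lgipdaf_num_meas_prob}, and $\Lambda_k^0=\mathcal{V}_k^{-m_k}\mu_F\left(m_k\right)$, one gets $r_k=\left(1-P_DP_G\right)+\frac{\mathcal{V}_k\,\mu_F\left(m_k^-\right)}{m_k\,\mu_F\left(m_k\right)}P_D\sum_{\ell=1}^{m_k}p\left(z_{k,\ell}\mid\psi,\epsilon_k,Z_{0:k^-}\right)$, which under the Poisson ratio $\mu_F\left(m_k^-\right)/\mu_F\left(m_k\right)=m_k/\left(\lambda\mathcal{V}_k\right)$ collapses to $1-P_DP_G+\sum_{j=1}^{m_k}\mathcal{L}_{k,j}=1-\alpha_k$ (and to $1-P_DP_G$ when $m_k=0$), after which the algebra $\left(1-\alpha_k\right)\epsilon+1-\epsilon=1-\alpha_k\epsilon$ gives the claimed update. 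You also correctly identified the one delicate point, the clutter bookkeeping: your $1/m_k$ and $\mu_F$-ratio accounting matches what the paper's substitutions do implicitly, and both implicitly use the convention that the false-measurement count in the gate is Poisson with rate $\lambda\mathcal{V}_k$ (the sign typo in \eqref{eq:lgipdaf_poisson_distribution} is immaterial since only ratios of $\mu_F$ enter). What each route buys: your likelihood-ratio form is shorter, avoids the heavy normalizer algebra, and makes transparent exactly why the Poisson assumption produces the clean $\mathcal{L}_{k,j}=\frac{P_D}{\lambda}p\left(z_{k,j}\mid\psi,\epsilon_k,Z_{0:k^-}\right)$; the paper's two-stage route is longer but exhibits the interpretable intermediate quantity $p\left(\epsilon_k\mid m_k,Z_{0:k^-}\right)$, showing that the mere count of validated measurements already shifts the track likelihood before their locations are used.
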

\par\noindent{\em Proof:}
See Appendix~\ref{proof:lgipdaf_track_likelihood}.

Using the track likelihood, a track is either rejected after the update step if the track likelihood is below the threshold $\tau_{RT}$, confirmed to represent a target if the track likelihood is above the threshold $\tau_{CT}$, or neither rejected nor confirmed until more information is gathered with new measurements. In the case of tracking a single target, the confirmed track with the best track likelihood is assumed to represent the target. In the case of tracking multiple targets, every confirmed track is assumed to represent a different target. 

Therefore, we have the following main theorem that summarizes the results.
\begin{theorem}\label{thm:lgipdaf_update}
If Assumptions \ref{amp:lgipdaf_system_model}-\ref{amp:lgipdaf_past_info} hold and $Z_k = \left\{ z_{k,j} \right\}_{j=1}^{m_k}$ denotes the set of $m_k$ validated measurements at time $t_k$ and $\hat{x}_{k \mid k^-}$, $P_{k\mid k^-}$ and $p\left( \epsilon_{k} \mid Z_{0:k^-}\right)$ denote the track's state estimate, error covariance and track likelihood at time $t_k$ and conditioned on the previous measurements, then the track's updated state estimate, error covariance and track likelihood are

\begin{align}
    \hat{x}_{k|k} &= \hat{x}_{k|k^-}\ExpI{\G{x}}{  \mu^{-}_{k|k} } \\
    P_{k|k} &= \jr{\G{x}}{\mu^{-}_{k|k}} P^{-}_{k|k} \jr{\G{x}}{\mu^{-}_{k|k}}^\top \\
    p\left(\epsilon_{k} \mid Z_{0:k}\right) &= \frac{1-\alpha_k}{1-\alpha_k  p\left(\epsilon_{k} \mid Z_{0:k^-}\right)}  p\left(\epsilon_{k} \mid Z_{0:k^-}\right)
\end{align}
where $\mu^{-}_{k|k}$ is given in Equation~\eqref{eq:ekf_subequations}, $P^{-}_{k|k}$ is given in Equation~\eqref{eq:P_minus_k_given_k}, and $\alpha_k$ is defined in equation~\eqref{eq:lgipdaf_alpha_weights}.
\end{theorem}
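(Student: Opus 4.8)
The plan is to recognize that this theorem is a consolidation of results already established throughout the update-step development, so the proof reduces to assembling three independent chains of reasoning—one for the state estimate, one for the covariance, and one for the track likelihood—rather than introducing any new machinery. I would state at the outset that the three output quantities are produced separately: the first two arise from fusing the split tracks and then resetting the error-state mean to zero, while the third follows immediately from Lemma~\ref{lem:lgipdaf_track_likelihood}.

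For the state estimate and covariance, I would begin from the total-probability expression \eqref{eq:lgipdaf_prob_state_cond_exist_meas}, which writes $p\left(x_{k}\mid\epsilon_k,Z_{0:k}\right)$ as the $\beta_{k,j}$-weighted mixture of the split-track probabilities supplied by Lemma~\ref{lem:lgipdaf_single_update}, with the weights $\beta_{k,j}$ given by Lemma~\ref{lem:lgipdaf_association_event}. Passing to the Cartesian algebraic space via the relation \eqref{eq:prob_x_def} and applying the smoothing property of conditional expectations yields the fused mean $\mu^{-}_{k|k}$ of \eqref{eq:lgipdaf_expected_value} and the fused covariance $P^{-}_{k|k}$ of \eqref{eq:P_minus_k_given_k}, both expressed before the mean is reset. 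I would then reset the mean to zero by forming the geodesic $\hat{x}_{k|k}=\hat{x}_{k|k^-}\ExpI{\G{x}}{\mu^{-}_{k|k}}$ and propagating the residual uncertainty through the right Jacobian, exactly as in the displayed computation immediately preceding the theorem statement, giving $P_{k|k}=\jr{\G{x}}{\mu^{-}_{k|k}}P^{-}_{k|k}\jr{\G{x}}{\mu^{-}_{k|k}}^{\top}$.

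For the track likelihood I would simply invoke Lemma~\ref{lem:lgipdaf_track_likelihood}, which already furnishes the stated recursion with $\alpha_k$ as defined in \eqref{eq:lgipdaf_alpha_weights}; no additional argument is required, since the summary equation reproduces the lemma verbatim.

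The main obstacle lies not in any single algebraic step but in justifying the legitimacy of the mean-reset: the covariance transformation $P_{k|k}=\jr{\G{x}}{\mu^{-}_{k|k}}P^{-}_{k|k}\jr{\G{x}}{\mu^{-}_{k|k}}^{\top}$ rests on the first-order right-Jacobian identity \eqref{eq:Jr_property}, and hence on the assumption that the residual $a_{k|k}$ is small, i.e. on the concentrated-Gaussian-density hypothesis. I would therefore be careful to note that the equality holds only to first order and that the Gaussian form is preserved under the high-SNR/CGD approximation, so that the reset error state is consistently distributed as $\tilde{x}_{k|k}\sim\gauss{0}{P_{k|k}}$ and the three assembled equations indeed constitute the posterior track.
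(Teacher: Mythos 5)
Your proposal is correct and follows essentially the same route as the paper, whose proof of Theorem~\ref{thm:lgipdaf_update} simply cites Lemmas~\ref{lem:sm_jacobians}, \ref{lem:lgipdaf_val_ttp}, \ref{lem:lgipdaf_association_event}, \ref{lem:lgipdaf_single_update}, and~\ref{lem:lgipdaf_track_likelihood} together with the fusion-and-reset computation displayed in the text just before the theorem. Your added remark that the covariance reset $P_{k|k}=\jr{\G{x}}{\mu^{-}_{k|k}}P^{-}_{k|k}\jr{\G{x}}{\mu^{-}_{k|k}}^{\top}$ holds only to first order under the concentrated-Gaussian hypothesis is a fair and accurate sharpening of what the paper leaves implicit, not a deviation from its argument.
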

\par\noindent{\em Proof:}
Follows directly from Lemmas~\ref{lem:sm_jacobians},  \ref{lem:lgipdaf_val_ttp}, \ref{lem:lgipdaf_association_event}, \ref{lem:lgipdaf_single_update}, and \ref{lem:lgipdaf_track_likelihood}. 
\subsection{Track Initialization}\label{sec:lgipdaf_track_init}

A track can be initialized in a variety of ways. We refer the reader to \cite{Bar-Shalom2011} for some of the common methods, and we adapt one of the methods to Lie groups in this section. We assume that the system model is as defined by equations~\eqref{eq:sm_system_model}, \eqref{eq:sm_observation_function} and \eqref{eq:sm_state_transition_function}.

Let $z_k$ and $z_i$ denote two non-track-associated measurements from distinct times where $z_k$ is a measurement from the current time and $z_i$ is a measurement from a previous time spatially close to the measurement $z_k$. We select two measurements that are close together since a moving target forms a continuous trajectory and produces measurements spatially close together. 

According to the observation function in Equation~\eqref{eq:sm_observation_function}, the target's pose is observed.  Therefore, we can estimate that $\hat{g}_k \approx z_k$. According to the state transition function in equation~\eqref{eq:sm_state_transition_function}, the target is assumed to have constant velocity. We solve for the velocity using the equation
\begin{equation}\label{eq:lgipdaf_vel_approx}
    v_k \approx  \frac{\LogI{\G{s}}{z_i ^{-1} z_k }}{t_k -t_i},
\end{equation}
where $t_k - t_i$ is the time interval between the measurements. 
The tracks current state estimate is set to $\hat{x}_{k \mid k} \approx \left( \hat{g}_{k \mid k}, \hat{v}_{k \mid k} \right)$ where $ \hat{g}_{k \mid k} = z_{k}$ and $ \hat{v}_k$ is solved for by equation~\eqref{eq:lgipdaf_vel_approx}.

The initial error covariance is specified by the application. As an example, for the simple example discussed in Section~\ref{sec:lgipdaf_examples}, we set the initial error covariance to $P = 5I_{2n \times 2n}$ to reflect the uncertainty in the initial state estimate and the high velocity of the moving car. The initial track likelihood can be set to $\epsilon_{k \mid k} = 0.5$ to reflect the fact that the new track is just as likely to represent clutter as it is to represent the target. 
\section{Example}\label{sec:lgipdaf_examples}

We demonstrate the LG-IPDAF in simulation by tracking a car that evolves on \SE{2} and whose system model is described in Section \ref{sec:lgipdaf_system_model}. The car is given four trajectories: a circular trajectory with a 10 meter radius, a Zamboni-like trajectory consisting of curves and straight lines, a spiral trajectory, and a straight-line trajectory. In the first trajectory the car is given a translational velocity of $10~m/s$ and an angular velocity of $1~rad/s$.  In the second trajectory the car is give a translational velocity of $10~m/s$ and an angular velocity that varies between zero to $1.5~rads/s$ to create the U-turns.  In the third trajectory the target is given an angular velocity of $1~rad/s$ and a translational velocity that increases from two to $10~m/s$.  In the fourth trajectory the car is given a translational velocity of $7~m/s$. We selected these trajectories to compare how well a constant-velocity model on \SE{2} (\SE{2}-CV) implemented using the LG-IPDAF versus a constant-velocity, linear model on $\mathbb{R}^2$ (LTI-CV) implemented using the IPDAF tracks a target that undergoes straight-line and non-straight-line motion. The sensor is modeled as being able to detect the car's pose with measurement noise covariance $R=\text{diag}\left(10^{-1},10^{-1},10^{-2}\right)$ and a surveillance region of 140 by 140 meters.

We set the following LG-IPDAF and IPDAF parameters: the spatial density of false measurements to $\lambda = 0.01$ implying that there is an average of 196 false measurements every sensor scan, the car's probability of detection to $P_D = 0.9$, gate probability to $P_G = 0.9$, track likelihood confirmation threshold to $\tau_{CT} = 0.7$, and track likelihood rejection threshold to $\tau_{RT} = 0.1$. We initialize new tracks from two unassociated neighboring measurements from different times with an initial error covariance of $P=5I_{2n \times 2n}$ and track likelihood to $\epsilon=0.2$. The process noise of the initialized tracks is $Q=\text{diag}\left( 1,1,0.1,1,1,0.1\right)dt$, where the simulated time step $dt=0.1$. The initialized process noise is large enough to account for the car's acceleration, but small enough to prevent many false measurements from being associated with the track. We also use a low initial track likelihood value due to the high number of false measurements per sensor scan making it more probable that an initialized track does not represent the target.

For each model (the \SE{2}-CV and LTI-CV models) and trajectory, we conduct a Monte-Carlo simulation consisting of 100 iterations, each 30 seconds long, and compute three statistical measures: track probability of detection (TPD), average Euclidean error (AEE) in position, and average confirmation time (ACT). Track probability of detection is the probability that the target is being tracked \cite{Gorji2011}. The average Euclidean error, is the confirmed track's average error in position. The average confirmation time is the average amount of time until the first track is confirmed. 

A depiction of the simulation for the circular, Zamboni, spiral, and straight-line trajectories are shown in Figs.~\ref{fig:lgipdaf_circular_zoom}, \ref{fig:lgipdaf_meander}, \ref{fig:lgipdaf_spiral} and \ref{fig:lgipdaf_straight}. The target's trajectory is shown as a black line, and the target's final pose is represented as the large black arrowhead. The confirmed tracks' trajectories are shown as piece-wise, green lines and the confirmed track's final pose is represented as a large green arrowhead. The target's measurements from the initial to the final time are represented as magenta dots. The red arrowheads represent different unconfirmed and non-rejected track's at the final time step of the simulation. The blue asterisks represent the false measurements received at the last time step of the simulation. During non-straight-line motions, the LTI-CV model struggles to track the target. This is shown in two ways: first by there not being a green trajectory (confirmed track's trajectory) during parts of the target's non-straight-line motion, and second the green trajectories drifting from the black trajectory (target's trajectory). The second case is prevalent in the circular trajectory for the LTI model indicating that many LTI tracks were initialized and confirmed, but quickly drifted off the circular trajectory and being pruned once their track likelihood fell below the pruning threshold.. 

The statistical measures from the experiment are in Table~\ref{tab:lgipdaf_results}. As stated in the table, the \SE{2}-CV model tracked the target significantly better for the circular,  Zamboni and spiral trajectories, and about the same as the LTI-CV model for the straight-line trajectory according to the track probability of detection measure. This is because the LTI-CV model struggles to track non-straight-line motion. In Fig.~\ref{fig:meander_zoom_lti} you can see that during the straight segments, the LTI-CV model tracks the target well, but looses the target as it turns. Table~\ref{tab:lgipdaf_results} also shows that the \SE{2}-CV had less error than the LTI-CV model.  However, on average the LTI-CV model had faster track confirmation time. We believe that this is due to the dimension of the measurement and state space. The smaller the dimension, the smaller the initial volume of the validation region (i.e. the initial overall uncertainty in the state estimate was less). The initial smaller uncertainty allowed the track likelihood for the LTI-CV model to increase faster than for the \SE{2}-CV model.

Since the main objective of this paper is deriving and presenting the LG-IPDAF algorithm, we did not do a detailed simulated analysis on the effects of changing the parameters used by the algorithm and in the simulation. The purpose of the experimental result is to show that there are cases where an LTI-CV model (or the original IPDAF algorithm) is insufficient. In essence, the LTI-CV model fails when it is no longer able to properly associate the correct measurement to the track. This occurs frequently when the target's position moves either outside of the LTI-CV model's validation region or if a series of false measurements are deemed more probable than the correct measurement according to the validation region. In the case of the circular trajectory, the target's position is often outside of the LTI-CV model's validation region causing the track to become lost. This is because the LTI-CV model predicts a straight-line motion that is tangent to the target's circular motion, causing the model to slowly drift away from the target's true position. 

The disadvantage of the LG-IPDAF is that it is more complex to implement and it is more computationally expensive compared to the IPDAF.
To help demonstrate the difference in computational complexity we did an analytical and experimental analysis of the two algorithms used in our experiment. On average, a single iteration of the IPDAF algorithm using the LTI-CV model took 4.4~seconds, whereas the LG-IPDAF algorithm using the \SE{2}-CV model required 13.2~seconds per iteration.  For the analytical analysis, let $x_d$ denote the dimension of the state, $g_d=x_d/2$ the dimension of the pose, $z_d$ the dimension of the measurement, $a$ the number of measurements received in a single sensor scan, and $m$ denote the number of associated measurements, then the approximate computational complexity of the IPDAF is $\mathcal{O}\left(z_d^2\left(a+3m\right) + 5x_d^3 \right)$ and the approximate computational complexity of the LG-IPDAF is $\mathcal{O}\left(z_d^3\log{\left(z_d\right)}\left(a+m\right)+ 2g_{d}^{3}\log\left(g_{d}\right)\right)$. 

The main difference in complexity between the two algorithms is the matrix logarithm and exponential functions which have complexity $\mathcal{O}\left(j^3\log\left(j\right)\right)$ where $j^2$ is the number of elements in the matrix. In the experiment, the data association step took the longest due to the number of measurements and dominated the computational complexity. If we compare just the computational complexity of the data association step by taking the ratio of the two we get \begin{equation*}
    \frac{3^3\log{\left(3\right)}a}{2^2a}\approx 3.3,
\end{equation*} 
where 3 is the dimension of the measurement space used by the \SE{2}-CV model since it includes orientation, and 2 is the dimension of the measurement space used by the LTI-CV model. This ratio is approximately the ratio of the simulated time $13.2/4.4=3$ seconds.

This computational complexity analysis is done without any optimization. Some of the more interesting Lie groups such as $\SE{n}$ and $\SO{n}$ have much more efficient methods to calculate the matrix exponential and logarithm, for example, the Rodriguez formula for $\SO{3}$.  

Even though the LG-IPDAF is more computationally complex, as shown by the experiments there are occasions when the IPDAF is inadequate and cannot properly track objects that have non-straight-line trajectories. This usually occurs in the presence of dense clutter or when the rate of receiving new measurements is too low to compensate for the target's non-straight-line motion resulting in the target's position drifting outside the track's validation region. It is in these instances that we recommend the use of the LG-IPDAF.



\begin{figure}[tbh]
     \centering
      \begin{subfigure}[t]{0.48\linewidth}
         \centering
         \includegraphics[width=\linewidth]{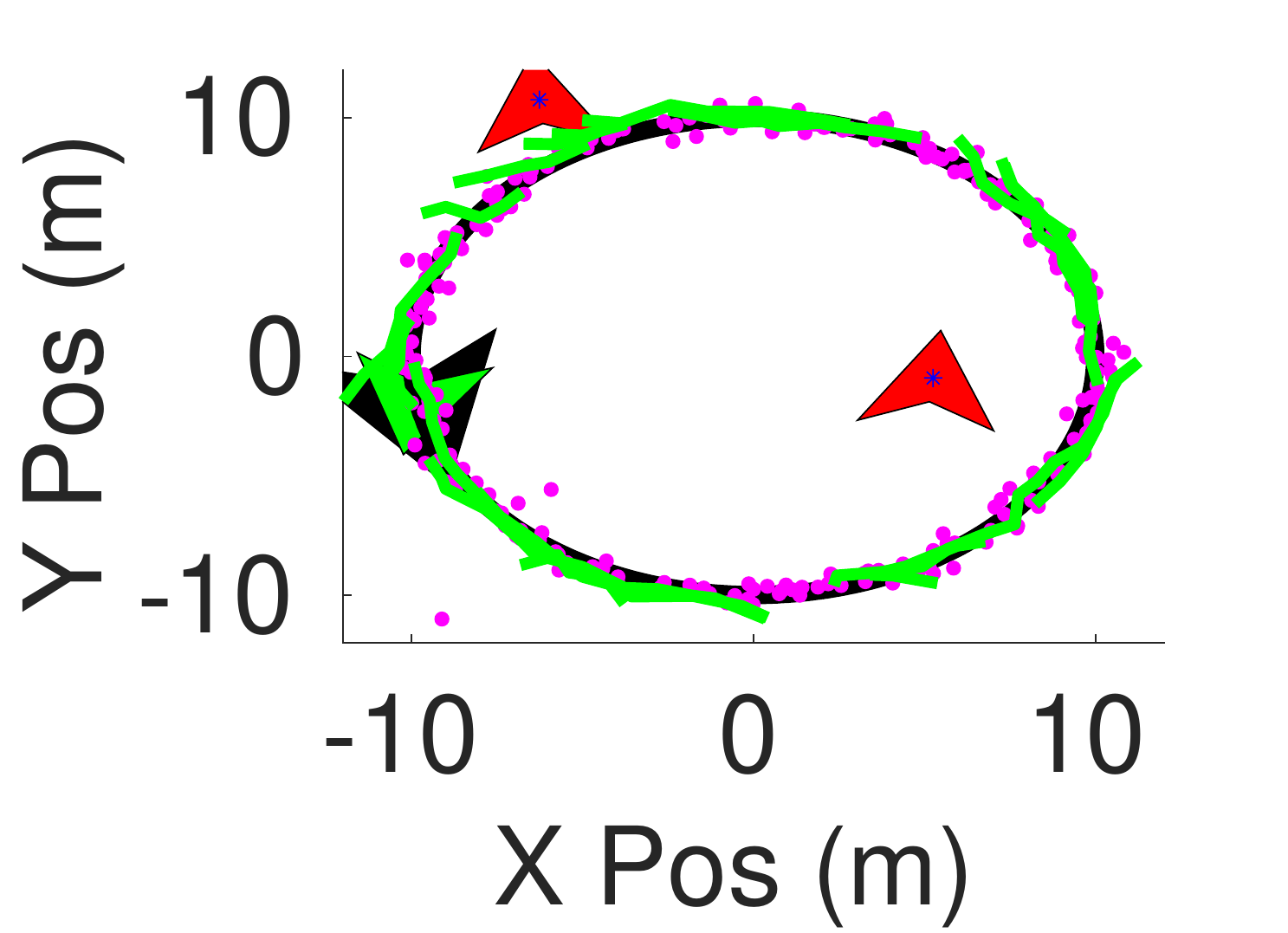}
         \caption{LTI-CV model}
         \label{fig:circular_zoom_lti}
     \end{subfigure}
     \hfill
     \begin{subfigure}[t]{0.48\linewidth}
         \centering
         \includegraphics[width=\linewidth]{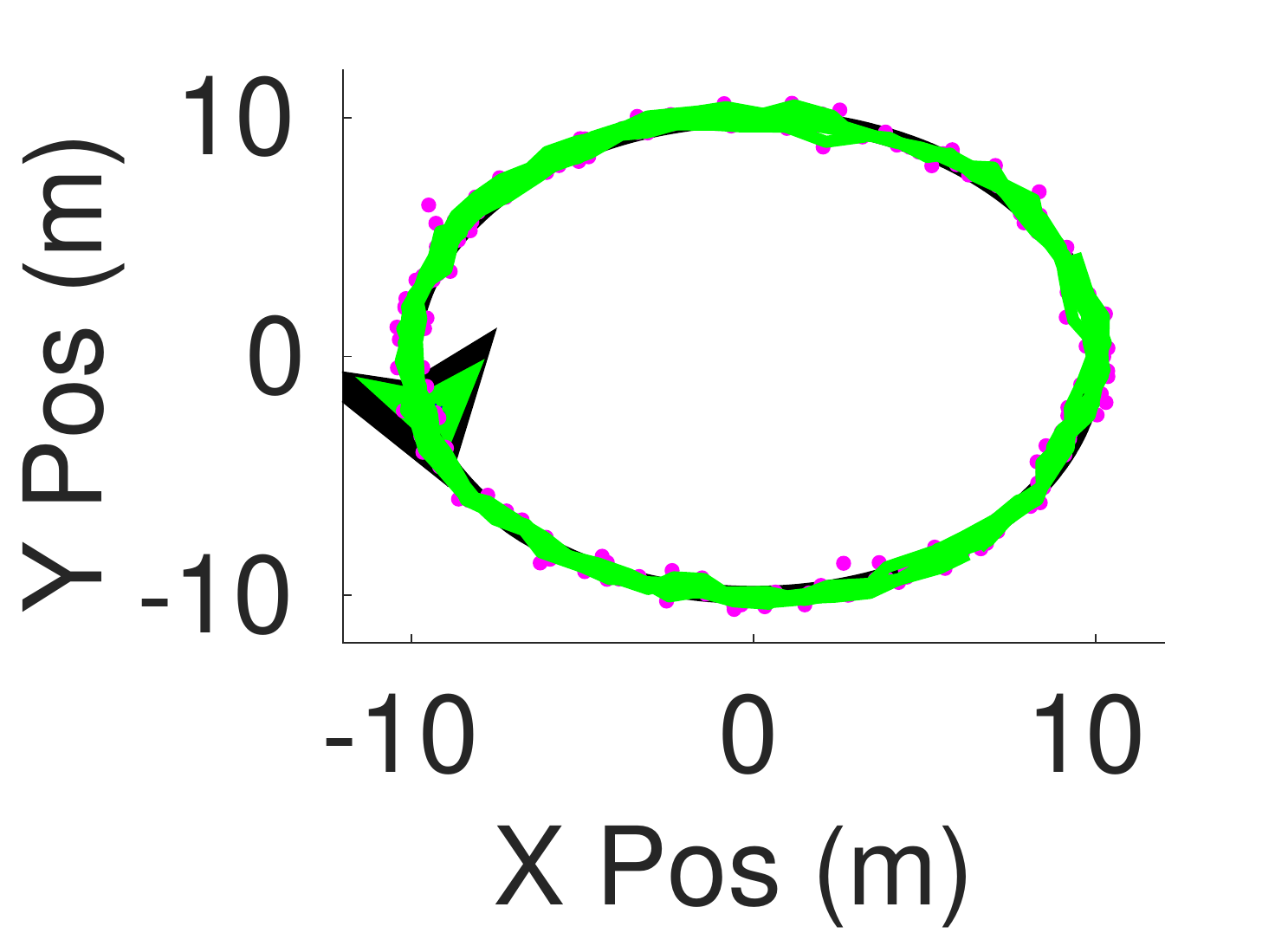}
         \caption{SE(2)-CV model}
         \label{fig:circular_soom_se3}
     \end{subfigure}
    
        \caption{Plots of the zoomed in circular trajectories for the LTI-CV and SE(2)-CV models.
        }
        \label{fig:lgipdaf_circular_zoom}
\end{figure}

\begin{figure}[tbh]
     \centering
     \begin{subfigure}[h]{0.48\linewidth}
         \centering
         \includegraphics[width=\linewidth]{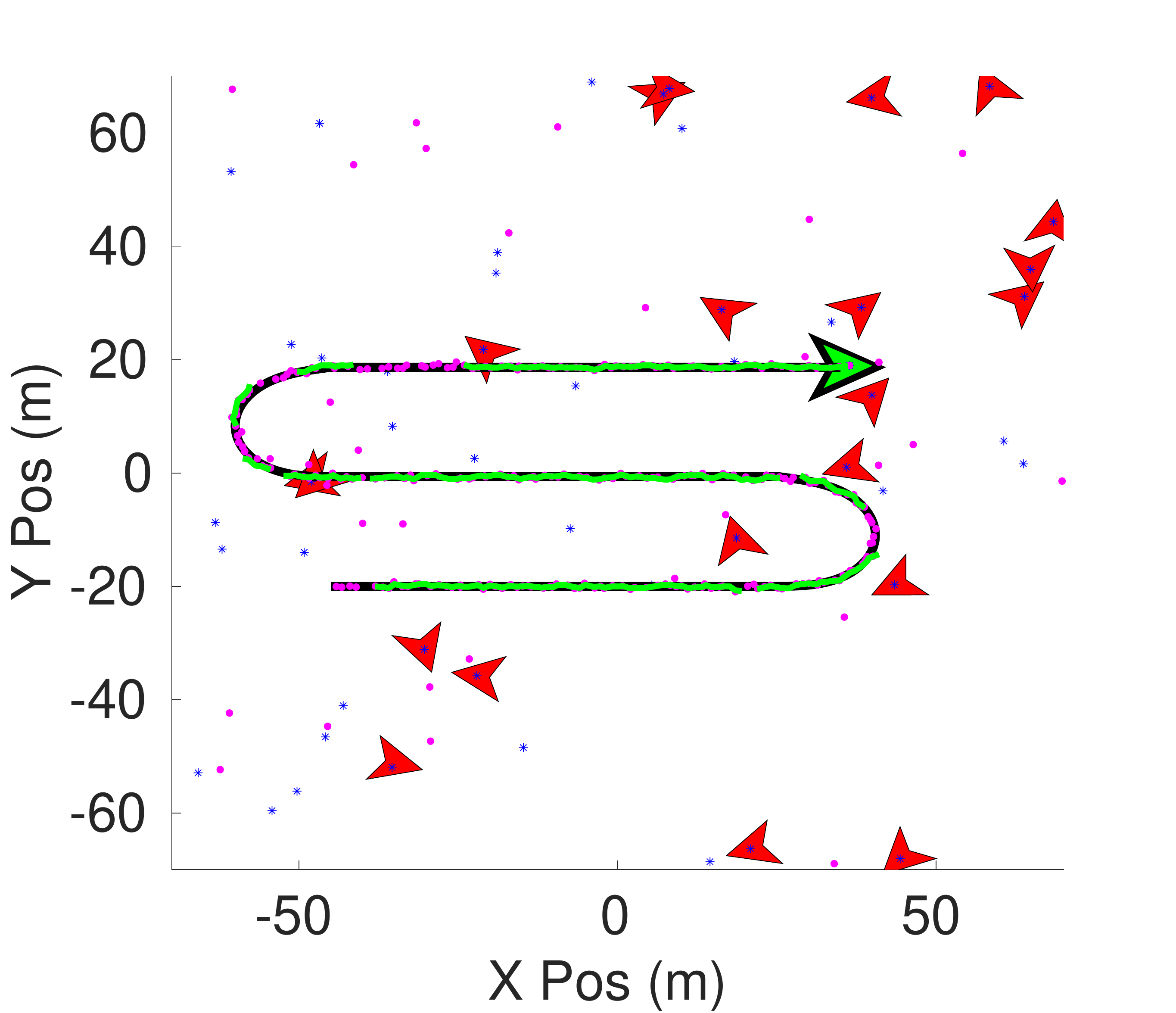}
         \caption{LTI-CV model}
         \label{fig:meander_zoom_lti}
     \end{subfigure}
     \hfill
     \begin{subfigure}[h]{0.48\linewidth}
         \centering
         \includegraphics[width=\linewidth]{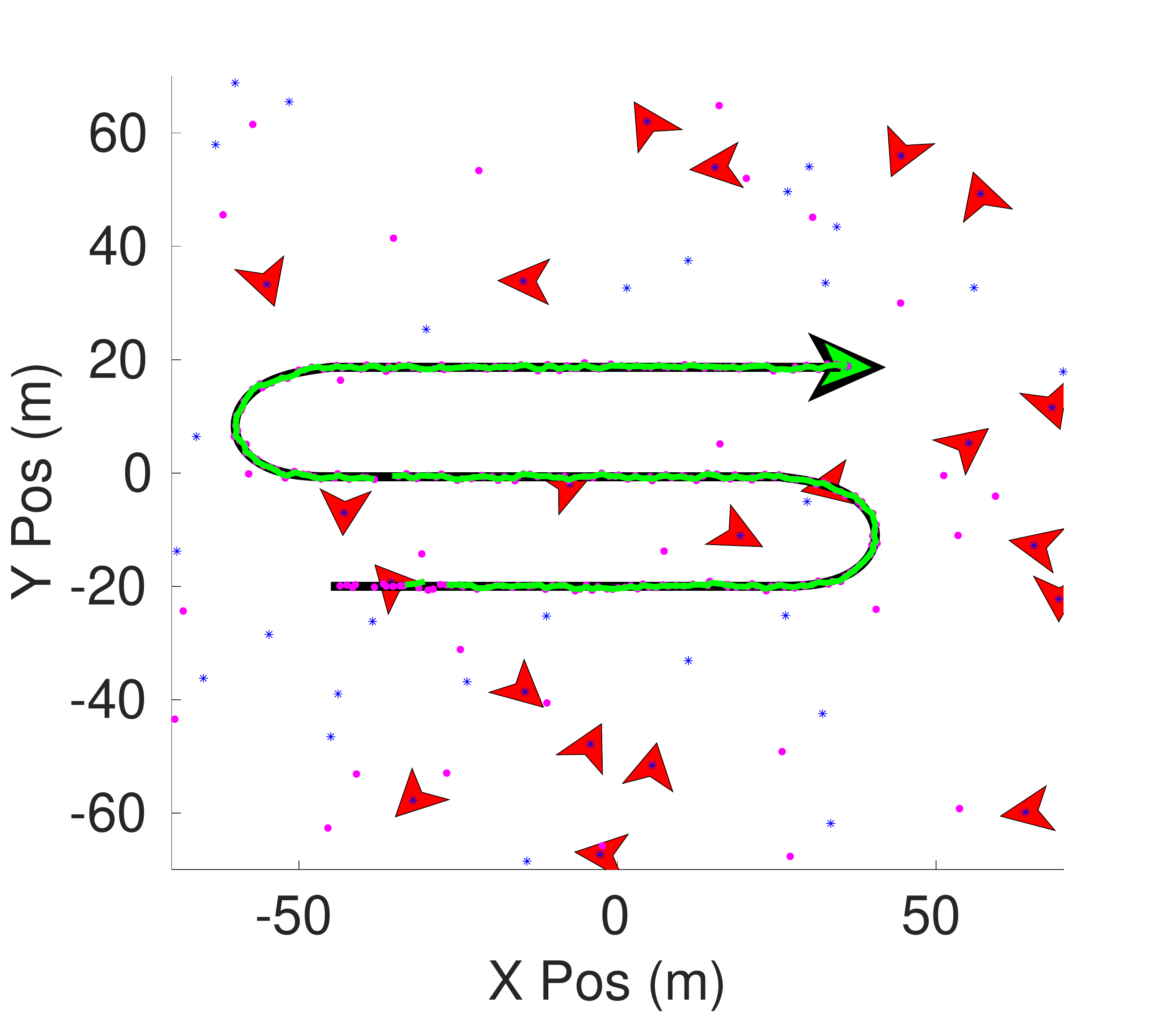}
         \caption{SE(2)-CV model}
         \label{fig:meander_soom_se3}
     \end{subfigure}

        \caption{Plots of the Zamboni-like trajectories for the LTI-CV and SE(2)-CV models.
        }
        \label{fig:lgipdaf_meander}
\end{figure}

\begin{figure}[tbh]
     \centering
     \begin{subfigure}[h]{0.48\linewidth}
         \centering
         \includegraphics[width=\linewidth]{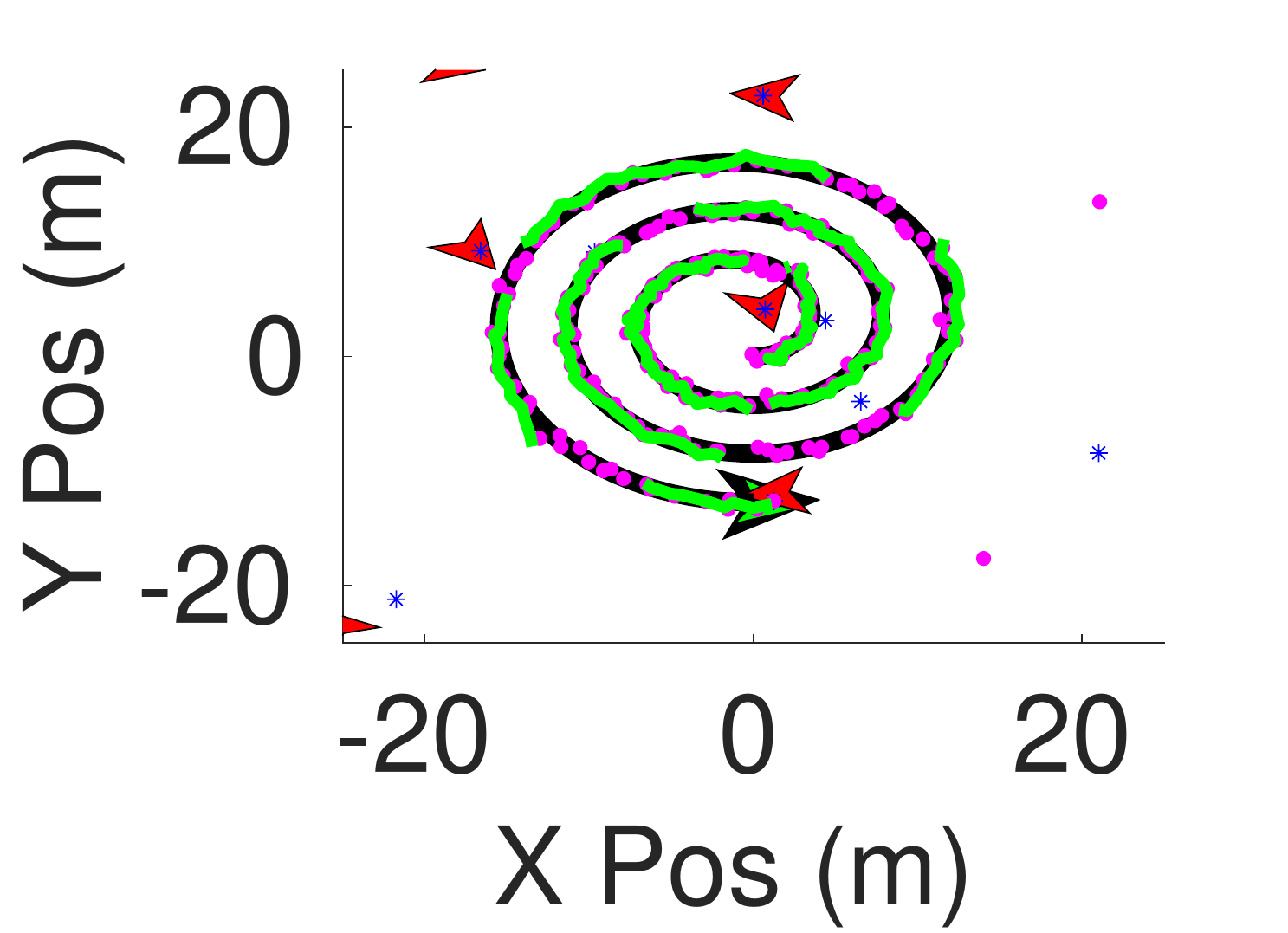}
         \caption{LTI-CV model}
         \label{fig:spiral_zoom_lti}
     \end{subfigure}
     \hfill
     \begin{subfigure}[h]{0.48\linewidth}
         \centering
         \includegraphics[width=\linewidth]{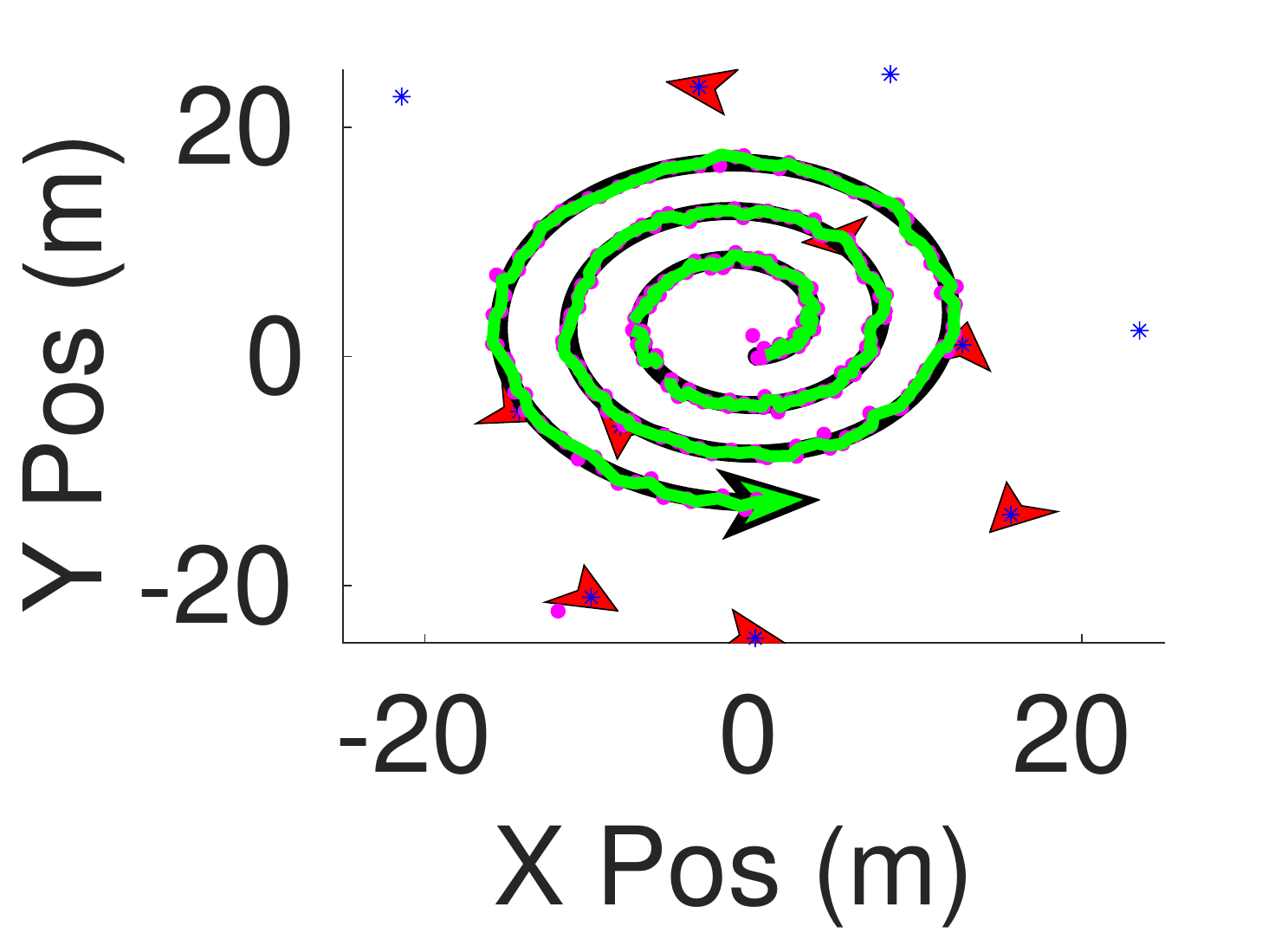}
         \caption{SE(2)-CV model}
         \label{fig:spiral_soom_se3}
     \end{subfigure}

        \caption{Plots of the Spiral-like trajectories for the LTI-CV and SE(2)-CV models.
        }
        \label{fig:lgipdaf_spiral}
\end{figure}

\begin{figure}[tbh]
     \centering
      \begin{subfigure}[b]{0.48\linewidth}
         \centering
         \includegraphics[width=\linewidth]{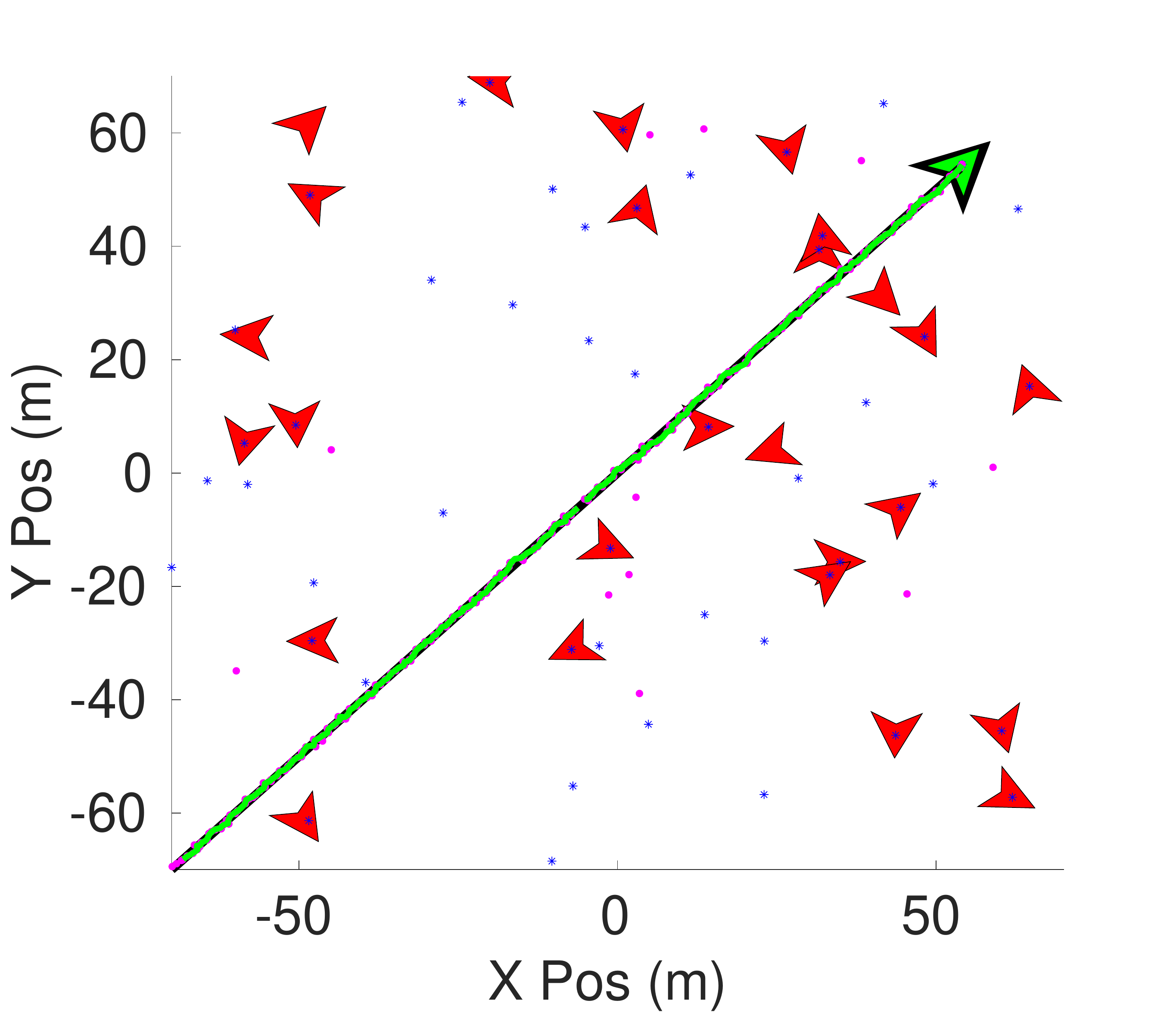}
         \caption{LTI-CV model}
         \label{fig:straight_zoom_lti}
     \end{subfigure}
     \hfill
     \begin{subfigure}[b]{0.48\linewidth}
         \centering
         \includegraphics[width=\linewidth]{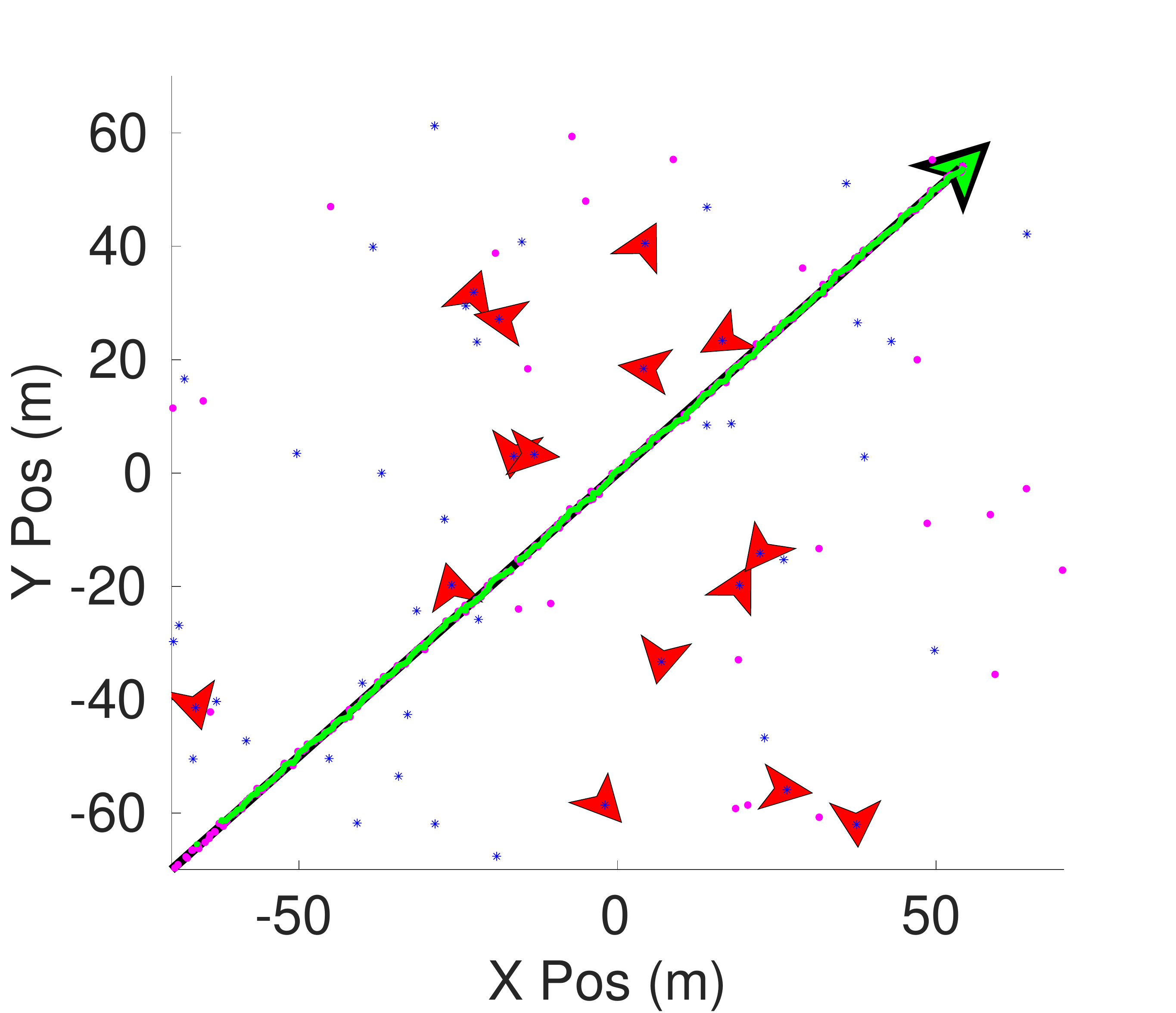}
         \caption{SE(2)-CV model}
         \label{fig:straight_soom_se3}
     \end{subfigure}
        \caption{Plots of straight-line trajectories for the LTI-CV and SE(2)-CV models.
        }
        \label{fig:lgipdaf_straight}
\end{figure}

\begin{table}[tbh]
\caption{Statistical measures from the experiment}
\label{tab:lgipdaf_results}
\resizebox{\linewidth}{!}{%
\begin{tabular}{|l|lll|lll|lll|lll|}
\hline
 & \multicolumn{3}{c|}{Circular} & \multicolumn{3}{c|}{Zamboni} & \multicolumn{3}{c|}{Spiral} & \multicolumn{3}{c|}{Straight} \\ \hline
 & \multicolumn{1}{l|}{TPD} & \multicolumn{1}{l|}{AEE} & ACT & \multicolumn{1}{l|}{TPD} & \multicolumn{1}{l|}{AEE} & ACT &
 \multicolumn{1}{l|}{TPD} & \multicolumn{1}{l|}{AEE} & ACT &\multicolumn{1}{l|}{TPD} & \multicolumn{1}{l|}{AEE} & ACT \\ \hline
SE(2) & \multicolumn{1}{l|}{\textbf{0.96}} & \multicolumn{1}{l|}{\textbf{0.29}} & 0.90 & \multicolumn{1}{l|}{\textbf{0.96}} & \multicolumn{1}{l|}{\textbf{0.29}} & 0.87 & \multicolumn{1}{l|}{\textbf{0.96}} & \multicolumn{1}{l|}{\textbf{0.29}} & 0.83 & \multicolumn{1}{l|}{0.96} & \multicolumn{1}{l|}{\textbf{0.29}} & 0.74 \\ \hline
LTI & \multicolumn{1}{l|}{0.54} & \multicolumn{1}{l|}{1.32} & \textbf{0.49} & \multicolumn{1}{l|}{0.88} & \multicolumn{1}{l|}{0.45} & \textbf{0.44} & \multicolumn{1}{l|}{{0.81}} & \multicolumn{1}{l|}{0.39} & \textbf{0.50} & \multicolumn{1}{l|}{\textbf{0.97}} & \multicolumn{1}{l|}{0.34} & \textbf{0.42} \\ \hline
\end{tabular}%
}

\end{table}



\section{Conclusion} \label{sec:lgipdaf_conclusion}

In this paper we have shown how to adapt the IPDA filter to connected, unimodular Lie groups. In our example, we showed that a constant-velocity target model on \SE{2} is significantly better able to track non-linear motion in dense clutter than the LTI-CV model. This is because the LTI-CV model expresses only linear motion and cannot predict non-linear motion such as curvy and circular trajectories. However, the \SE{2}-CV model expresses non-linear and linear motion, and thus it is able to predict both types of motion and track them well. We have also shown that the LG-IPDAF is capable of quickly rejecting and confirming tracks with high fidelity.

\appendices

\section{Proof of Lemma~\ref{lem:lgipdaf_belief_measurement}: Prediction Step}
\label{app:lgipdaf_proof_prediction}
\begin{proof}
According to the theorem of total probability, the probability of the track's current state conditioned on the previous measurements and the track representing the target is
\begin{align}\label{eq:lgipdaf_belief_pdf}
p\left(x_{k} \mid \epsilon_{k^-}, Z_{0:k^-} \right)  &=\int_{\tilde{x}_{k^{-} \mid k^{-}}}  p\left(x_{k }\mid x_{k^{-}}, \epsilon_{k^-} , Z_{0:k^-} \right) \notag\\
&p\left(x_{k^{-}} \mid \epsilon_{k^-}, Z_{0:k^-} \right)d \tilde{x}_{k^{-} \mid k^{-}}, \end{align}
where we integrate over the error state instead of the state by recalling that $\tilde{x}_{k^- \mid k^-}= \LogI{\G{x}}{ \hat{x}_{k^- \mid k^-}^{-1} x_{k^-} }$, and using the relation in equation~\eqref{eq:prob_x_def}.

Using the definitions of $p\left(x_{k }\mid x_{k^{-}}, \epsilon_{k^-} , Z_{0:k^-} \right)$ and $p\left(x_{k^{-}} \mid \epsilon_{k^-}, Z_{0:k^-} \right)$ in Equations~\eqref{eq:lgipdaf_state_previous} and~\eqref{eq:lgipdaf_approx_state_propagation}, Equation~\eqref{eq:lgipdaf_belief_pdf} becomes 
\begin{equation}\label{eq:lgipdaf_prob_with_L}
p\left(x_{k} \mid \epsilon_{k^-}, Z_{0:k^-} \right)\approx\eta\int_{\tilde{x}_{k^{-} \mid  k^{-}}}\exp\left(-L\right)d \tilde{x}_{k^{-} \mid  k^{-}},    
\end{equation}
where
\begin{align*}
L&=\frac{1}{2}\left(\tilde{x}_{k\mid k^{-}}-F_{\kt}\tilde{x}_{k^{-} \mid  k^{-}}\right)^{\top}\bar{Q}_{\kt}^{-1}\left(\tilde{x}_{k\mid k^{-}}-F_{\kt}\tilde{x}_{k^{-} \mid  k^{-}}\right)\\
&+\frac{1}{2}\tilde{x}_{k^{-} \mid  k^{-}}^{\top}P_{k^{-} \mid  k^{-}}^{-1}\tilde{x}_{k^{-} \mid  k^{-}},    
\end{align*}
and where $\bar{Q}_{\kt}$ is defined in equation~\eqref{eq:lgipdaf_q_bar}.  After some algebra we get
\begin{multline*}
    L= \frac{1}{2} \Big\{
    \xtkmkm^\top (\Pkmkm^{-1} + F_\kt^\top \bar{Q}_\kt^{-1}F_\kt) \xtkmkm
    \\
    + \xtkkm^\top \bar{Q}_{\kt}^{-1} \xtkkm 
    - \xtkkm \bar{Q}_\kt^{-1}F_\kt\xtkmkm 
    \\
    - \xtkmkm^\top F_\kt^\top \bar{Q}_\kt^{-1} \xtkkm
    \Big\} 
\end{multline*}
Defining $S \defeq \Pkmkm^{-1} + F_\kt^\top \bar{Q}_\kt^{-1}F_\kt$ gives
\begin{multline*}
    L= \frac{1}{2} 
    (\xtkmkm-S^{-1}F_\kt^\top\bar{Q}_\kt^{-1}\xtkkm)^\top 
    \\
    S (\xtkmkm-S^{-1}F_\kt^\top\bar{Q}_\kt^{-1}\xtkkm)
    \\
    + \frac{1}{2}\xtkkm\big(\bar{Q}_\kt^{-1} - \bar{Q}_\kt^{-1} F_\kt S^{-1} F_\kt^\top \bar{Q}_\kt^{-1}\big)\xtkkm.
\end{multline*}
Using the matrix inversion lemma
\begin{multline*}
    \bar{Q}_\kt^{-1} - \bar{Q}_\kt^{-1} F_\kt (F_\kt^\top \bar{Q}_\kt^{-1} F_\kt + \Pkmkm^{-1})^{-1}F_\kt^\top \bar{Q}_\kt^{-1} 
    \\
    = (F_\kt \Pkmkm F_\kt^\top + \bar{Q}_\kt)^{-1}
\end{multline*}
gives
\begin{multline*}
    L= L_1(\xtkmkm, \xtkkm)
    \\
    + \frac{1}{2}\xtkkm(F_\kt \Pkmkm F_\kt^\top + \bar{Q}_\kt)^{-1}\xtkkm,
\end{multline*}
and therefore
\begin{align*}
    p\left(x_{k}\mid \epsilon_{k^-}, Z_{0:k^-} \right) 
     =  \eta \exp{\left(- L_2\left(\xtkkm\right)\right)},
\end{align*}
where the integral of $\exp{\left(-L_1\right)}$ has been absorbed into the normalizing coefficient $\eta$.

Let 
\[
\Pkkm = F_\kt \Pkmkm F_\kt^\top + \bar{Q}_\kt
\]
and recall that $\tilde{x}_{k \mid k^{-}} = \LogI{\G{x}}{\hat{x}_{k\mid k^{-}}^{-1}  x_{k} }$, where from Lemma~\ref{lem:lgipdaf_prob_prop_state} we have that $\hat{x}_{k\mid k^{-}}=f\left(\hat{x}_{k^{-} \mid  k^{-}},0,t_{\kt} \right)$.
Therefore, the Lemma follows by noting that $p\left(x_{k} \mid \epsilon_{k^-}, Z_{0:k^-} \right) \approx \mathcal{N}\left(\hat{x}_{k\mid k^{-}}, P_{k \mid k^{-}} \right)$.
\end{proof}
\section{Proof of Lemma~\ref{lem:lgipdaf_association_event}: Association Events}\label{app:lgipdaf_association_event}
\begin{proof}
The probability of an association event conditioned on the measurements $Z_{0:k}$ and that the track represents the target is inferred using the location of the validated measurements with respect to the track's estimated measurement $\hat{z}_{k} = h\left(\hat{x}_{k \mid k^-},0 \right)$ and the number of validated measurements. The basic idea is that the closer a validated measurement is to the estimated measurement relative to how close the other validated measurements are to the estimated measurement, the more likely the validated measurement is the target-originated measurement, and therefore the more likely its respective association event it. 

The probability of an association event conditioned on the measurements and the track representing the target is
\begin{equation}\label{eq:lgipdaf_prob_assoc_event1}
    p\left(\theta_{k,j} \mid \epsilon_k, Z_{0:k} \right) = p\left(\theta_{k,j} \mid \epsilon_k, Z_{k}, Z_{0:k^-}, m_k \right),
\end{equation}
where we have explicitly written the inference on the number of validated measurements $m_k$.
Using Bayes' rule, the probability in equation \eqref{eq:lgipdaf_prob_assoc_event1} is
\begin{align}\label{eq:lgipdaf_prob_assoc_event2}
    &p\left(\theta_{k,j}\mid \epsilon_k , Z_{0:k}\right) = \notag \\
    &\frac{p\left(Z_k \mid \theta_{k,j}, m_k, \epsilon_k, Z_{0:k^-} \right)p\left(\theta_{k,j} \mid m_k, \epsilon_k, Z_{0:k^-}\right)}{p\left(Z_{k} \mid m_k, \epsilon_k, Z_{0:k^-} \right)},
\end{align}
where 
\begin{align}\label{eq:lgipdaf_prob_assoc_event3}
    &p\left(Z_{k} \mid m_k, \epsilon_k, Z_{0:k^-} \right) = \notag \\ 
    &\sum_{j=0}^{m_k}\left( p\left(Z_k \mid \theta_{k,j}, m_k, \epsilon_k, Z_{0:k^-} \right) \right. \left. p\left(\theta_{k,j} \mid m_k, \epsilon_k, Z_{0:k^-} \right) \right).
\end{align}
Since the validated measurements are independent, the joint density of the validated measurements is
\begin{equation}\label{eq:lgipdaf_meas_prod}
p\left(Z_{k}\mid \theta_{k,j},m_{k}, \epsilon_k, Z_{0:k^-}\right)=\prod_{\ell=1}^{m_{k}}p\left(z_{k,\ell}\mid \theta_{k,j}, \epsilon_k, Z_{0:k^-}\right).
\end{equation}
Since the false measurements are assumed uniformly
distributed in the validation region with volume $\mathcal{V}_k$, the probability of a false measurement in the validation region is $\mathcal{V}_k^{-1}$. Also, since the target originated measurement is validated with probability $P_G$, the probability of a target originated measurement being validated is $P_{G}^{-1}p\left(z_{k\ell}\mid \psi, \epsilon_k, Z_{0:k^-} \right)$, which is defined in equations \eqref{eq:lgipdaf_probability_y} \eqref{eq:lgipdaf_gate_prob} and \eqref{eq:lgipdaf_prob_meas_val}. Therefore, the probability of each measurement conditioned on the respective association event is
\begin{multline}\label{eq:lgipdaf_meas_prob_cond}
p\left(z_{k,\ell}\mid \theta_{k,j}, \epsilon_k, Z_{0:k^-} \right)
\\
=
\begin{cases}
\mathcal{V}_{k}^{-1} & \text{if }j\neq \ell\\
P_{G}^{-1}p\left(z_{k,\ell}\mid \psi, \epsilon_k, Z_{0:k^-}\right) & \text{if }j = \ell
\end{cases}.
\end{multline}

Let $m^{-}_{k} \triangleq m_k -1 $, and $\mathbb{M}=\left\{1,\ldots,m_k\right\}$. Substituting equation \eqref{eq:lgipdaf_meas_prob_cond} into equation  \eqref{eq:lgipdaf_meas_prod} yields the joint probability of the measurements
\begin{multline} \label{eq:lgipdaf_joint_dist_meas}
 p\left(Z_{k}\mid \theta_{k,j},m_{k}, \epsilon_k, Z_{0:k^-}\right) 
 \\
=\begin{cases}
\mathcal{V}_{k}^{-m^-_{k}}P_{G}^{-1}p\left(z_{k,j}\mid \psi, \epsilon_k, Z_{0:k^-} \right) & \text{if } j \in \mathbb{M}\\
\mathcal{V}_{k}^{-m_{k}} & \text{if } j=0
\end{cases}.
\end{multline}

We now proceed to calculate $p\left(\theta_{k,j} \mid m_k, \epsilon_k, Z_{0:k^-}\right)$. Let $\phi$ denote the number of false measurements. Under the assumption that there is
at most one target originated measurement, there are two possibilities
for the number of false measurements, $\phi = {m_k}$, denoted $\phi_{m_k}$, or $\phi = {m_k^-}$ denoted $\phi_{m_k^-}$.
Therefore, the {\em a~priori} probability of an association event conditioned on the
number of measurements and previous measurements is 
\begin{align}\label{eq:lgipdaf_event_cond_prob}
& p\left(\theta_{k,j}\mid m_{k}, \epsilon_k, Z_{0:k^-}\right) = \notag \\
& p\left(\theta_{k,j}\mid \phi_{m_k^-},m_{k},\epsilon_k,Z_{0:k^-}\right)p\left(\phi_{m_k^-}\mid m_{k},\epsilon_k,Z_{0:k^-}\right) \notag \\
 & +p\left(\theta_{k,j}\mid\phi_{m_k},m_{k}, \epsilon_k,Z_{0:k^-}\right)p\left(\phi_{m_k}\mid m_{k},\epsilon_k,Z_{0:k^-}\right)\notag\\
 & =\begin{cases}
\left(\frac{1}{m_{k}}\right)p\left(\phi_{m_k^-}\mid m_{k},\epsilon_k,Z_{0:k^-}\right) & j=1,\ldots,m_{k}\\
p\left(\phi_{m_k}\mid m_{k},\epsilon_k,Z_{0:k^-}\right) & j=0,
\end{cases}
\end{align}
where the probability $p\left(\theta_{k,j}\mid \phi_{m_k^-},m_{k},\epsilon_k,Z_{0:k^-}\right)= \frac{1}{m_k} $ when $j>0$ since each association event, $\theta_{k,j>0}$, is just as likely to be true with the specified conditions, and \\ $p\left(\theta_{k,j}\mid \phi_{m_k^-},m_{k},\epsilon_k,Z_{0:k^-}\right)= 0 $ when $j=0$ since not all of the validated measurements can be false according to the condition $\phi_{m_k^-}$.

Using Bayes' formula, the conditional probabilities of the number of false measurements are
\begin{align}
 &p\left(\phi_{m_k^-}\mid m_{k},\epsilon_k, Z_{0:k^-}\right) \notag \\
 & =\frac{p\left(m_{k}\mid\phi_{m_k^-}, \epsilon_k, Z_{0:k^-}\right) p\left(\phi_{m_k^-}\mid \epsilon_k, Z_{0:k^-}\right)}{p\left(m_{k}\mid \epsilon_k, Z_{0:k^-}\right)}\notag\\
 & =\frac{P_{G}P_{D}\mu_{F}\left(m_{k}^{-}\right)}{p\left(m_{k}\mid \epsilon_k, Z_{0:k^-}\right)}\label{eq:lgipdaf_num_false_meas_cond_prob1}
\end{align}
and 
\begin{align}
& p\left(\phi_{m_k}\mid m_{k},\epsilon_k, Z_{0:k^-}\right) \notag \\
&=\frac{p\left(m_{k}\mid\phi_{m_k}, \epsilon_k, Z_{0:k^-}\right)p\left(\phi_{m_k}\mid Z_{0:k^-}\right)}{p\left(m_{k}\mid \epsilon_k, Z_{0:k^-}\right)}\notag\\
 & =\frac{\left(1-P_{G}P_{D}\right)\mu_{F}\left(m_{k}\right)}{p\left(m_{k}\mid \epsilon_k, Z_{0:k^-}\right)},
 \label{eq:lgipdaf_num_false_meas_cond_prob2}
\end{align}
where $\mu_{F}$ is the probability density function of the number
of false measurements, and $p\left(m_{k}\mid\phi_{m_k^-}, \epsilon_k, Z_{0:k^-}\right)=P_G P_D$ since $p\left(m_{k}\mid\phi_{m_k^-}, \epsilon_k, Z_{0:k^-}\right)$ is the probability that the target is detected and the target originated measurement is inside the validation region.

According to the theorem of total probability
\begin{align}
& p\left(m_{k}\mid \epsilon_k, Z_{0:k^-}\right) \notag \\
 =& p\left(m_{k}\mid\phi_{m_k^-}, \epsilon_k, Z_{0:k^-}\right)p\left(\phi_{m_k^-}\mid \epsilon_k,  Z_{0:k^-}\right) \notag\\
&+p\left(m_{k} \mid \phi_{m_k}, \epsilon_k, Z_{0:k^-}\right)p\left(\phi_{m_k}\mid \epsilon_k, Z_{0:k^-}\right)\notag \\
= & P_{G}P_{D}\mu_{F}\left(m_{k}^{-}\right)+\left(1-P_{G}P_{D}\right)\mu_{F}\left(m_{k}\right).\label{eq:lgipdaf_num_meas_prob}
\end{align}
Substituting equations \eqref{eq:lgipdaf_num_false_meas_cond_prob1}, \eqref{eq:lgipdaf_num_false_meas_cond_prob2}, and \eqref{eq:lgipdaf_num_meas_prob} into equation \eqref{eq:lgipdaf_event_cond_prob} yields
\begin{multline}\label{eq:lgipdaf_event_cond_prob_exist_final}
p\left(\theta_{k,j}\mid m_{k}, \epsilon_k, Z_{0:k^-}\right) 
\\
=\begin{cases}
\frac{\frac{1}{m_{k}}P_{D}P_{G}\mu_{F}\left(m_{k}^{-}\right)}{P_{G}P_{D}\mu_{F}\left(m_{k}^{-}\right)+\left(1-P_{G}P_{D}\right)\mu_{F}\left(m_{k}\right)} & j\in \mathbb{M}\\
\frac{\left(1-P_{G}P_{D}\right)\mu_{F}\left(m_{k}\right)}{P_{G}P_{D}\mu_{F}\left(m_{k}^{-}\right)+\left(1-P_{G}P_{D}\right)\mu_{F}\left(m_{k}\right)} & j=0
\end{cases}.
\end{multline}
Substituting equations \eqref{eq:lgipdaf_joint_dist_meas} and \eqref{eq:lgipdaf_event_cond_prob}, into equation \eqref{eq:lgipdaf_prob_assoc_event3} yields 
\begin{multline}
p\left(Z_{k}\mid m_k, \epsilon_k, Z_{0:k^-} \right) 
\\
=\frac{\frac{\mathcal{V}_{k}^{-m_{k}^{-}}}{m_{k}}P_{D}\mu_{F}\left(m_{k}^{-}\right)\sum_{\ell=1}^{m_{k}}p\left(z_{k,\ell}\mid\psi, \epsilon_k, Z_{0:k^-}\right)}{P_{D}P_{G}\mu_{F}\left(m_{k}^{-}\right)+\left(1-P_{D}P_{G}\right)\mu_{F}\left(m_{k}\right)}
\\
  + \frac{\mathcal{V}_{k}^{-m_{k}}\left(1-P_{D}P_{G}\right)\mu_{F}\left(m_{k}\right)}{P_{D}P_{G}\mu_{F}\left(m_{k}^{-}\right)+\left(1-P_{D}P_{G}\right)\mu_{F}\left(m_{k}\right)}.\label{eq:lgipdaf_prob_zk_e}
\end{multline}
Substituting equations \eqref{eq:lgipdaf_joint_dist_meas}, \eqref{eq:lgipdaf_event_cond_prob} and \eqref{eq:lgipdaf_prob_zk_e} into \eqref{eq:lgipdaf_prob_assoc_event2} yields
{\small
\begin{align}
& p\left(\theta_{k,j} \mid \epsilon_k, Z_{0:k} \right)= &\notag \\ 
&\begin{cases}
\frac{P_{D}p\left(z_{k,j}\mid \psi, \epsilon_k , Z_{0:k^-}\right)}{P_{D}\sum_{\ell=1}^{m_{k}}p\left(z_{k,j}\mid \psi, \epsilon_k , Z_{0:k^-}\right)+m_{k}\mathcal{V}_{k}^{-1}\left(1-P_{D}P_{G}\right)\frac{\mu_{F}\left(m_{k}\right)}{\mu_{F}\left(m_{k}^{-}\right)}} & j\in\mathbb{M}\\
\frac{m_{k}\mathcal{V}_{k}^{-1}\left(1-P_{D}P_{G}\right)\frac{\mu_{F}\left(m_{k}\right)}{\mu_{F}\left(m_{k}^{-}\right)}}{P_{D}\sum_{\ell=1}^{m_{k}}p\left(z_{k,j}\mid \psi, \epsilon_k , Z_{0:k^-}\right)+m_{k}\mathcal{V}_{k}^{-1}\left(1-P_{D}P_{G}\right)\frac{\mu_{F}\left(m_{k}\right)}{\mu_{F}\left(m_{k}^{-}\right)}} & j=0
\end{cases}\label{eq:lgipdaf_prob_assoc_event4}
\end{align}
}
Setting the probability density function of false measurements $\mu_F$ in equation \eqref{eq:lgipdaf_prob_assoc_event4} to a Poisson density function defined in equation \eqref{eq:lgipdaf_poisson_distribution} yields equation \eqref{eq:lgipdaf_association_probabilities}.
\end{proof}

\section{Proof of Lemma~\ref{lem:lgipdaf_single_update}: Split Track Update}

We begin with the following Lemma.
\begin{lemma}\label{lem:lgipdaf_prob_z_cond_x}
Suppose that Assumptions~\ref{amp:lgipdaf_system_model} and \ref{amp:lgipdaf_past_info} hold, then the probability of measurement $z_k$ conditioned on it being target-originated and conditioned on the current state, is given by
\begin{multline}\label{eq:lgipdaf_prob_z_cond_x}
    p\left(z_{k} \mid \psi, x_{k}, \epsilon_{k}, Z_{0:k^-} \right) \approx
    \\  
    \eta \exp \left( - \frac{1}{2} \left( \tilde{z}_k - H_k \tilde{x}_{k \mid k^-} \right)^\top \bar{R}_k \left( \tilde{z}_k - H_k \tilde{x}_{k \mid k^-} \right) \right),
\end{multline}
where
\begin{align}
    \tilde{z}_k & = \LogI{\G{s}}{ \hat{z}_k ^{-1} z_k} \\
    \hat{z}_k & = h\left( \hat{x}_{k \mid k^-}, 0 \right) \\
    \bar{R}_k & = V_k R V_k ^\top, \\
    \xtkkm & = \LogI{\G{s}}{ \hat{x}_{k\mid k^-}^{-1} x_k},
\end{align}
$R$ is the measurement noise covariance, and the Jacobians $H_k$ and $V_k$ are given in equation~\eqref{eq:sm_system_jacobians} and evaluated at the point $\zeta_{k_k} = \left( \hat{x}_{k \mid k^-}, 0 \right)$.
\end{lemma}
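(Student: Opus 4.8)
The plan is to derive this conditional measurement density directly from the affinized observation model established in Lemma~\ref{lem:sm_jacobians}, and then transfer the resulting Gaussian from the measurement Lie group $\G{s}$ to its Cartesian algebraic space using the uncertainty representation in equation~\eqref{eq:prob_x_def}. The crucial observation that structures the whole argument is that here we condition on a \emph{specific} value of the current state $x_k$, so that the state error $\xtkkm = \LogI{\G{x}}{\hat{x}_{k\mid k^-}^{-1} x_k}$ is a deterministic quantity rather than a random variable; the only randomness in $z_k$ then comes from the measurement noise $r_k \sim \mathcal{N}(0,R)$.

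First I would instantiate the affinized measurement equation from \eqref{eq:sm_lienarized_system} at the point $\zeta_{h_k} = (\hat{x}_{k\mid k^-}, 0)$, giving
\begin{equation*}
z_k \approx \hat{z}_k \, \ExpI{\G{s}}{ H_k \xtkkm + V_k r_k },
\end{equation*}
with $\hat{z}_k = h(\hat{x}_{k\mid k^-},0)$. Left-multiplying by $\hat{z}_k^{-1}$ and applying $\LogI{\G{s}}{\cdot}$ then yields the measurement error
\begin{equation*}
\tilde{z}_k = \LogI{\G{s}}{ \hat{z}_k^{-1} z_k } \approx H_k \xtkkm + V_k r_k .
\end{equation*}
Because $\xtkkm$ is fixed by the conditioning and $r_k$ is zero-mean Gaussian, the first and second moments of $\tilde{z}_k$ follow immediately as $\mathrm{E}[\tilde{z}_k] = H_k \xtkkm$ and $\mathrm{cov}[\tilde{z}_k] = V_k R V_k^\top = \bar{R}_k$. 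Since $\tilde{z}_k$ is an affine image of a Gaussian it is itself (approximately) Gaussian, and invoking the relation \eqref{eq:prob_x_def} to write $p\left(z_k \mid \psi, x_k, \epsilon_k, Z_{0:k^-}\right) = p(\tilde{z}_k \mid \cdots)$ produces the claimed density with mean $H_k \xtkkm$ and covariance $\bar{R}_k$.

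I expect the only genuinely delicate point to be keeping the bookkeeping straight on what is random and what is conditioned, rather than any hard calculation. This lemma is structurally the mirror of the prediction-step proof in Appendix~\ref{app:lgipdaf_proof_prediction} (where one conditions on $x_{k^-}$), and it must be contrasted carefully with Lemma~\ref{lem:lgipdaf_val_ttp}: there the state is \emph{not} conditioned, so $\xtkkm$ contributes the term $H_k P_{k\mid k^-} H_k^\top$ to the innovation covariance $S_k$, whereas here that term is absent and the covariance collapses to $\bar{R}_k$ alone. The remaining subtlety is merely the validity of the first-order affinization, which holds under the high-SNR assumption already invoked for the extended-Kalman-style linearization in Section~\ref{sec:lgipdaf_system_model}.
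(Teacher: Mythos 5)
Your proposal is correct and matches the paper's own argument: the paper proves this lemma by stating it is ``similar to the proof of Lemma~\ref{lem:lgipdaf_prob_prop_state},'' which is precisely your route---instantiate the affinized observation model at $\zeta_{h_k}$, observe that conditioning on a specific $x_k$ makes $\xtkkm$ deterministic so the only randomness is $r_k$, and match first and second moments to get mean $H_k\xtkkm$ and covariance $\bar{R}_k = V_k R V_k^\top$. Your contrast with Lemma~\ref{lem:lgipdaf_val_ttp} is also apt, and note in passing that the exponent in the lemma statement should read $\bar{R}_k^{-1}$ (and the error state should use $\LogI{\G{x}}{\cdot}$), consistent with the density your moment computation yields.
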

\par\noindent{\em Proof:}
Similar to the proof of Lemma~\ref{lem:lgipdaf_prob_prop_state}.
\vspace{0.5cm}

\par\noindent{\em Proof of Lemma~\ref{lem:lgipdaf_single_update}.}
\label{proof:lgipdaf_single_update}
Using Bayes rule, 
\begin{multline}\label{eq:lgipdaf_map_bayes}
p\left(x_{k,j} \mid \theta_{k,j}, \epsilon_k, Z_{0:k} \right)
\\
=\frac{p\left(Z_{k}|\theta_{k,j}, x_{k,j}, \epsilon_k, Z_{0:k^-} \right)p\left(x_{k,j} \mid \epsilon_k, Z_{0:k^-} \right)}{p\left(Z_{k}\mid \theta_{k,j}, \epsilon_k, Z_{0:k^-} \right)}.  
\end{multline}
We solve for the probability $p\left(x_{k,j} \mid \theta_{k,j}, \epsilon_k, Z_{0:k} \right)$ by using the maximum a posterior (MAP) optimization algorithm to find the value of $\tilde{x}_{k \mid k^-}$ and its corresponding error covariance $P_{k \mid k^-}$ that maximizes the right-hand-side of equation~\eqref{eq:lgipdaf_map_bayes}.

Since the MAP does not depend on $p\left(Z_{k}\mid \theta_{k,j}, \epsilon_k, Z_{0:k^-} \right)$, it can be absorbed into the normalizing coefficient simplifying the problem to 
\begin{align*}
&\underset{ \tilde{x}_{k \mid k^-}, P_{k \mid k^-}  }{\text{max}}\eta p\left(Z_{k}|\theta_{k,j}, x_{k,j}, \epsilon_k, Z_{0:k^-} \right)p\left(x_{k,j} \mid \epsilon_k, Z_{0:k^-} \right).    
\end{align*}
When $j=0$, none of the validated measurements are target originated, and thus, 
$p\left(Z_{k}|\theta_{k,j=0}, x_{k}, \epsilon_k, Z_{0 \mid k^-} \right)$ simplifies to $p\left(Z_{k}|\theta_{k,j=0}, Z_{0:k^-} \right)$ and no longer has any dependency on the track's state. This reduces the MAP optimization problem to 
\[
\underset{\tilde{x}_{k\mid k^-}, P_{k\mid k^-}}{\text{max}}\eta p\left( x_{k,j} \mid \epsilon_k, Z_{0:k^-} \right),
\]
and hence, the solution is $p\left(x_{k,j=0} \mid \theta_{k,j=0}, \epsilon_k, Z_{0:k} \right) = p\left( x_{k} \mid \epsilon_k, Z_{0:k^-} \right)$.

When $j>0$, only the measurement $z_{k,j}$ has any dependency and influence on the track's state since all other measurements are false. Using this fact with the relation in Equation~\eqref{eq:lgipdaf_meas_prob_cond}, the optimization problem simplifies to
\begin{multline*}
\underset{\tilde{x}_{k\mid k^-}, P_{k\mid k^-}}{\text{max}} \eta P_{G}^{-1}p\left(z_{k,j}|\psi, x_{k,j}, \epsilon_k, Z_{0:k^-} \right) \\
\cdot p\left( x_{k,j} \mid \epsilon_k, Z_{0:k^-} \right).
\end{multline*}

Multiplying together and combining the exponents of the probabilities $p\left(z_{k,j} \mid \psi, x_{k}, \epsilon_{k^-}, Z_{0:k^-} \right)$ and $p\left(x_{k} \mid \epsilon_{k^-}, Z_{0:k^-} \right)$, defined in equations~\eqref{eq:lgipdaf_prob_z_cond_x} and \eqref{eq:lgipdaf_prob_x_curr_cond_prev_meas}, simplifies the MAP optimization problem to
\[
\underset{\tilde{x}_{k\mid k^-}, P_{k\mid k^-}}{\text{max}} \eta P^{-1}_G \exp \left(-L\right),
\]
where 
\begin{align*}
L=&\frac{1}{2}\left(\nu_{k,j}-H_{k}\tilde{x}_{k \mid k^{-}}\right)^{\top}\bar{R}_{k}^{-1}\left(\nu_{k,j}-H_{k}\tilde{x}_{k \mid k^{-}}\right) \\
&+\frac{1}{2}\tilde{x}_{k \mid k^{-}}^{\top}P_{k \mid k^{-}}^{-1}\tilde{x}_{k \mid k^{-}} \\
\nu_{k,j}  = & \LogI{\G{s}}{ \hat{z}_k ^{-1} z_k} \\
\hat{z}_k  = & h\left( \hat{x}_{k \mid k^-}, 0 \right) \\
\bar{R}_k  = & V_k R V_k ^\top,
\end{align*}
and where $\tilde{x}_{k \mid k^-}  = \LogI{\G{x}}{ \hat{x}_{k \mid k^-}^{-1} x_k}$, $h$ is the observation function defined in equation~\eqref{eq:sm_system_model}, and $H_k$ and $V_k$ are the Jacobians of the observation function defined in equation~\eqref{eq:sm_system_jacobians} and evaluated at $\zeta_{h_k} = \left( \hat{x}_{k \mid k^-} ,0 \right)$.

The MAP is solved by finding the value of $\tilde{x}_{k\mid k^{-}}$ that minimizes $L$. Since $L$ is quadratic in $\tilde{x}_{k\mid k^{-}}$, the value of $\tilde{x}_{k\mid k^{-}}$ that minimizes $L$ is found by taking the first derivative of $L$ with respect to $\tilde{x}_{k\mid k^{-}}$, setting it to zero and solving for $\tilde{x}_{k\mid k^{-}}$. This value becomes the new error state mean $\mu^{-}_{k\mid k}$. The corresponding covariance is found by taking the second derivative of $L$ with respect to $\tilde{x}_{k\mid k^{-}}$ and setting this value to the new covariance $P^{c^-}_{k\mid k}$.

Taking the first and second partial derivatives of $L$ with respect to $\tilde{x}_{k \mid k^{-}}$
yields
\begin{align*}
\frac{\partial L}{\partial\tilde{x}_{k \mid k^{-}}} & =-\left(\nu_{k,j}-H_{k}\tilde{x}_{k \mid k^{-}}\right)^{\top}\bar{R}_{k}^{-1}H_{k}+\tilde{x}_{k \mid k^{-}}^{\top}P_{k \mid k^{-}}^{-1}\\
\frac{\partial^{2}L}{\partial\tilde{x}_{k \mid k^{-}}^{2}} & =H_{k}^{\top}\bar{R}^{-1}_{k}H_{k}+P_{k \mid k^{-}}^{-1} = \left(P^{c^-}_{k|k}\right)^{-1}.
\end{align*}
Setting the first derivative to zero, solving for $\tilde{x}_{k \mid k^{-}}$ and setting this value to $\mu^{-}_{k\mid k}$ gives 
\begin{align*}
\mu^{-}_{k|k} & =\left(H_{k}^{\top}\bar{R}^{-1}_{k}H_{k}+P_{k \mid k^{-}}^{-1} \right)^{-1}H_{k}^{\top}\bar{R}_{k}^{-1}\nu_{k,j}.
\end{align*}
With algebraic manipulation the updated error covariance and error state mean are
\begin{align*}
    P^{c^-}_{k|k} &= \left(I-K_{k} H_{k}\right)P_{k \mid k^{-}} \\
    \mu^{-}_{k|k,j} & =K_{k}\nu_{k,j},
\end{align*}
where the Kalman gain $K_k$ and innovation term $\nu_{k,j}$ are 
\begin{align*}
    K_k & = P_{k \mid k^-} H_{k}^\top S_k \\
    \nu_{k,j}&=\LogI{\G{s}}{ \hat{z}_{k}^{-1} z_{k,j} } \\
    \hat{z}_k & = h\left( \hat{x}_{k \mid k^- } ,0 \right) \\
    S_{k} & =V_{k}RV_{k}^{\top}+H_{k}P_{k|k^-}H_{k}^{\top}. \\
\end{align*}

Since the error state's mean is no longer zero, the error state no longer has a concentrated Gaussian distribution. In order to reset the mean of the error state to zero, we add $\mu^{-}_{k|k,j}$ onto the state estimate $\hat{x}_{k \mid k^{-}}$ and adjust the covariance of the error state. In particular, let the error state after update but before being reset be $\tilde{x}^{-}_{k|k,j} = \mu^{-}_{k|k,j} + a_{k|k}$ where $a_{k|k} \sim \mathcal{N}\left(0, P^{c^-}_{k|k} \right)$.  Then under the assumption that $a_{k|k}$ is small and using the property of the right Jacobian defined in equation~\eqref{eq:Jr_property} add $\mu^{-}_{k|k,j}$ onto $\hat{x}_{k \mid k^{-}}$ to get
\begin{subequations}
\begin{align}
x_{k|k} & = \hat{x}_{k \mid k^{-}}\ExpI{\G{x}}{ \mu^{-}_{k|k,j} + a_{k|k} } \\
 & = \underbrace{\hat{x}_{k \mid k^{-}}\ExpI{\G{x}}{ \mu^{-}_{k|k,j} }}_{\hat{x}_{k\mid k,j}}\ExpI{\G{x}}{ \underbrace{\jr{\G{x}}{\mu^{-}_{k|k,j}} a_{k|k}}_{\tilde{x}_{k\mid k,j}} }, 
\end{align}
\end{subequations}
where
\begin{equation} \label{eq:lgipdaf_indirect_state_update}
    \hat{x}_{k|k,j} = \hat{x}_{k \mid k^{-}}\ExpI{\G{x}}{  \mu^{-}_{k|k,j} }
\end{equation}
is the updated state estimate, and $\tilde{x}_{k|k,j} =  \jr{\G{x}}{\mu^{-}_{k|k,j}} a_{k|k}$ is the updated error state after reset. Equation~\eqref{eq:lgipdaf_indirect_state_update} can be thought of as moving from $\hat{x}_{k \mid k^{-}}$ to $\hat{x}_{k|k,j}$ along the geodesic defined by the tangent vector  $\mu^{-}_{k|k,j}$ as depicted in Fig.~\ref{fig:lgipdaf_geodesic_add}. 

The error covariance of the error state $\tilde{x}_{k|k,j}$ is 
\begin{align*}
    \text{cov}\left(\tilde{x}_{k|k,j}\right) & = \text{cov} \left( \jr{\G{x}}{\mu^{-}_{k|k,j}} a_{k|k} \right) \\
    & = \jr{\G{x}}{\mu^{-}_{k|k,j}} \text{cov} \left( a_{k|k} \right) \jr{\G{x}}{\mu^{-}_{k|k,j}}^\top  \\
    & = \jr{\G{x}}{\mu^{-}_{k|k,j}}   P^{c^-}_{k|k} \jr{\G{x}}{\mu^{-}_{k|k,j}}^\top \\
    & = P^c_{k|k}.
\end{align*}
Therefore, $ \tilde{x}_{k|k,j}\sim \mathcal{N}\left(\mu_{k|k,j}=0, P^c_{k|k,j} \right) $.

\begin{figure}[htb]
\centering
    \includegraphics[width=0.9\linewidth]{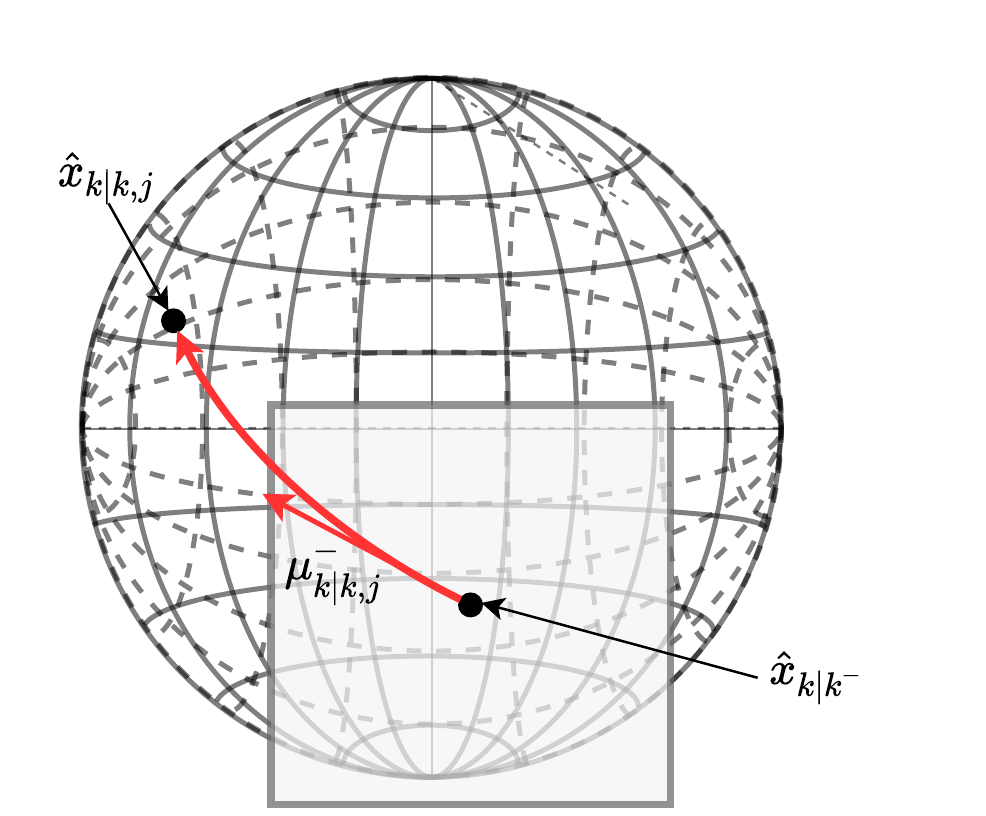}
    \caption{A depiction of the state estimate update conditioned on $\theta_{k,j}$ by using $\mu^{-}_{k|k,j}$ to form a geodesic from $\hat{x}_{k \mid k^{-}}$ to $\hat{x}_{k|k,j}$.}
     \label{fig:lgipdaf_geodesic_add}
\end{figure}

\section{Proof of Lemma~\ref{lem:lgipdaf_track_likelihood}: Track Likelihood}\label{proof:lgipdaf_track_likelihood}
The probability of the track likelihood conditioned on the measurements $Z_{0:k}$ are inferred using the location of the validated measurements with respect to the track's estimated measurement $\hat{z}_{k} = h\left(\hat{x}_{k \mid k^-},0 \right)$ and the number of validated measurements. The basic idea is that the more likely at least one of the validated measurements is target originated compared to the likelihood that none of the validated measurements are target originated, the more likely the track represents a target. Also, if the number of validated measurements is more than the expected number of false measurements inside the validation region, then the more likely one of the validated measurements originated from the target and the track represents the target.

We write the track likelihood conditioned on the measurements as
\begin{equation}\label{eq:lgipdaf_prob_track_likelihood1}
    p\left(\epsilon_{k} \mid Z_{0:k} \right) = p\left(\epsilon_k \mid m_k,Z_{k}, Z_{0:k^-} \right),
\end{equation}
where we have separated $Z_k$ and $Z_{0:k^-}$ from $Z_{0:k}$ and explicitly written the inference on the number of validated measurements $m_k$.
Using Bayes' rule, equation \eqref{eq:lgipdaf_prob_track_likelihood1} can be written as
\begin{equation}\label{eq:lgipdaf_prob_track_likelihood2}
    p\left(\epsilon_{k}, \mid Z_{0:k} \right) = \frac{p\left(Z_k \mid \epsilon_k, m_k, Z_{0:k^-} \right)p\left(\epsilon_{k} \mid m_k, Z_{0:k^-} \right)}{p\left(Z_k \mid m_k, Z_{0:k^-} \right)},
\end{equation}
where
\begin{align}\label{eq:lgipdaf_prob_track_likelihood3}
    & p\left(Z_k \mid m_k, Z_{0:k^-} \right)  \notag \\
    &= p\left(Z_{k} \mid \epsilon_k, m_k, Z_{0:k^-} \right)p\left(\epsilon_{k} \mid m_k, Z_{0:k^-}\right) \notag \\
    &+ p\left(Z_{k} \mid \epsilon_k=F, m_k, Z_{0:k^-} \right)p\left(\epsilon_{k}=F \mid m_k, Z_{0:k^-}\right)
\end{align}
and $\epsilon_k =F$ denotes that the track does not represent the target.

The probability $p\left(Z_{k} \mid m_k,\epsilon_k, Z_{0:k^-} \right)$ is derived in Appendix~\ref{app:lgipdaf_association_event} and defined in equation \eqref{eq:lgipdaf_prob_zk_e}. Accordingly
\begin{align}\label{eq:lgipdaf_prob_zk_e_f}
    &p\left(Z_{k} \mid \epsilon_k=F, m_k, Z_{0:k^-} \right) \notag \\& = \prod^{m_k}_{j=1}  p\left(z_{k,j} \mid \epsilon_k=F, m_k, Z_{0:k^-} \right) \notag \\
    & = \prod^{m_k}_{j=1} \mathcal{V}^{-1}_k 
     = \mathcal{V}_k^{-m_k}
\end{align}
because under the condition that the track does not represent the target, all of the validated measurements must be false, and the validated false measurements are assumed to be independent and uniformly distributed in the validation region with volume $\mathcal{V}_k$.

Using Bayes' rule we get
\begin{align}
p\left(\epsilon_{k} \mid m_{k}, Z_{0:k^-}\right)
=\frac{p\left(m_{k}\mid \epsilon_{k},Z_{0:k^-}\right)p\left( \epsilon_{k} \mid , Z_{0:k^-}\right)}{p\left(m_{k}\mid Z_{0:k^-}\right)} \label{eq:lgipdaf_prob_ek_mk} 
\end{align}
and
\begin{multline}
p\left(\epsilon_{k}=F\mid m_{k}, Z_{0:k^-}\right) 
 \\
=\frac{p\left(m_{k}\mid \epsilon_{k}=F, Z_{0:k^-}\right)p\left(\epsilon_{k}=F \mid Z_{0:k^-}\right)}{p\left(m_{k}\mid Z_{0:k^-}\right)}.
\end{multline}
The probability $p\left(m_{k}\mid \epsilon_{k},Z_{0:k^-}\right)$ is derived in Appendix~\ref{app:lgipdaf_association_event} and defined in equation \eqref{eq:lgipdaf_num_meas_prob}. The probability $p\left(m_{k}\mid \epsilon_{k}=F, Z_{0:k^-}\right)$ is 
\begin{equation}\label{eq:lgipdaf_prob_mk_ef}
    p\left(m_{k}\mid \epsilon_{k}=F, Z_{0:k^-}\right) = \mu_F\left(m_k\right),
\end{equation}
since all of the measurements are false under the condition that the track does not represent a target.

Using the theorem of total probability with equations  \eqref{eq:lgipdaf_num_meas_prob} and \eqref{eq:lgipdaf_prob_mk_ef}, the probability of the number of measurements $m_k$ conditioned on the previous measurements is 
\begin{subequations}\label{eq:lgipdaf_prob_mk}
\begin{align}
    p\left(m_{k}\mid Z_{0:k^-}\right) 
    =& p\left(m_{k}\mid \epsilon_{k},Z_{0:k^-}\right)P_T \notag \\
    &+ p\left(m_{k}\mid \epsilon_{k}=F, Z_{0:k^-}\right)p\left(\epsilon_{k\mid k^-}=F\right) \\
    = &P_{G}P_{D}\mu_{F}\left(m_{k}^{-}\right)P_T \notag \\
    & +\left(1-P_D P_G p\left( \epsilon_{k\mid k^-}\right) \right)\mu_{F}\left(m_k\right),
\end{align}
\end{subequations}
where $P_T \defeq p\left(\epsilon_{k} \mid Z_{0:k^-} \right)$.
Substituting equations \eqref{eq:lgipdaf_num_meas_prob} and \eqref{eq:lgipdaf_prob_mk} into equation \eqref{eq:lgipdaf_prob_ek_mk} yields
\begin{align} \label{eq:lgipdaf_prom_ek_t_mk}
    &p\left(\epsilon_{k} \mid m_{k}, Z_{0:k^-}\right)  \notag \\ 
    &=\frac{ \left(P_{G}P_{D}\mu_{F}\left(m_{k}^{-}\right)+\left(1-P_{G}P_{D}\right)\mu_{F}\left(m_{k}\right)\right) P_T}{P_{G}P_{D}\mu_{F}\left(m_{k}^{-}\right)P_T
    +\left(1-P_D P_G P_T \right)\mu_{F}\left(m_k\right)}.
\end{align}

Using the fact that 
\[p\left(\epsilon_{k} = F \mid m_{k}, Z_{0:k^-}\right)= 1-p\left(\epsilon_{k} \mid m_{k}, Z_{0:k^-}\right),
\]
and substituting the equations \eqref{eq:lgipdaf_prom_ek_t_mk}, \eqref{eq:lgipdaf_prob_zk_e_f}, and \eqref{eq:lgipdaf_prob_zk_e} into equation \eqref{eq:lgipdaf_prob_track_likelihood3} yields the probability of the validated measurements conditioned on the previous measurements
\begin{align}\label{eq:lgipdaf_prob_zk_final}
    & p\left(Z_k \mid m_k, Z_{0:k^-} \right) \notag \\ & =  \frac{\frac{\mathcal{V}_{k}^{-m_{k}^{-}}}{m_{k}}P_{D}\mu_{F}\left(m_{k}^{-}\right)\sum_{\ell=1}^{m_{k}}p\left(z_{k,j}\mid \psi, \epsilon_k,  Z_{0:k^-}\right)P_T}{P_{G}P_{D}\mu_{F}\left(m_{k}^{-}\right)P_T+\left(1-P_{D}P_{G}P_T\right)\mu_{F}\left(m_{k}\right)} \notag \\
    &+\frac{\mathcal{V}_{k}^{-m_{k}}\left(1-P_{D}P_{G}P_T\right)\mu_{F}\left(m_{k}\right)}{P_{G}P_{D}\mu_{F}\left(m_{k}^{-}\right)P_T+\left(1-P_{D}P_{G}P_T\right)\mu_{F}\left(m_{k}\right)}.
\end{align}
Substituting in equations \eqref{eq:lgipdaf_prob_zk_e}, \eqref{eq:lgipdaf_prom_ek_t_mk}, and \eqref{eq:lgipdaf_prob_zk_final} into equation \eqref{eq:lgipdaf_prob_track_likelihood2} and setting the density of false measurements to the Poisson distribution defined in equation \eqref{eq:lgipdaf_poisson_distribution} yields the probability of the track likelihood conditioned on the previous measurements given in equations \eqref{eq:lgipdaf_existence_final} and \eqref{eq:lgipdaf_alpha_weights}.

\printbibliography

@book{Bar-Shalom2001,
	abstract = {Estimation with Applications to Tracking and Navigation treats the estimation of various quantities from inherently inaccurate remote observations. It explains state estimator design using a balanced combination of linear systems, probability, and statistics.The authors provide a review of the necessary background mathematical techniques and offer an overview of the basic concepts in estimation. They then provide detailed treatments of all the major issues in estimation with a focus on applying these techniques to real systems. Suitable for graduate engineering students and engineers working in remote sensors and tracking, Estimation with Applications to Tracking and Navigation provides expert coverage of this important area.},
	author = {Bar-Shalom, Yaakov and Li, X Rong and Kirubarajan, Thiagalingam},
	booktitle = {New York},
	date-modified = {2017-08-01 14:49:47 +0000},
	file = {:Users/beard/Library/Application Support/Mendeley Desktop/Downloaded/Bar-Shalom, Li, Kirubarajan - 2001 - Estimation with Applications to Tracking and Navigation(2).pdf:pdf},
	isbn = {047141655X},
	number = {1216},
	pages = {584},
	title = {Estimation with Applications to Tracking and Navigation},
	volume = {9},
	year = {2001},
	bdsk-url-1 = {http://dx.doi.org/10.1002/0471221279}}

@article{Hertzberg2013,
	author = {Hertzberg, Christoph and Wagner, Ren{\'e} and Frese, Udo and Schr{\"o}der, Lutz},
	journal = {Information Fusion},
	number = {1},
	pages = {57--77},
	publisher = {Elsevier},
	title = {Integrating Generic Sensor Fusion Algorithms with Sound State Representations through Encapsulation of Manifolds},
	volume = {14},
	year = {2013}}

@book{Abraham1998,
	address = {New York},
	author = {Abraham, Ralph and Mardsen, Jerrold and Ratiu, Tudor},
	edition = {First},
	isbn = {0-387-96790-7},
	publisher = {Springer-Verlag},
	title = {Manifolds, Tensor Analysis, and Applications},
	year = {1998}}

@article{Bar-Shalom1975,
	abstract = {This paper presents a new approach to the problem of tracking when the source of the measurement data is uncertain. It is assumed that one object of interest ('target') is in track and a number of undesired returns are detected and resolved at a certain time in the neighbourhood of the predicted location of the target's return. A suboptimal estimation procedure that takes into account all the measurements that might have originated from the object in track but does not have growing memory and computational requirements is presented. The probability of each return (lying in a certain neighborhood of the predicted return, called 'validation region') being correct is obtained-this is called 'probabilistic data association' (PDA). The undesired returns are assumed uniformly and independently distributed. The estimation is done by using the PDA method with an appropriately modified tracking filter, called PDAF. Since the computational requirements of the PDAF are only slightly higher than those of the standard filter, the method can be useful for real-time systems. Simulation results obtained for tracking an object in a cluttered environment show the PDAF to give significantly better results than the standard filter currently in use for this type of problem. {\textcopyright} 1975.},
	author = {Bar-Shalom, Yaakov and Tse, Edison},
	doi = {10.1016/0005-1098(75)90021-7},
	issn = {00051098},
	journal = {Automatica},
	number = {5},
	pages = {451--460},
	title = {Tracking in a Cluttered Environment with Probabilistic Data Association},
	volume = {11},
	year = {1975},
	bdsk-url-1 = {https://doi.org/10.1016/0005-1098(75)90021-7}}

@article{Bar-Shalom2009,
	author = {Bar-Shalom, Yaakov and Daum, Fred and Huang, Jim},
	doi = {10.1109/MCS.2009.934469},
	issn = {1066033X},
	journal = {IEEE Control Systems},
	number = {6},
	pages = {82--100},
	title = {The Probabilistic Data Association Filter: Estimation in the Presence of Measurement Origin Uncertainty},
	volume = {29},
	year = {2009},
	bdsk-url-1 = {https://doi.org/10.1109/MCS.2009.934469}}

@book{Bar-Shalom2011,
	author = {Bar-Shalom, Yaakov and Willett, Peter and Tian, Xin},
	isbn = {9780964831278},
	publisher = {YBS Publishing},
	title = {Tracking And Data Fusion: A Handbook of Algorithms},
	year = {2011}}

@article{Barfoot2014,
	abstract = {In this paper, we provide specific and practical approaches to associate uncertainty with 4 × 4 transformation matrices, which is a common representation for pose variables in 3-D space. We show constraint-sensitive means of perturbing transformation matrices using their associated exponential-map generators and demonstrate these tools on three simple-yet-important estimation problems: 1) propagating uncertainty through a compound pose change, 2) fusing multiple measurements of a pose (e.g., for use in pose-graph relaxation), and 3) propagating uncertainty on poses (and landmarks) through a nonlinear camera model. The contribution of the paper is the presentation of the theoretical tools, which can be applied in the analysis of many problems involving 3-D pose and point variables. {\textcopyright} 2004-2012 IEEE.},
	author = {Barfoot, Timothy D. and Furgale, Paul T.},
	doi = {10.1109/TRO.2014.2298059},
	issn = {15523098},
	journal = {IEEE Transactions on Robotics},
	keywords = {Exponential maps,homogeneous points,matrix Lie groups,pose uncertainty,transformation matrices},
	number = {3},
	pages = {679--693},
	title = {Associating Uncertainty with Three-Dimensional Poses for Use in Estimation Problems},
	volume = {30},
	year = {2014},
	bdsk-url-1 = {https://doi.org/10.1109/TRO.2014.2298059}}

@article{Bhatia2010,
	abstract = {The nearest neighbor (NN) technique is very simple, highly efficient and effective in the field of pattern recognition, text categorization, object recognition etc. Its simplicity is its main advantage, but the disadvantages can't be ignored even. The memory requirement and computation complexity also matter. Many techniques are developed to overcome these limitations. NN techniques are broadly classified into structure less and structure based techniques. In this paper, we present the survey of such techniques. Weighted kNN, Model based kNN, Condensed NN, Reduced NN, Generalized NN are structure less techniques whereas k-d tree, ball tree, Principal Axis Tree, Nearest Feature Line, Tunable NN, Orthogonal Search Tree are structure based algorithms developed on the basis of kNN. The structure less method overcome memory limitation and structure based techniques reduce the computational complexity.},
	archiveprefix = {arXiv},
	arxivid = {1007.0085},
	author = {Bhatia, Nitin and Vandana},
	eprint = {1007.0085},
	journal = {International Journal of Computer Science and Information Security},
	keywords = {- nearest neighbor,1,condensed nn,divided into two categories,knn,model based knn,nearest neighbor techniques are,nn,reduced nn,structure based,structure less and 2,weighted knn},
	number = {2},
	pages = {302--305},
	title = {Survey of Nearest Neighbor Techniques},
	url = {http://arxiv.org/abs/1007.0085},
	volume = {8},
	year = {2010},
	bdsk-url-1 = {http://arxiv.org/abs/1007.0085}}

@article{Blackman2004,
	abstract = {Multiple hypothesis tracking (MHT) is generally accepted as the preferred method for solving the data association problem in modern multiple target tracking (MTT) systems. This paper summarizes the motivations for MHT, the basic principles behind MHT and the alternative implementations in common use. It discusses the manner in which the multiple data association hypotheses formed by MHT can be combined with multiple filter models, such as used by the interacting multiple model (IMM) method. An overview of the studies that show the advantages of MHT over the conventional single hypothesis approach is given. Important current applications and areas of future research and development for MHT are discussed.},
	author = {Blackman, Samuel S.},
	doi = {10.1109/MAES.2004.1263228},
	issn = {08858985},
	journal = {IEEE Aerospace and Electronic Systems Magazine},
	number = {1 II},
	pages = {5--18},
	publisher = {IEEE},
	title = {Multiple Hypothesis Tracking for Multiple Target Tracking},
	volume = {19},
	year = {2004},
	bdsk-url-1 = {https://doi.org/10.1109/MAES.2004.1263228}}

@article{Bourmaud2013,
	abstract = {In this paper, we generalize the Discrete Extended Kalman Filter (D-EKF) to the case where the state and the observations evolve on Lie group manifolds. We propose a new filter called Discrete Extended Kalman Filter on Lie Groups (D-LG-EKF). It assumes that the posterior distribution of the state is a concentrated Gaussian distribution on Lie groups. Our formalism yields closed-form equations for both nonlinear discrete propagation and update of the distribution parameters based on the likelihood. We also show that the D-LG-EKF reduces to the traditional D-EKF if the state evolves on an Euclidean space. Our approach leads to a systematic methodology for the design of filters, which is illustrated by the application to a camera pose estimation problem. Results show that the D-LG-EKF outperforms both a constrained D-EKF and a D-EKF applied on the Lie algebra of the Lie group. {\textcopyright} 2013 EURASIP.},
	author = {Bourmaud, Guillaume and Megret, Remi and Giremus, Audrey and Berthoumieu, Yannick},
	isbn = {9780992862602},
	issn = {22195491},
	journal = {European Signal Processing Conference},
	keywords = {Discrete time filtering,Extended Kalman Filter,Filtering on manifolds,Lie Groups},
	pages = {1--5},
	title = {Discrete Extended Kalman Filter on Lie groups},
	year = {2013}}

@article{Bourmaud2014,
	abstract = {In this paper we generalize the continuous-discrete extended Kalman filter (CD-EKF) to the case where the state and the observations evolve on connected unimodular matrix Lie groups. We propose a new assumed density filter called continuous-discrete extended Kalman filter on Lie groups (CD-LG-EKF). It is built upon a geometrically meaningful modeling of the concentrated Gaussian distribution on Lie groups. Such a distribution is parametrized by a mean and a covariance matrix defined on the Lie group and in its associated Lie algebra respectively. Our formalism yields tractable equations for both non-linear continuous time propagation and discrete update of the distribution parameters under the assumption that the posterior distribution of the state is a concentrated Gaussian. As a side effect, we contribute to the derivation of the first and second order differential of the matrix Lie group logarithm using left connection. We also show that the CD-LG-EKF reduces to the usual CD-EKF if the state and the observations evolve on Euclidean spaces. Our approach leads to a systematic methodology for the design of filters, which is illustrated by the application to a camera pose filtering problem with observations on Lie group. In this application, the CD-LG-EKF significantly outperforms two constrained non-linear filters (one based on a linearization technique and the other on the unscented transform) applied on the embedding space of the Lie group.},
	annote = {From Duplicate 1 (Continuous-Discrete Extended Kalman Filter on Matrix Lie Groups Using Concentrated Gaussian Distributions - Bourmaud, Guillaume; M{\'{e}}gret, R{\'{e}}mi; Arnaudon, Marc; Giremus, Audrey)

From Duplicate 1 (Continuous-Discrete Extended Kalman Filter on Matrix Lie Groups Using Concentrated Gaussian Distributions - Bourmaud, Guillaume; M{\'{e}}gret, R{\'{e}}mi; Arnaudon, Marc; Giremus, Audrey)

General Notes
- Derives the Lie Bracket

From Duplicate 2 (Continuous-Discrete Extended Kalman Filter on Matrix Lie Groups Using Concentrated Gaussian Distributions - Bourmaud, Guillaume; M{\'{e}}gret, R{\'{e}}mi; Arnaudon, Marc; Giremus, Audrey)

General Notes
- Derives the Lie Bracket},
	author = {Bourmaud, Guillaume and M{\'{e}}gret, R{\'{e}}mi and Arnaudon, Marc and Giremus, Audrey},
	doi = {10.1007/s10851-014-0517-0},
	issn = {09249907},
	journal = {Journal of Mathematical Imaging and Vision},
	keywords = {Camera pose filtering,Extended Kalman filter,Filtering on manifold,Matrix Lie group,Stochastic processes on manifolds},
	number = {1},
	pages = {209--228},
	title = {Continuous-Discrete Extended Kalman Filter on Matrix Lie Groups Using Concentrated Gaussian Distributions},
	volume = {51},
	year = {2014},
	bdsk-url-1 = {https://doi.org/10.1007/s10851-014-0517-0}}

@book{Bullo2005a,
	address = {New York},
	author = {Bullo, Francesco and Lewis, Andrew D. and Goodwine, Bill},
	booktitle = {{IEEE} Transactions on Automatic Control},
	doi = {10.1109/TAC.2005.860277},
	isbn = {0-387-22195-6},
	issn = {15582523},
	number = {12},
	pages = {2111},
	publisher = {Springer Science},
	title = {Geometric Control of Mechanical Systems},
	volume = {50},
	year = {2005},
	bdsk-url-1 = {https://doi.org/10.1109/TAC.2005.860277}}

@book{Carmo1992,
	author = {do. Carmo, Manfredo Perdigao},
	isbn = {0-8176-3490-8},
	publisher = {Birkhauser Boston},
	title = {Riemannian Geometry},
	year = {1992}}

@article{Cesic2016a,
	abstract = {Reliable scene analysis, under varying conditions, is an essential task in nearly any assistance or autonomous system application, and advanced driver assistance systems (ADAS) are no exception. ADAS commonly involve adaptive cruise control, collision avoidance, lane change assistance, traffic sign recognition, and parking assistance---with the ultimate goal of producing a fully autonomous vehicle. The present paper addresses detection and tracking of moving objects within the context of ADAS. We use a multisensor setup consisting of a radar and a stereo camera mounted on top of a vehicle. We propose to model the sensors uncertainty in polar coordinates on Lie Groups and perform the objects state filtering on Lie groups, specifically, on the product of two special Euclidean groups, i.e., SE(2)2. To this end, we derive the designed filter within the framework of the extended Kalman filter on Lie groups. We assert that the proposed approach results with more accurate uncertainty modeling, since used sensors exhibit contrasting measurement uncertainty characteristics and the predicted target motions result with banana-shaped uncertainty contours. We believe that accurate uncertainty modeling is an important ADAS topic, especially when safety applications are concerned. To solve the multitarget tracking problem, we use the joint integrated probabilistic data association filter and present necessary modifications in order to use it on Lie groups. The proposed approach is tested on a real-world dataset collected with the described multisensor setup in urban traffic scenarios.},
	annote = {they initialize a new track on every measurement outside the validation region of existing tracks.},
	author = {Cesic, Josip and Markovic, Ivan and Cvisic, Igor and Petrovic, Ivan},
	doi = {10.1016/j.robot.2016.05.001},
	issn = {09218890},
	journal = {Robotics and Autonomous Systems},
	keywords = {Advanced driver assistance systems,Detection and tracking of moving objects,Joint integrated probabilistic data association,Radar,Stereo camera},
	pages = {338--348},
	publisher = {Elsevier B.V.},
	title = {Radar and Stereo Vision Fusion for Multitarget Tracking on the Special Euclidean Group},
	url = {http://dx.doi.org/10.1016/j.robot.2016.05.001},
	volume = {83},
	year = {2016},
	bdsk-url-1 = {http://dx.doi.org/10.1016/j.robot.2016.05.001}}

@article{Engo2003,
	abstract = {We introduce partitioned Runge--Kutta (PRK) methods as geometric integrators in the Runge--Kutta--Munthe-Kaas (RKMK) method hierarchy.This is done by first noticing that tangent and cotangent bundles are the natural domains for the differential equations to be solved.Next, we equip the (co)tangent bundle of a Lie group with a group structure and treat it as a Lie group.The structure of the differential equations on the (co)tangent-bundle Lie group is such that partitioned versions of the RKMK methods are naturally introduced.Numerical examples are included to illustrate the new},
	author = {Eng{\o}, Kenth},
	journal = {Reports in Informatics},
	keywords = {Partitioned Runge--Kutta method,RKMK method,differential equations on mani- folds,geometric integra-,semidirect product,tangent bundle of Lie group,tion},
	number = {202},
	pages = {21--39},
	title = {Partitioned Runge-Kutta Methods in Lie-Group Setting},
	url = {http://www.ii.uib.no/publikasjoner/texrap/ps/2000-202.ps},
	volume = {43},
	year = {2000},
	bdsk-url-1 = {http://www.ii.uib.no/publikasjoner/texrap/ps/2000-202.ps}}

@article{Gorji2011,
	abstract = {Performance evaluation is one of the most important steps in any target tracking problem. The objective of this paper is to present a brief review of different approaches available for the performance analysis of multiple target tracking algorithms. Metrics are first classified into sensor-dependent and tracker-dependent ones. Then, the tracker-dependent measures are discussed after classifying into two groups named algorithm-free and algorithm dependent measures. For the classification purposes, three different categories of algorithm-free metrics are described. Finally, to demonstrate the application of the metrics in evaluating the performance of different tracking algorithms, a challenging scenario is considered. {\textcopyright} 2011 IEEE.},
	author = {Gorji, A. A. and Tharmarasa, R. and Kirubarajan, T.},
	isbn = {9781457702679},
	journal = {Fusion 2011 - 14th International Conference on Information Fusion},
	keywords = {Metric classification,Performance evaluation,Track to truth association},
	pages = {1560--1567},
	title = {Performance Measures for Multiple Target Tracking Problems},
	year = {2011}}

@book{Hall2003,
	author = {Hall, Brian C.},
	isbn = {0-387-40122-9},
	publisher = {Springer-Verlag New York, Inc},
	title = {Lie Groups, Lie Algebras, and Representations: An Elementary Introduction},
	year = {2003}}

@article{Higham2001,
	abstract = {A practical and accessible introduction to numerical methods for stochastic differential equations is given. The reader is assumed to be familiar with Euler's method for deterministic differential equations and to have at least an intuitive feel for the concept of a random variable; however, no knowledge of advanced probability theory or stochastic processes is assumed. The article is built around 10 MATLAB programs, and the topics covered include stochastic integration, the Euler-Maruyama method, Milstein's method, strong and weak convergence, linear stability, and the stochastic chain rule.},
	author = {Higham, D. J.},
	doi = {10.1137/S0036144500378302},
	issn = {00361445},
	journal = {SIAM Review},
	keywords = {Euler-Maruyama method,MATLAB,Milstein method,Monte Carlo,Stochastic simulation,Strong and weak convergence},
	number = {3},
	pages = {525--546},
	title = {An Algorithmic Introduction to Numerical Simulation of Stochastic Differential Equations},
	volume = {43},
	year = {2001},
	bdsk-url-1 = {https://doi.org/10.1137/S0036144500378302}}

@article{Kim2017,
	abstract = {Researchers in the simultaneous localization and mapping (SLAM) community have taken for granted that uncertainty associated with the robot pose increases until the loop is closed. However, recently identified by [1], the monotonicity of uncertainty during exploration breaks when the robot returns to the initial position. In this paper, we propose a hypothesis that the monotonicity of pose uncertainty is preserved when the uncertainty is propagated on Lie groups rather than on Euclidean vector space. After deriving covariance propagated over Lie groups and Euclidean vector space, respectively, the monotonicity of uncertainty in each case is thoroughly investigated. Experiments with simulated and real-world scenarios on dead-reckoning validate our hypothesis on the monotonicity of uncertainty.},
	author = {Kim, Youngji and Kim, Ayoung},
	doi = {10.1109/IROS.2017.8206181},
	isbn = {9781538626825},
	issn = {21530866},
	journal = {IEEE International Conference on Intelligent Robots and Systems},
	pages = {3425--3432},
	title = {On the Uncertainty Propagation: Why Uncertainty on Lie Groups Preserves Monotonicity?},
	volume = {2017-Septe},
	year = {2017},
	bdsk-url-1 = {https://doi.org/10.1109/IROS.2017.8206181}}

@article{Konstantinova2006,
	abstract = {This paper compares two algorithm for Multiple Target Tracking using Global Nearest Neighbor (GNN) approach: first by the use standard Kalman filter (SKF-GNN) and second by the use Interacting Multiple Model (IMMGNN) in order to derive final tracking estimation. For both algorithms the observations are divided in clusters to reduce computational efforts. Results of simulations by tracking 20 targets simultaneously reveal that in some cases the IMMGNN approach gives better solution then KF-GNN approach. The computational time for assignment problem solution for maneuvering target tracking using both algorithm is studied and results prove that is IMMGNN suitable for real time implementations. .},
	author = {Konstantinova, P. and Udvarev, A. and Semerdjiev, T.},
	issn = {0042-8469},
	journal = {International Conference on Computer Systems and Technologies},
	keywords = {assignment problem,data association,global nearest neighbor,gnn,mtt,multiple target tracking,munkres algorithm,snn,suboptimal nearest neighbor},
	title = {A Study of a Target Tracking Method Using Global Nearest Neighbor Algorithm},
	year = {2003}}

@book{LeeJohn2013,
	author = {Lee, John M.},
	isbn = {978-1-4419-9981-8},
	publisher = {Springer Science},
	title = {An Introduction to Smooth Manifolds},
	year = {2013}}

@article{LeeT2010,
	abstract = {This paper provides new results for the tracking control of a quadrotor unmanned aerial vehicle (UAV). The UAV has four input degrees of freedom, namely the magnitudes of the four rotor thrusts, that are used to control the six translational and rotational degrees of freedom, and to achieve asymptotic tracking of four outputs, namely, three position variables for the vehicle center of mass and the direction of one vehicle body-fixed axis. A globally defined model of the quadrotor UAV rigid body dynamics is introduced as a basis for the analysis. A nonlinear tracking controller is developed on the special Euclidean group SE(3) and it is shown to have desirable closed loop properties that are almost global. Several numerical examples, including an example in which the quadrotor recovers from being initially upside down, illustrate the versatility of the controller.},
	author = {Lee, Taeyoung and Leok, Melvin and Mcclamroch, N Harris},
	doi = {10.1109/CDC.2010.5717652},
	journal = {49th IEEE Conference on Decision and Control (CDC)},
	title = {Geometric Tracking Control of a Quadrotor UAV on SE(3)},
	year = {2010},
	bdsk-url-1 = {https://doi.org/10.1109/CDC.2010.5717652}}

@article{Li2002a,
	abstract = {It is essential in target tracking to determine whether a track is really a good estimated trajectory of a target. Due to many uncertainties involved in tracking, however, this is a difficult task. This paper presents a systematic approach for rejecting or accepting a track. It is based on the sequential probability ratio test (SPRT), which is optimal in the sense of having the quickest decision among all tests subject to the same or lower decision error rates. By introducing a concept of measurement support of a track an SPRT-type test in an explicit form is developed based on several newly obtained results. Simulated numerical examples are given, which demonstrate the performance of the proposed test. {\textcopyright} 2002 Int. Soc. of Information Fusion.},
	author = {Li, X. Rong and Li, Ning and Jilkov, Vesselin P.},
	doi = {10.1109/ICIF.2002.1020914},
	journal = {Proceedings of the 5th International Conference on Information Fusion, FUSION 2002},
	keywords = {SPRT,Target Tracking,Track Confirmation,Track Rejection},
	pages = {951--958},
	title = {SPRT-Based Track Confirmation and Rejection},
	volume = {2},
	year = {2002},
	bdsk-url-1 = {https://doi.org/10.1109/ICIF.2002.1020914}}

@article{Long2013,
	abstract = {Distributions in position and orientation are central to many problems in robot localization. To increase efficiency, a majority of algorithms for planar mobile robots use Gaussians defined on positional Cartesian coordinates and heading. However, the distribution of poses for a noisy two-wheeled robot moving in the plane has been observed by many to be a "bananashaped" distribution, which is clearly not Gaussian/normal in these coordinates. As uncertainty increases, many localization algorithms therefore become "inconsistent" due to the normality assumption breaking down. We observe that this is because the combination of Cartesian coordinates and heading is not the most appropriate set of coordinates to use, and that the banana distribution can be described in closed form as a Gaussian in an alternative set of coordinates via the so-called exponential map. With this formulation, we can derive closed-form expressions for propagating the mean and covariance of the Gaussian in these exponential coordinates for a differential-drive car moving along a trajectory constructed from sections of straight segments and arcs of constant curvature. In addition, we detail how to fuse two or more Gaussians in exponential coordinates together with given relative pose measurements between robots moving in formation. These propagation and fusion formulas utilized here reduce uncertainty in localization better than when using traditional methods. We demonstrate with numerical examples dramatic improvements in the estimated pose of three robots moving in formation when compared to classical Cartesiancoordinate-based Gaussian fusion methods.},
	author = {Long, Andrew W. and Wolfe, C. Kevin and Mashner, Michael J. and Chirikjian, Gregory S.},
	doi = {10.15607/rss.2012.viii.034},
	isbn = {9780262519687},
	issn = {2330765X},
	journal = {Robotics: Science and Systems},
	pages = {265--272},
	title = {The Banana Distribution is Gaussian: A Localization Study with Exponential Coordinates},
	volume = {8},
	year = {2013},
	bdsk-url-1 = {https://doi.org/10.15607/rss.2012.viii.034}}

@article{Mangelson2019,
	abstract = {An accurate characterization of pose uncertainty is essential for safe autonomous navigation. Early pose uncertainty characterization methods proposed by Smith, Self, and Cheeseman (SCC), used coordinate-based first-order methods to propagate uncertainty through non-linear functions such as pose composition (head-to-tail), pose inversion, and relative pose extraction (tail-to-tail). Characterizing uncertainty in the Lie Algebra of the special Euclidean group results in better uncertainty estimates. However, existing approaches assume that individual poses are independent. Since factors in a pose graph induce correlation, this independence assumption is usually not reflected in reality. In addition, prior work has focused primarily on the pose composition operation. This paper develops a framework for modeling the uncertainty of jointly distributed poses and describes how to perform the equivalent of the SSC pose operations while characterizing uncertainty in the Lie Algebra. Evaluation on simulated and open-source datasets shows that the proposed methods result in more accurate uncertainty estimates. An accompanying C++ library implementation is also released.},
	archiveprefix = {arXiv},
	arxivid = {1906.07795},
	author = {Mangelson, Joshua G. and Ghaffari, Maani and Vasudevan, Ram and Eustice, Ryan M.},
	doi = {10.1109/tro.2020.2994457},
	eprint = {1906.07795},
	issn = {1552-3098},
	journal = {arXiv},
	keywords = {Lie algebra,Lie group,Matrix groups,Mobile robotics,Rigid body transformation,SLAM,State estimation,Uncertainty propagation},
	pages = {1--19},
	title = {Characterizing the Uncertainty of Jointly Distributed Poses in the Lie Algebra},
	year = {2019},
	bdsk-url-1 = {https://doi.org/10.1109/tro.2020.2994457}}

@article{Musicki1994,
	abstract = {This paper presents an integrated probabilistic data association algorithm which provides recursive formulas for both data association and track quality (probability of track existence), allowing track initiation and track termination to be fully integrated into the association and smoothing algorithm. Integrated probabilistic data association is of similar computational complexity to probabilistic data association and as demonstrated by simulation, achieves comparable performance to the more computationally expensive interactive multiple model probabilistic data association algorithm which also integrates initiation and tracking},
	author = {Mu{\v{s}}icki, Darko and Evans, R. and Stankovic, S.},
	journal = {{IEEE} Transactions on Automatic Control},
	number = {6},
	pages = {1237--1241},
	title = {Integrated Probabilistic Data Association},
	volume = {39},
	year = {1994}}

@article{Park1995,
	abstract = {In this article we present a unified geometric treatment of robot dynamics. Using standard ideas from Lie groups and Rieman nian geometry, we formulate the equations of motion for an open chain manipulator both recursively and in closed form. The recursive formulation leads to an O(n) algorithm that ex presses the dynamics entirely in terms of coordinate-free Lie algebraic operations. The Lagrangian formulation also ex presses the dynamics in terms of these Lie algebraic operations and leads to a particularly simple set of closed-form equations, in which the kinematic and inertial parameters appear explic itly and independently of each other. The geometric approach permits a high-level, coordinate-free view of robot dynamics that shows explicitly some of the connections with the larger body of work in mathematics and physics. At the same time the resulting equations are shown to be computationally ef fective and easily differentiated and factored with respect to any of the robot parameters. This latter feature makes the ge ometric formulation attractive for applications such as robot design and calibration, motion optimization, and optimal control, where analytic gradients involving the dynamics are required. {\textcopyright} 1995, Sage Publications. All rights reserved.},
	author = {Park, F. C. and Bobrow, J. E. and Ploen, S. R.},
	doi = {10.1177/027836499501400606},
	issn = {17413176},
	journal = {The International Journal of Robotics Research},
	number = {6},
	pages = {609--618},
	title = {A Lie Group Formulation of Robot Dynamics},
	volume = {14},
	year = {1995},
	bdsk-url-1 = {https://doi.org/10.1177/027836499501400606}}

@article{Sjoberg2019,
	abstract = {An extended Kalman filter (EKF) for systems with configuration given by matrix Lie groups is presented. The error dynamics are given by the logarithm of the Lie group and are based on the kinematic differential equation of the logarithm, which is given in terms of the Jacobian of the Lie group. The probability distribution is also described in terms of the logarithm as a concentrated Gaussian distribution that is a tightly focused distribution around the identity of the Lie group. The filter is applied to estimation on SO(3) a case where a stereo camera setup tracks a crane wire with a payload. The wire, which is under tension and forms a line is monitored by two 2D-cameras, and a line detector is used to obtain a description of how the wire is projected onto each image plane. A model of a spherical pendulum is applied and the estimator is validated by applying it on simulated data, as well as experimental data.},
	author = {Sj{\o}berg, A. M. and Egeland, O.},
	doi = {10.4173/mic.2019.2.3},
	issn = {18901328},
	journal = {Modeling, Identification and Control},
	keywords = {Extended Kalman filter,Lie groups,Line reconstruction,Pendulum,Stereo vision},
	number = {2},
	pages = {109--124},
	title = {An EKF for Lie Groups with Application to Crane Load Dynamics},
	volume = {40},
	year = {2019},
	bdsk-url-1 = {https://doi.org/10.4173/mic.2019.2.3}}

@article{Sola2017,
	abstract = {This article is an exhaustive revision of concepts and formulas related to quaternions and rotations in 3D space, and their proper use in estimation engines such as the error-state Kalman filter. The paper includes an in-depth study of the rotation group and its Lie structure, with formulations using both quaternions and rotation matrices. It makes special attention in the definition of rotation perturbations, derivatives and integrals. It provides numerous intuitions and geometrical interpretations to help the reader grasp the inner mechanisms of 3D rotation. The whole material is used to devise precise formulations for error-state Kalman filters suited for real applications using integration of signals from an inertial measurement unit (IMU).},
	archiveprefix = {arXiv},
	arxivid = {1711.02508},
	author = {Sol{\`{a}}, Joan},
	eprint = {1711.02508},
	journal = {CoRR},
	title = {Quaternion Kinematics for the Error-State Kalman Filter},
	url = {http://arxiv.org/abs/1711.02508},
	volume = {abs/1711.0},
	year = {2017},
	bdsk-url-1 = {http://arxiv.org/abs/1711.02508}}

@misc{Sola2018,
	abstract = {A Lie group is an old mathematical abstract object dating back to the XIX century, when mathematician Sophus Lie laid the foundations of the theory of continuous transformation groups. As it often happens, its usage has spread over diverse areas of science and technology many years later. In robotics, we are recently experiencing an important trend in its usage, at least in the fields of estimation, and particularly in motion estimation for navigation. Yet for a vast majority of roboticians, Lie groups are highly abstract constructions and therefore difficult to understand and to use. This may be due to the fact that most of the literature on Lie theory is written by and for mathematicians and physicists, who might be more used than us to the deep abstractions this theory deals with. In estimation for robotics it is often not necessary to exploit the full capacity of the theory, and therefore an effort of selection of materials is required. In this paper, we will walk through the most basic principles of the Lie theory, with the aim of conveying clear and useful ideas, and leave a significant corpus of the Lie theory behind. Even with this mutilation, the material included here has proven to be extremely useful in modern estimation algorithms for robotics, especially in the fields of SLAM, visual odometry, and the like. Alongside this micro Lie theory, we provide a chapter with a few application examples, and a vast reference of formulas for the major Lie groups used in robotics, including most jacobian matrices and the way to easily manipulate them. We also present a new C++ template-only library implementing all the functionality described here.},
	archiveprefix = {arXiv},
	arxivid = {1812.01537},
	author = {Sol{\`{a}}, Joan and Deray, Jeremie and Atchuthan, Dinesh},
	eprint = {1812.01537},
	pages = {1--17},
	title = {A Micro Lie theory for State Estimation in Robotics},
	url = {http://arxiv.org/abs/1812.01537},
	year = {2018},
	bdsk-url-1 = {http://arxiv.org/abs/1812.01537}}

@book{Stillwell2008,
	author = {Stillwell, John},
	publisher = {Springer Science},
	title = {Naive Lie Theory},
	year = {2008}}

@book{Wald1947,
	author = {Wald, Abraham},
	publisher = {New York: John Wiley {\&} Sons Inc},
	title = {Sequential Analysis},
	year = {1947}}

@article{Wang2006,
	abstract = {Error propagation on the Euclidean motion group arises in a number of areas such as errors that accumulate from the base to the distal end of manipulators. We address error propagation in rigid-body poses in a coordinate-free way, and explain how this differs from other approaches proposed in the literature. In this paper, we show that errors propagate by convolution on the Euclidean motion group, SE(3). When local errors are small, they can be described well as distributions on the Lie algebra se(3). We show how the concept of a highly concentrated Gaussian distribution on SE(3) is equivalent to one on se(3). We also develop closure relations for these distributions under convolution on SE(3). Numerical examples illustrate how convolution is a valuable tool for computing the propagation of both small and large errors. {\textcopyright} 2006 IEEE.},
	author = {Wang, Yunfeng and Chirikjian, Gregory S.},
	doi = {10.1109/TRO.2006.878978},
	issn = {15523098},
	journal = {IEEE Transactions on Robotics},
	keywords = {Error propagation,Euclidean group,Manipulator kinematics,Spatial uncertainty},
	number = {4},
	pages = {591--602},
	title = {Error Propagation on the Euclidean Group with Applications to Manipulator Kinematics},
	volume = {22},
	year = {2006},
	bdsk-url-1 = {https://doi.org/10.1109/TRO.2006.878978}}

@book{Bar-Shalom1988,
author = {Bar-Shalom, Yaakov and E., Fortmaan Thomas},
publisher = {Academin Press, Inc.},
title = {Tracking and Data Association},
year = {1988}
}

@book{Barfoot2019,
author = {Barfoot, Timothy},
publisher = {Cambridge University Press},
title = {State Estimation For Robotics},
year = {2019}
}

\end{document}